\theoremstyle{plain}
\newtheorem{theorem}{Theorem}[section]
\newtheorem{lemma}[theorem]{Lemma}
\theoremstyle{definition}
\newtheorem{definition}[theorem]{Definition}
\theoremstyle{remark}
\begin{document}

\twocolumn[
\icmltitle{Near Optimal Non-asymptotic Sample Complexity of 1-Identification}




\begin{icmlauthorlist}
\icmlauthor{Zitian Li}{yyy}
\icmlauthor{Wang Chi Cheung}{yyy}
\end{icmlauthorlist}

\icmlaffiliation{yyy}{Department of Industrial Systems Engineering \& Management, National University of Singapore, Singapore}

\icmlcorrespondingauthor{Zitian Li}{lizitian@u.nus.edu}
\icmlcorrespondingauthor{Wang Chi Cheung}{isecwc@nus.edu.sg}

\icmlkeywords{Machine Learning, ICML}

\vskip 0.3in
]



\printAffiliationsAndNotice{}  

\begin{abstract}
Motivated by an open direction in existing literature, we study the 1-identification problem, a fundamental multi-armed bandit formulation on pure exploration. The goal is to determine whether there exists an arm whose mean reward is at least a known threshold $\mu_0$, or to output \textsf{None} if it believes such an arm does not exist. The agent needs to guarantee its output is correct with probability at least $1-\delta$. 
\cite{degenne2019pure} has established the asymptotically tight sample complexity for the 1-identification problem, but they commented that the non-asymptotic analysis remains unclear. 
We design a new algorithm Sequential-Exploration-Exploitation (SEE), and conduct theoretical analysis from the non-asymptotic perspective. Novel to the literature, we achieve near optimality, in the sense of matching upper and lower bounds on the pulling complexity. The gap between the upper and lower bounds is up to a polynomial logarithmic factor. The numerical result also indicates the effectiveness of our algorithm, compared to existing benchmarks.


\end{abstract}

\section{Introduction}
\label{sec:Introduction}
The 1-identification problem is a fundamental multi-armed bandit formulation on pure exploration. The goal of the learning agent is to identify an arm whose mean reward is at least a known threshold $\mu_0$ if such an arm exists, and otherwise to return \textsf{None} if no such arm exists. 
We study the fixed confidence setting, where the agent aims to return the correct answer with probability at least $1-\delta$ for a given tolerance parameter $\delta \in (0,1)$, 
while ensuring the number of arm pulls, aka sample complexity, as small as possible.

The 1-identification problem models numerous real-world problems. For example, consider a firm experimenting multiple new campaigns, and seeks to know if any new campaign is more effective than the existing one. The firm may have a long history of applying the existing campaign, with sufficient data to determine the impact and reward by utilizing this campaign. 
Similar scenario applies to other industries such as service operations, pharmaceutical tests, simulation which involves comparisons between a benchmark and alternatives, in terms of profit, welfare or other metrics.

\textbf{Summary of Contributions. } We make two contributions to the 1-identification problem. 
Firstly, existing research works only guarantee asymptotic optimality \cite{degenne2019pure,jourdan2023anytime}, or non-asymptotic near optimality in the case where all the mean rewards are smaller than the threshold (\cite{jourdan2023anytime}, or applying a Best Arm Identification algorithm).
Novel to the literature, our proposed algorithm achieves the non-asymptotic optimality in the sample complexity both the positive case when there is a qualified arm, i.e. an arm with mean reward at least the threshold, and the negative case when there is no qualified arm. We prove matching upper and lower sample complexity bounds, 
and the gap between these upper and lower bounds is up to a polynomial logarithm factor.

Secondly, we conduct numeric experiments to compare the performance of 1-identification algorithms. The numeric results suggest the excellency of our proposed SEE algorithm and also the weakness of some benchmark algorithms.


\textbf{Notation.} For an integer $K>0$, denote $[K] = \{1, \ldots, K\}$. For $\mu\in [0, 1]$, we denote $N(\mu, \sigma^2)$ as the normal distribution with mean $\mu$ and variance $\sigma^2$. Denote 
$\mathbb{E}$ and $\mathbb{E}_{\nu,\textsf{alg}}$ as the expectation operator while the second one is
to highlight that the expectation is determined by both the instance $\nu$ and algorithm \textsf{alg}.

\section{Model}
\label{sec:Model} 
An instance of 1-identification is specified by the tuple 
$Q = ([K], \nu= \{\nu_a\}_{a\in [K]},\mu_0, \delta)$. The set $[K]$ represents the collection of $K$ arms. For each $a\in [K]$, $\nu_a$ is the probability distribution of the reward received by pulling arm $a$ once. The probability distribution $\nu_a$, and in particular its mean $\mu_a:=\mathbb{E}_{R\sim \nu_a}R$, are not known to the agent. For each arm $a\in [K]$, which has random reward $R_a\sim \nu_a$, we assume that the random noise $R_a - \mu_a$ is 1-sub-Gaussian. 
The parameter $\mu_0$ is a known threshold. The agent's main goal is to ascertain whether there is an arm $a\in[K]$ such that $\mu_a \geq \mu_0$. The parameter $\delta\in (0, 1)$ is the tolerant probability of a wrong prediction. Unless otherwise stated, we always assume $\mu_1\geq\mu_2\geq\cdots\geq \mu_K$, but the order remains unknown to the agent.


\textbf{Dynamics. } 
The agent's pulling strategy $(\pi, \tau, \hat{a})$ is parametrized by a sampling rule $\pi=\{\pi_t\}_{t=1}^{\infty}$, a stopping time $\tau$ and a recommendation rule $\hat{a}\in [K]\cup\{\textsf{None}\}$. When a pulling strategy is applied on a 1-identification problem instance $\nu$, the strategy generates a history $A_1,X_1,\cdots, A_{\tau}, X_{\tau}$. Action $A_t\in [K]$ is chosen contingent upon the history $H(t-1) = \{(A_s, X_s)\}^{t-1}_{s=1}$, via the function $\pi_t(H(t-1))$. In addition, we have $X_t\sim \nu_{A_t}$. The agent stops pulling at the end of time step $\tau$, where $\tau$ is a stopping time\footnote{For any $t$, the event $\{\tau = t\}$ is $\sigma(H(t))$-measurable} with respect to the filtration $\{\sigma(H(t))\}^\infty_{t=1}$. Upon stopping, the agent outputs $\hat{a}\in [K]\cup\{\textsf{None}\} $ to be the answer, using the information $H(\tau)$. Outputting $\hat{a} \in [K]$ means that the agent concludes with arm $\hat{a}$ satisfying $\mu_{\hat{a}}\geq \mu_0$, while outputting $\hat{a}= \textsf{None}$ means that the agent concludes with no arm having mean reward $\geq \mu_0$. We allow the possibility of non-termination $\tau=\infty$, in which case there is no recommendation $\hat{a}$.



To facilitate our discussions, we introduce the definitions of positive and negative instances.
\begin{definition}[Positive and Negative Instances]
    Denote $\nu$ as a distribution vector equipped with a mean reward vector $\{\mu_a\}_{a=1}^K$. We call $\nu$ a positive instance, if $\mu_1 > \mu_0$. And we call $\nu$ a negative instance, if $\mu_1 < \mu_0$.
\end{definition}
Correspondingly, we denote $\mathcal{S}^{\text{pos}}=\{\nu: \mu_1 > \mu_0\}$ and $\mathcal{S}^{\text{neg}}=\{\nu: \mu_1 < \mu_0\}$. For $\Delta>0$, we further define $\mathcal{S}^{\text{pos}}_{\Delta}=\{\nu: \mu_1 - \mu_0\geq \Delta\}$ and $\mathcal{S}^{\text{neg}}_{\Delta}=\{\nu: \mu_0-\mu_1 \geq \Delta\}$. It is clear that $\mathcal{S}^{\text{pos}}=\cup_{\Delta > 0}\mathcal{S}^{\text{pos}}_{\Delta}$ and $\mathcal{S}^{\text{neg}}=\cup_{\Delta > 0}\mathcal{S}^{\text{neg}}_{\Delta}$. In this paper, we 
assume that the underlying instance belongs to $\mathcal{S}^{\text{pos}}\cup\mathcal{S}^{\text{neg}}$. 
For an instance $\nu\in \mathcal{S}^{\text{pos}}\cup\mathcal{S}^{\text{neg}}$, we define $i^*(\nu)$ as the set of correct answers. For $\nu\in \mathcal{S}^{\text{pos}}$, we have $i^*(\nu)=\{a: \mu_a \geq \mu_0\}$, while for instance $\nu\in \mathcal{S}^{\text{neg}}$, we have $i^*(\nu)=\{\textsf{None}\}$. 
We mainly focus on analyzing PAC algorithms, with the following definitions.
\begin{definition}
    A pulling strategy is $\delta$-PAC, if for any $\delta\in (0, 1), \nu\in \mathcal{S}^{\text{pos}}\cup\mathcal{S}^{\text{neg}}$, it satisfies $\Pr_{\nu}(\tau<+\infty, \hat{a}\in i^*(\nu))>1-\delta$.
\end{definition}
\begin{definition}
    A pulling strategy is $(\Delta,\delta)$-PAC, if it is $\delta$-PAC, and for any $\Delta,\delta>0$, we have $\sup_{\nu\in \mathcal{S}^{\text{pos}}_{\Delta}\cup \mathcal{S}^{\text{neg}}_{\Delta}}\mathbb{E}_{\nu}\tau < +\infty$.
\end{definition}
Clearly, if a 1-identification pulling strategy is $(\Delta,\delta)$-PAC, it is also $\delta$-PAC.

\textbf{Objective.} The agent aims to design a $(\Delta,\delta)$-PAC pulling strategy $(\pi, \tau, \hat{a})$ that minimizes the \textit{sampling complexity} $\mathbb{E}[\tau]$.

\begin{table*}[ht]
\caption{Comparison of bounds. "pos" and "neg" in the second column correspond to $\mu_1>\mu_0$ and $\mu_1<\mu_0$ respectively. The definitions of different $H$'s are at the equation (\ref{eqn:Definition-of-H}). We determine whether a sampling complexity upper bound is nearly optimal by comparing with the lower bound. An upper bound is nearly optimal iff the gap is up to a polynomial logarithm factor.
}
\label{table:Comparison-of-bounds}
\vskip 0.15in
\begin{center}
\begin{small}
\begin{tabular}{llccl}
\toprule
Algorithm & Bound & \multicolumn{1}{p{1.5cm}}{Nearly Opt\newline in Positive} & \multicolumn{1}{p{1.5cm}}{Nearly Opt\newline in Negative} & Comment \\
\midrule
S-TaS & $\lim\limits_{\delta\rightarrow0}\frac{\mathbb{E}\tau}{\log\frac{1}{\delta}}=\begin{cases}H& \text{pos}\\ H_1^{\text{neg}} & \text{neg}\end{cases}$ & $\surd$ & $\surd$ & Asymptotic\\
HDoC & $\mathbb{E}\tau\leq\begin{cases}O(H\log\frac{K}{\delta}+H_1\log\log\frac{1}{\delta}+\frac{K}{\epsilon^2}) & \text{pos}\\ O(H_1^{\text{neg}}\log\frac{K}{\delta}+\frac{K}{\epsilon^2}) & \text{neg}\end{cases}$ & $\times$ & $\times$ & \multicolumn{1}{p{3cm}}{\centering
$\begin{array}{rl}
\epsilon=& \min\Big\{\min\limits_{a\in[K]}\Delta_{0, a}, \\
& \min\limits_{a\in[K-1]}\frac{\Delta_{a, a+1}}{2}\Big\}
\end{array}$}\\
APGAI & $\mathbb{E}\tau\leq\begin{cases}O(H_0(\log\frac{K}{\delta})) & \text{pos}\\ O(H_1^{\text{neg}}(\log\frac{K}{\delta})) & \text{neg}\end{cases}$ & $\times$ & $\surd$ & \multicolumn{1}{p{3cm}}{Asymptotic Optimal\newline in the case neg}\\
\multicolumn{1}{p{2cm}}{SEE\newline (This work)} & $\mathbb{E}\tau\leq\begin{cases}O(H\log\frac{1}{\delta}) + O(H^{\text{pos}}_1\log \frac{K}{\Delta_{0,1}}) & \text{pos}\\ O(H_1^{\text{neg}}(\log\frac{1}{\delta}+\log H_1^{\text{neg}})) & \text{neg}\end{cases}$ & $\surd$ & $\surd$ & \multicolumn{1}{p{3cm}}{Even the $O(\log \frac{1}{\Delta_{0,1}})$\newline matches lower bound\newline in some cases}\\
\midrule
\multicolumn{1}{p{2cm}}{Lower Bound\newline (This work)} & $\mathbb{E}\tau\geq\begin{cases}\Omega(H\log\frac{1}{\delta}+\frac{1}{m}H^{\text{low}}_1-\frac{1}{\Delta^2_{1,m+1}}) & \text{pos} \\ \Omega(H_1^{\text{neg}}\log\frac{1}{\delta}) & \text{neg}\end{cases}$ & NA & NA & \multicolumn{1}{p{3cm}}{In pos, $m\in [K]$ satisfies $\mu_m\geq \mu_0 > \mu_{m+1}$}\\
\bottomrule
\end{tabular}
\end{small}
\end{center}
\vskip -0.1in
\end{table*}

\section{Literature Review}
\label{sec:lit-review}
We review existing research works on the 1-identification problem. To aid our discussions, we define the following notations for describing bounds on sampling complexity. 
We define $\Delta_{i,j}=|\mu_i-\mu_j|$ for all $i,j\in[K]\cup\{0\}$ and
\begin{align}
    \label{eqn:Definition-of-H}
    \begin{split}
        H_1^{\text{neg}}= & \sum_{a=1}^K\frac{2}{\Delta^2_{0, a}},H^{\text{low}}_1=\sum_{a:\mu_a<\mu_0}\frac{2}{\Delta_{1,a}^2},\\
        H^{\text{pos}}_1=&\sum_{a=1}^K\frac{2}{\max\{\Delta^2_{0,a},\Delta^2_{1,a}\}}, H = \frac{2}{\Delta^2_{0, 1}}\\
        H_1 = &\sum_{a=2}^K\frac{2}{\Delta^2_{1, a}}, H_0=\sum_{a:\mu_a\geq\mu_0}\frac{2}{\Delta^2_{0, a}},\\ H^{\text{BAI}}_1=&\frac{2}{\Delta_{0,1}^2}+\sum_{a=2}^K\frac{2}{\Delta^2_{1, a}}.
    \end{split}
\end{align}
Since $\Delta_{1,1}=0$, we equivalently have $H^{\text{pos}}_1=\frac{2}{\Delta^2_{0,1}}+\sum_{a=2}^K\frac{2}{\max\{\Delta^2_{0,a},\Delta^2_{1,a}\}}$. Table \ref{table:Comparison-of-bounds} summarizes the existing algorithms and their performance guarantees with notations in (\ref{eqn:Definition-of-H}). Details are as follows.

The 1-identification problem is a pure exploration problem with possibly multiple answers, which has been studied in \cite{degenne2019pure}. They consider 
the case when $\nu_a$ belongs to the one-parameter one-dimensional exponential family for each $a\in [K]$, and propose the Sticky-Track-and-Stop (S-TaS) algorithm. 
S-TaS satisfies $\lim\sup_{\delta\rightarrow 0} \frac{\mathbb{E}[\tau]}{\log(1 / \delta)}=T^*(\mu)$, which depends not only on $\{\mu_a\}_{\{0\}\cup [K]}$ but also $\{\nu_a\}_{a\in [K]}$. With their lower bound $\lim\inf_{\delta\rightarrow 0}\frac{\mathbb{E}[\tau]}{\log(1/\delta)}\geq T^*(\mu)$ on any $\delta$-PAC algorithm, they conclude S-TaS achieves asymptotic optimality. We display the bound $T^*(\mu)$ in the special case when $\nu_a = N(\mu_a, 1)$ for each $a\in {\cal A}$, where $T^*(\mu)$ specializes to the bounds in the first row in Table \ref{table:Comparison-of-bounds}. Nevertheless, the non-asymptotic sample complexity of the 1-identification problem remains a mystery in \cite{degenne2019pure}, who comment that ``Both lower bounds and upper bounds in this paper are asymptotic\ldots \textbf{A finite time analysis} with reasonably small $o(\log\frac{1}{\delta})$ terms for an optimal algorithm \textbf{is desirable}.'' 

\cite{kano2017Good} study the Good Arm Identification problem, where the agent aims to output all arms whose mean reward is greater than $\mu_0$. This objective is different from our objective, which requires returning only a qualified arm if exists, but the expected stopping time of their first output is exactly the $\mathbb{E}\tau$ in our formulation. They propose algorithm HDoC, whose sub-optimality of their sampling complexity upper bound comes from two perspectives. Firstly, the term $\frac{1}{\epsilon^2}$ implies that the upper bound is vacuous (equal to infinity) when there exists $a\in [K]$ such that $\mu_a = \mu_0$, even in the case of $\mu_1 > \mu_0$. Secondly, when $\mu_1>\mu_0$, 
\cite{kano2017Good}'s bound includes $H_1\log\log (1/\delta)$, and $H_1$ grow linearly with $K$. 
\cite{kano2017Good} comment that ``Therefore, in the case that $\ldots$ $K=\Omega(\frac{\log\frac{1}{\delta}}{\log\log\frac{1}{\delta}})$\ldots  \textbf{there still exists a gap} between the lower bound in \ldots and the upper bound in \ldots''.  \cite{tsai2024lil} incorporate the LIL concentration event (Lemma 3 in \cite{jamieson2014lil}) into the algorithm HDoC with an additional warm-up stage. But the proposed algorithm lilHDoC still suffers from the same sub-optimality as HDoC.

\cite{jourdan2023anytime} mainly focus on designing anytime algorithms on Good Arm Identification. They propose pulling rule APGAI with the stopping rule GLR. Their algorithm achieves nearly optimal asymptotic sample complexity bounds on negative instances, but is still sub-optimal on positive instances. The stopping rule GLR can be combined with other pulling rules to guarantee the $\delta$-PAC property, but in general there is no non-trivial upper bound on $\mathbb{E}\tau$. Finally, we remark that the Good Arm Identification formulation assumes $\mu_a\neq\mu_0$ for all $a\in[K]$, and 
their upper bounds are vacuous if there exists an arm $a$ whose mean reward is exactly $\mu_0$. 



Table \ref{table:Comparison-of-bounds} summarizes the comparison, with imprecision on HDoC and APGAI. Details are in appendix \ref{sec:comments-on-table-of-upper-bound}. Among them, S-TaS is $(\Delta,\delta)$-PAC, but the non-asymptotic pulling complexity remains unclear. Appendix \ref{sec:Benchmark-are-not-Delta_delta_PAC} proves algorithm HDoC, lilHDoC, APGAI are not $(\Delta,\delta)$-PAC, and Appendix \ref{sec:Correctness-of-APGAI-algorithm} provides additional discussion on APGAI. 

Besides the above work, there are other research works related to 1-identification, despite not directly solving the 1-identification. \cite{kaufmann2018sequential} provide asymptotic sample complexity bounds on classifying positive and negative instances, but their algorithm does not output a qualified arm $\hat{a}\in [K]$ on a positive instance. \cite{katz2020true} propose the idea of ``Bracketing'' to solve the 1-identification problem. Their algorithm only applies for positive instances but not the negative instances, since the algorithm is not required certifying if there is no arm with mean reward at least $\mu_0$. More discussion on \cite{katz2020true} is provided in Appendix \ref{sec:Additional-Literature-Review}.

The 1-identification problem can be solved by considering a Best Arm Identification(BAI) problem, which has been well studied. We can take arm 0 as a virtual arm which always returns a deterministic reward with value $\mu_0$. Applying an algorithm for the fixed confidence setting \cite{even2002pac,gabillon2012best,jamieson2013finding, kalyanakrishnan2012pac,karnin2013almost,jamieson2014best}, the agent achieves a non-asymptotic high probability upper bound. For example, \cite{jamieson2014best} guarantee $\Pr_{\nu}(\tau < O(H^{\text{BAI}}_1(\log\frac{1}{\delta}+\log H^{\text{BAI}}_1) )) > 1-\delta$ for $\nu\in\mathcal{S}^{\text{pos}}$, $\Pr_{\nu}(\tau < O(H^{\text{neg}}_1(\log\frac{1}{\delta}+\log H^{\text{neg}}_1) )) > 1-\delta$ for $\nu\in\mathcal{S}^{\text{neg}}$. And the logarithmic term $H^{\text{BAI}}_1, H^{\text{neg}}_1$ can be improved. Nevertheless, the high probability upper bound cannot imply $\mathbb{E}\tau<+\infty$. To address this issue, \cite{chen2015optimal,chen2017nearly} develop the Parallel Simulation Algorithm, which takes a BAI algorithm as an Oracle and converts this algorithm into a new algorithm. The new algorithm is still $\delta$-PAC, and guarantees not only the above high probability upper bound, but also finite upper bound on $\mathbb{E}\tau$.
\cite{garivier2016optimal,  kaufmann2016complexity, garivier2019explore} provide asymptotic upper bounds relating to $\mathbb{E}\tau$, complemented with matching lower bounds.  Though there are many existing works on the BAI problem, the comparison of asymptotic results in these two formulations \cite{garivier2016optimal, degenne2019pure} suggests that it is inefficient to solve the 1-identification by the corresponding BAI problem.

Our work is related to other topics, such as BAI in the fixed budget setting and the $\epsilon$-Good Arm Identification problem. We provide additional discussions in Appendix \ref{sec:Additional-Literature-Review}.


\section{Algorithm}
\label{sec:Algorithm}

\subsection{An Informal Algorithm}
\label{sec:Informal-SEE}
\begin{algorithm}
    \caption{SEE(Informal)}
    \label{alg:SEE-Informal}
    \begin{algorithmic}[1]
        \STATE {\bfseries Input:} Action set $[K]$, threshold $\mu_0$, tolerance level $\delta$, $C>1$.
        \STATE {\bfseries Tune} $\{\delta_k, T_k^{\text{et}}, T_k^{\text{ee}} \}_{k=1}^{+\infty}$.        
        \FOR{Phase $k=1,2,\cdots$}
            \STATE (Exploration) Run algorithm LUCB\_G with tolerance level $\delta_k$ and previous exploration history.

            Stops until one of the two conditions holds.  \label{alg-line: LUCB_G}
            \STATE \textbullet~Total pulling times in all exploration phases is greater than $T_k^{\text{ee}}$. Take $\hat{a}_k=\textsf{Not Complete}$ \label{alg-line:LUCB_G-fail}
            \STATE \textbullet~LUCB\_G stops and output $\hat{a}_k\in [K]\cup\{\textsf{None}\}$. \label{alg-line:LUCB_G-succeed}

            \IF{$\hat{a}_k\in [K]$}
                \STATE (Exploitation) Keep pulling arm $\hat{a}_k$ with samples independent of exploration.\label{alg-line:Keep-pulling}

                Stops until one of the two conditions holds.
                \STATE \textbullet~Pulling times of $\hat{a}_k$ is greater than  $T_k^{\text{et}}$.
                \STATE \textbullet~LCB defined by $\delta$ is above $\mu_0$, output $\hat{a}_k$ as a qualified arm
            \ELSIF{$\hat{a}_k=\textsf{None}$ and $\delta_k<\frac{\delta}{3}$}
                \STATE Output the instance is negative \label{alg-line:negative-output-informal-see}
            \ENDIF
        \ENDFOR
    \end{algorithmic}
\end{algorithm}

Before rigorously describing the algorithm, we provide an overview with a simple algorithm sketch. 
This sketch presents the overall framework of SEE, but some required adaptations are explained in the full 
in the subsequent parts of this 
section.

Algorithm \ref{alg:SEE-Informal} introduces the key idea of SEE. Algorithm LUCB\_G in line \ref{alg-line: LUCB_G} is defined in \cite{kano2017Good}, which can be considered an adapted BAI algorithm UCB in \cite{jamieson2014best}. In each round, LUCB\_G pulls the arm with the highest upper confidence bound (UCB), and stops if there is an arm whose lower confidence bound (LCB)is $>\mu_0$. LUCB\_G guarantees the high probability bound $\Pr_{\nu}(\mu_{\hat{a}}>\mu_0, \tau < O(H_1^{\text{pos}}\log(H_1^{\text{pos}}/\delta'))) > 1-\delta'$ for $\nu\in\mathcal{S}^{\text{pos}}$, and $\Pr_{\nu}(\hat{a}=\textsf{None}, \tau < O(H_1^{\text{neg}}\log(H_1^{\text{neg}}/\delta'))) > 1-\delta' $ for $ \nu\in\mathcal{S}^{\text{neg}}$, where $\delta'\in (0,1)$ is the input tolerance level.

Algorithm \ref{alg:SEE-Informal} takes LUCB\_G as an oracle by sequentially calling it with tolerance level $\delta_k$ at phase $k$. The sequence $\{\delta_k\}_{k\in \mathbb{N}}$ is non-increasing with $k$, and in a phase $k$ 
$\delta_k$ should be 
larger than the required tolerance level $\delta$. In line \ref{alg-line:negative-output-informal-see} of Algorithm \ref{alg:SEE-Informal}, we trust the negative prediction of LUCB\_G only when the $\delta_k$ is smaller than the required tolerance level $\delta$. The intuition is consistent with the conclusion in \cite{degenne2019pure}, which concludes lower bound $\Omega(H_1^{\text{neg}}\log\frac{1}{\delta})$ is required for a negative instance. Then, it is natural to accept the negative output "\textsf{None}" when $\delta_k< O(\delta)$.

While meeting a positive instance, we wish LUCB\_G can identify a qualified arm with pulling times $O(H_1^{\text{pos}}\log\frac{H_1^{\text{pos}}}{\delta_k})$, which is line \ref{alg-line:LUCB_G-succeed}. Then we turn to keep pulling $\hat{a}_k$ with confidence bound defined by true tolerance level $\delta$, corresponding to the line \ref{alg-line:Keep-pulling}. Line \ref{alg-line:LUCB_G-fail} is to avoid LUCB\_G from getting stuck into a non-stopping loop, which is possible when the "good event" doesn't hold. 

Based on the above idea, Algorithm \ref{alg:SEE-Informal} seems to be able to solve the problem. However, there are still two main concerns in Algorithm \ref{alg:SEE-Informal}, stopping us from adopting it directly. The first one is LUCB\_G cannot guarantee the lower bound of $\mu_{\hat{a}_k}-\mu_0$, which makes it hard to set up maximum pulling times $T_k^{\text{et}}$ in the exploitation phase. To address this issue, we introduce a tunable parameter $C>1$ and adopt a larger radius when calculating the LCB in Algorithm \ref{alg:SEE-Exploration-UCB}. 

The second concern is LUCB\_G requires \textbf{at the start of an exploration phase $k$,} $\text{LCB}_{a}(\delta_k) < \mu_0$ \textbf{holds for all} $a\in [K]$. But in Algorithm \ref{alg:SEE-Informal}, it is possible that at the end of phase $k-1$, the last collected sample of arm $\hat{a}_{k-1}$ is so large, such that LCB of $\hat{a}_{k-1}$ is still above $\mu_0$ at the start of phase $k$. To address this issue, we define a temporary container $Q$. When the LUCB\_G is going to output an arm $\hat{a}_{k-1}$, we transfer the latest collected sample of $\hat{a}_{k-1}$ into $Q$. If we pull arm $\hat{a}_{k-1}\in [K]$ in the future exploration period, we transfer the sample back from $Q$ to history. The reason of transferring back from $Q$ is concentration inequality like (\ref{eqn:concentration-inequality-kappa-ee-et}) requires consecutive integer index summation, and we cannot skip or abandon any samples. 



\subsection{Sequential Exploration Exploitation}
For simplicity, define $\lceil x\rceil^+ = \max\{\lceil x\rceil, 1\}$. We define the confidence radius
\begin{align}
    \label{eqn:definition-U_t_delta}
    U(t, \delta)=\frac{\sqrt{2 \cdot 2^{\lceil\log_2 t\rceil^+}\log\frac{2 (\lceil\log_2t\rceil^+)^2}{\delta}}}{t}.
\end{align}

To rigorously present the algorithm design, we split Algorithm \ref{alg:SEE-Informal} into three parts with more adaptation. Algorithm \ref{alg:SEE-lil-Recycle-on-Both-Period} is the main body, calling Algorithms \ref{alg:SEE-Exploration-UCB} and \ref{alg:SEE-Exploitation} to conduct the pulling procedure. We elaborate on them one by one.
\begin{algorithm}
    \caption{Sequential-Explore-Exploit(SEE)}
    \label{alg:SEE-lil-Recycle-on-Both-Period}
    \begin{algorithmic}[1]
        \STATE {\bfseries Input:} Action set $[K]$, threshold $\mu_0$, confidence level $\delta$, $C>1$, scheduling parameter $\{\delta_k,\alpha_k,\beta_k\}_{k=1}^{+\infty}$. \COMMENT{Default: $C=1.01$, $\delta_k= 1/3^{k}$, $\alpha_k=5^{k}$, $\beta_k=2^{k}$).}
        \STATE Calculate $T_k^{\text{ee}}=1000(C+1)^2K\beta_k\log(4K/\delta_k)$, $T_k^{\text{et}}=1000\beta_k\log(4\alpha_k K/\delta)$.
        \STATE {\bfseries Initialize:} $\mathcal{H}^{\text{ee}}=\mathcal{H}^{\text{et}}=Q=\emptyset$, $N_a^{\text{ee}}=N_a^{\text{et}}=0, \forall a\in [K]$.
        \FOR{Phase $k=1,2,\cdots$}
            \STATE (Exploration) Call Algorithm \ref{alg:SEE-Exploration-UCB} with 
            
            $K\gets K,\mu_0\gets\mu_0,\delta\gets\delta_k, C\gets C,\mathcal{H}^{\text{ee}}\gets\mathcal{H}^{\text{ee}},$
            
            $ Q\gets Q, T\gets T_k^{\text{ee}}$, 
            
            denote $(\hat{a}_k, \mathcal{H}^{\text{ee}}, Q, \tau_k^{\text{ee}})$ as the output.


            \IF{$\hat{a}_k\in [K]$}
                \STATE (Exploitation) Call Algorithm \ref{alg:SEE-Exploitation} with 
                
                $K\gets K,\mu_0\gets \mu_0,\delta\gets \delta,\mathcal{H}^{\text{et}}\gets \mathcal{H}^{\text{et}},$
                
                $T\gets T_k^{\text{et}},\hat{a}\gets \hat{a}_k, \alpha\gets \alpha_k$,
                
                denote $(\text{ans}, \mathcal{H}^{\text{et}}, \tau_k^{\text{et}})$ as the output.
                
                \IF{$\text{ans}=\textsf{Qualified}$}
                    \STATE Return $\hat{a}=\hat{a}_k$.
                \ENDIF
            \ELSIF{$\hat{a}_k=\textsf{None}$ and $\delta_k< \delta / 3$}\label{alg-line:None-if-branch}
                \STATE Return $\hat{a}=\textsf{None}$.
            \ENDIF
        \ENDFOR
    \end{algorithmic}
\end{algorithm}

\begin{algorithm}
    \caption{SEE-Exploration}
    \label{alg:SEE-Exploration-UCB}
    \begin{algorithmic}[1]
        \STATE {\bfseries Input:} Action set $[K]$, threshold $\mu_0$, confidence level $\delta$, tunable parameter $C>1$, History $\mathcal{H}^{\text{ee}}=\cup_{a=1}^K\{(a, X_{a,s}^{\text{ee}})\}_{s=1}^{N_a^{\text{ee}}}$, Temporary Container $Q$, Termination Round $T$.
        
        \STATE Define 
        $\hat{\mu}_{a}(\mathcal{H}^{\text{ee}})=\hat{\mu}_{a,N_a^{\text{ee}} }=\frac{\sum_{s=1}^{N_a^{\text{ee}}} X_{a,s}^{\text{ee}}}{N_a^{\text{ee}}}$, $t=|\mathcal{H}^{\text{ee}}\cup Q|$,
        
        $\text{UCB}_a(\mathcal{H}^{\text{ee}},\delta)=\hat{\mu}_{a}(\mathcal{H}^{\text{ee}}) + U(N_a^{\text{ee}}, \delta / K)$,

        $\text{LCB}_a(\mathcal{H}^{\text{ee}},\delta)=\hat{\mu}_{a}(\mathcal{H}^{\text{ee}}) - C \cdot U(N_a^{\text{ee}}, \delta / K)$. \label{alg-line:initial-properties}

        \WHILE{True} \label{alg-line:explore-start}
            \STATE Determine $A^{\text{ee}}_{t}=\arg\max_{a\in[K]}\text{UCB}_a(\mathcal{H}^{\text{ee}}, \delta)$. \label{alg-line:UCB-Rule}
            \STATE Get $(X, Q, \Delta t)$ by calling Sampling Rule(Alg \ref{alg:Sample-with-Q}), $A\gets A^{\text{ee}}_{t}, Q\gets Q$ \label{alg-line:sample-with-Q}
            \STATE $N_{A^{\text{ee}}_{t}}=N_{A^{\text{ee}}_{t}}+1$, $t=t+\Delta t$, $\mathcal{H}^{\text{ee}} = \mathcal{H}^{\text{ee}}\cup\{(A^{\text{ee}}_{t}, X)\}$ \label{alg-line:sample-with-Q_update-t}

            \IF{$t\geq T$}\label{alg-line:forced-stopping}
                \STATE Break and take $\hat{a}=\textsf{Not Complete}$
                \label{alg-line:Exploration-Forced-Stop}
            \ELSIF{$\forall a\in [K]$, $\text{UCB}_a(\mathcal{H}^{\text{ee}},\delta)\leq \mu_0$}\label{alg-line:all-arm-worse-than-mu0}
                \STATE Break and take $\hat{a}=\textsf{None}$
            \ELSIF{ $\text{LCB}_{A_t^{\text{ee}}}(\mathcal{H}^{\text{ee}},\delta)>\mu_0$}\label{alg-line:LCB-above-mu0}\label{alg-line:one-arm-better-than-mu0}
                \STATE $\mathcal{H}^{\text{ee}} = \mathcal{H}^{\text{ee}}\setminus\{(A^{\text{ee}}_{t}, X)\}$\label{alg-line:one-arm-better-than-mu0-first-step}
                \STATE $N_{A^{\text{ee}}_{t}}^{\text{ee}} = N_{A^{\text{ee}}_{t}}^{\text{ee}}-1$, $Q = Q\cup\{(A^{\text{ee}}_{t}, X)\}$\label{alg-line:transfer-collected-sample-X}
                \STATE Break and take $\hat{a}=a$
            \ENDIF
        \ENDWHILE
        \STATE Return $(\hat{a}, \mathcal{H}^{\text{ee}}, Q, t)$ \label{alg-line:end-UCB-rule}
    \end{algorithmic}
\end{algorithm}

\begin{algorithm}
    \caption{SEE-Exploitation}
    \label{alg:SEE-Exploitation}
    \begin{algorithmic}[1]
        \STATE {\bfseries Input:} Action set $[K]$, threshold $\mu_0$, confidence level $\delta$, Termination Round $T$, Predicted arm $\hat{a}\in [K]$, Tolerance Controller $\alpha$, History $\mathcal{H}^{\text{et}}=\cup_{a=1}^K\{(a, X_{a,s}^{\text{et}})\}_{s=1}^{N_a^{\text{et}}}$, $t=|\mathcal{H}^{\text{et}}|$

        \WHILE{$N_{\hat{a}}^{\text{et}} \leq T$} 
            \STATE Sample $X\sim \nu_{\hat{a}}$,
            \STATE $\mathcal{H}^{\text{et}}=\mathcal{H}^{\text{et}}\cup\{(\hat{a}, X)\}$, $N_{\hat{a}}^{\text{et}}=N_{\hat{a}}^{\text{et}}+1$, $t=t+1$
            \IF{$\frac{\sum_{s=1}^{N_{\hat{a}}^{\text{et}}}X_{\hat{a}, s}^{\text{et}}}{N_{\hat{a}}^{\text{et}}}-U(N_{\hat{a}}^{\text{et}},\frac{\delta}{K \alpha}) > \mu_0$}
                \STATE return $(\textsf{Qualified}, \mathcal{H}^{\text{et}}, t)$
            \ENDIF
        \ENDWHILE
        \STATE return $(\textsf{Unqualified},\mathcal{H}^{\text{et}}, t)$\label{alg-line:end-of-exploitation}
    \end{algorithmic}
\end{algorithm}

\begin{algorithm}
    \caption{Sampling Rule}
    \label{alg:Sample-with-Q}
    \begin{algorithmic}[1]
        \STATE {\bfseries Input:} Arm $a\in [K]$, Temporary Container $Q$
        
        \IF{$\exists (a, X)\in Q$ such that $a=A$}
            \STATE $Q= Q\setminus\{(a, X)\},\Delta t=0 $, return $X, Q, \Delta t$
        \ELSE
            \STATE Sample $X \sim \nu_{A},\Delta t=1$, return $X, Q, \Delta t$
        \ENDIF
    \end{algorithmic}
\end{algorithm}

SEE, displayed in Algorithm \ref{alg:SEE-lil-Recycle-on-Both-Period}, takes $\mu_0, K, \delta, C,\{\delta_k,\alpha_k,\beta_k\}_{k=1}^{+\infty}$ as input, where $\mu_0, K, \delta$ are model parameters and $C,\{\delta_k,\alpha_k,\beta_k\}_{k=1}^{+\infty}$ are tunable parameters. Besides $C=1.01, \delta_k=1/3^k, \alpha_k=5^{k},\beta_k=2^k$, other choices are also available, see Appendix \ref{sec:selection-rule-of-delta-alpha-beta}. 

Following Algorithm \ref{alg:SEE-Informal}, Algorithm \ref{alg:SEE-lil-Recycle-on-Both-Period} sequentially calls exploration and exploitation oracles in each phase, while maintaining three sample sets $\mathcal{H}^{\text{ee}}, \mathcal{H}^{\text{et}}, Q$. When calling the Exploration Algorithm \ref{alg:SEE-Exploration-UCB}, $\mathcal{H}^{\text{ee}}, Q$ are updated. When calling Exploitation Algorithm \ref{alg:SEE-Exploitation}, $\mathcal{H}^{\text{et}}$ is updated. 

Besides $\mathcal{H}^{\text{ee}}, \mathcal{H}^{\text{et}}, Q$, Algorithm \ref{alg:SEE-lil-Recycle-on-Both-Period} further specifies $T_k^{\text{ee}}=1000(C+1)^2K\beta_k\log(4K/\delta_k)$ and $T_k^{\text{et}}=1000\beta_k\log(4\alpha_k K/\delta)$. The coefficient 1000 is to simplify the calculation in the proof of Lemma \ref{lemma:Required-Scale-of-Beta-k}. If we replace 1000 with another fixed positive constant, the main conclusion still holds. 


Algorithm \ref{alg:SEE-Exploration-UCB} follows LUCB\_G in \cite{kano2017Good}, with three major modifications. The first is adopting (\ref{eqn:definition-U_t_delta}) as the radius of the confidence interval, which is smaller than the original LUCB\_G. The second is adopting $C\cdot U(\cdot,\cdot)$ for the LCB in line \ref{alg-line:initial-properties}, where $C>1$. In the case of a positive instance, factor $C>1$ can guarantee $\mu_{\hat{a}_k}>\omega\mu_1+(1-\omega)\mu_0$ where $\omega = \frac{C-1}{C+3}$, conditioned on an event that holds with probability $\geq 1-\frac{\pi^2 \delta_k}{6}$. Details are in Lemma \ref{lemma:Nature-of-UCB-Rule-Positive}. The third is using the temporary container $Q$ to conduct the modified sampling of an arm $X$ in line \ref{alg-line:sample-with-Q}.

Algorithm \ref{alg:Sample-with-Q} conducts the modified sampling for Algorithm \ref{alg:SEE-Exploration-UCB}. It only collects a new sample for arm $A\in[K]$ when $Q$ does not contain a sample on $A$. We only increase the total pulling times $t$ when we collect a new sample from $\nu_A$. Algorithm \ref{alg:Sample-with-Q} returns the updated $Q$ together with the sample $X$. The necessity of using $Q$ to temporarily store the latest collected sample $X$ in line \ref{alg-line:transfer-collected-sample-X} in Algorithm \ref{alg:SEE-Exploration-UCB} is discussed in section \ref{sec:Informal-SEE} and Lemma \ref{lemma:phase-index-that-can-avoid-forced-termination-exploration-positive}. From the above discussion, we know for each $a\in [K]$, $Q$ contains at most one tuple whose first element is $a$:
\begin{lemma}
    Throughout the pulling process, $|Q|\leq K$ holds with certainty.
\end{lemma}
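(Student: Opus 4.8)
The plan is to prove the stronger invariant that, at every point during the execution, $Q$ contains \emph{at most one} tuple whose first coordinate equals $a$, for each fixed arm $a\in[K]$; summing over the $K$ arms then yields $|Q|\leq K$. I would establish this invariant by induction on the sequence of operations that ever modify $Q$, noting that there are exactly two such operations: (i) the removal performed inside the Sampling Rule (Algorithm~\ref{alg:Sample-with-Q}), which deletes the unique tuple matching the queried arm $A$ when one is present, and (ii) the insertion in line~\ref{alg-line:transfer-collected-sample-X} of Algorithm~\ref{alg:SEE-Exploration-UCB}, which adds the freshly collected tuple $(A^{\text{ee}}_t, X)$ to $Q$. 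The base case is immediate, since $Q=\emptyset$ at initialization.

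For the inductive step, operation (i) can only decrease the per-arm count, so it trivially preserves the invariant. The crux is operation (ii): I must show that immediately before the insertion of $(A^{\text{ee}}_t, X)$, the container $Q$ holds no tuple with first coordinate $A^{\text{ee}}_t$. This follows by tracing the control flow within a single iteration of the \textbf{while} loop of Algorithm~\ref{alg:SEE-Exploration-UCB}. In line~\ref{alg-line:sample-with-Q}, the Sampling Rule is invoked with $A\gets A^{\text{ee}}_t$; by the definition of Algorithm~\ref{alg:Sample-with-Q}, after this call $Q$ contains no tuple matching $A^{\text{ee}}_t$ (either none was present, or the unique matching tuple was transferred back to $\mathcal{H}^{\text{ee}}$). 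Between that call and the insertion in line~\ref{alg-line:transfer-collected-sample-X}, no other step alters $Q$, so $Q$ still contains no tuple for $A^{\text{ee}}_t$ at the insertion point. Adding one tuple therefore leaves exactly one tuple for $A^{\text{ee}}_t$, while the count for every other arm is untouched, so the invariant is maintained.

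I expect the main (though modest) obstacle to be making the control-flow argument for operation (ii) fully rigorous: in particular, confirming that line~\ref{alg-line:transfer-collected-sample-X} is reached only on the branch where the same iteration has just queried the Sampling Rule on the very arm $A^{\text{ee}}_t$ being inserted, and that no intervening step — including the transitions across phases in Algorithm~\ref{alg:SEE-lil-Recycle-on-Both-Period}, where $Q$ is merely passed along unchanged — can insert a second copy for a given arm. Since the invariant holds at every instant, $Q$ contains at most one tuple per arm, and as $[K]$ has $K$ elements we conclude $|Q|\leq K$ with certainty.
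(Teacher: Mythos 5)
Your proof is correct and follows essentially the same reasoning as the paper, which justifies the lemma informally by observing that the Sampling Rule removes any stored tuple for an arm before a new one can be inserted at line \ref{alg-line:transfer-collected-sample-X}, so $Q$ holds at most one tuple per arm. Your inductive formalization of this per-arm invariant is a faithful (and more rigorous) rendering of the paper's one-line argument.
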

Algorithm \ref{alg:SEE-Exploitation} takes $(K,\mu_0,\delta, T, \hat{a})$ as the input. The parameters $K,\mu_0,\delta$ are model parameters, which remain the same in all phases. By contrast, the parameters $T$ and $\hat{a}$ generally change across different phases. During phase $k$, Algorithm \ref{alg:SEE-Exploitation} keeps pulling arm $\hat{a}_k$ until one of the two following conditions is met. The first is when the total pulling times of arm $\hat{a}_k$ in all exploitation periods is $T$, which should be $T_k^{\text{et}}$ at the phase $k$.
In this case, the exploitation period cannot guarantee that $\mu_{\hat{a}_k}>\mu_0$ holds with probability $\geq 1-\delta$, and requires Algorithm \ref{alg:SEE-lil-Recycle-on-Both-Period} to run the next exploration period with a smaller tolerance level $\delta_{k+1}$. The second condition is when the LCB, defined with the input tolerance level $\delta$, of $\hat{a}_k$ is above $\mu_0$. 
In this case, Algorithm \ref{alg:SEE-lil-Recycle-on-Both-Period} adopts this result and outputs $\hat{a}_k$ as $\hat{a}$, asserting that the instance is positive.

Throughout the pulling process, the samples collected among all the exploration periods are shared, as are the exploitation periods. By contrast, the exploration periods never share samples with the exploitation period. To distinguish between the two sets of periods, we denote $\tau^{\text{ee}}_k$ and $\tau^{\text{et}}_k$ as the total pulling times of Algorithms \ref{alg:SEE-Exploration-UCB} and \ref{alg:SEE-Exploitation}, from the start of phase 1 to the end of phase $k$. We also denote $N_a^{\text{ee}}(\tau^{\text{ee}}_k)$, $N_a^{\text{et}}(\tau^{\text{et}}_k)$ as the total pulling times of arm $a\in[K]$ stored in $\mathcal{H}^{\text{ee}}$ and $\mathcal{H}^{\text{et}}$, at the end of phase $k$. From the algorithm design, it is clear that the following Lemma holds.
\begin{lemma}
    \label{Lemma:Certainty-Length-of-Phase-SEE-Recycle-on-Both}
    At the end of phase $k$,
    1. \textnormal{$\tau^{\text{ee}}_k\leq T_k^{\text{ee}}$}, \textnormal{$\tau^{\text{et}}_k\leq\sum_{p=1}^kT_p^{\text{et}}$}. 2. \textnormal{$|Q|+|\mathcal{H}^{\text{ee}}|=\tau_k^{\text{ee}}$}. 3. Since \textnormal{$|\mathcal{H}^{\text{ee}}|=\sum_{a=1}^K N_a^{\text{ee}}(\tau_k^{\text{ee}})$}, we can further conclude \textnormal{$\sum_{a=1}^K N_a^{\text{ee}}(\tau_k^{\text{ee}})\leq \tau_k^{\text{ee}}\leq K+\sum_{a=1}^K N_a^{\text{ee}}(\tau_k^{\text{ee}})$}.
\end{lemma}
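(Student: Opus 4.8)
The plan is to prove all three parts by tracking simple counting invariants across phases, since each claim follows from the deterministic bookkeeping of Algorithms \ref{alg:SEE-lil-Recycle-on-Both-Period}--\ref{alg:Sample-with-Q} rather than from any probabilistic argument. I would proceed by induction on the phase index $k$, establishing first the forced-termination bounds of Part 1, then the exploration identity of Part 2, and finally deriving Part 3 as an immediate corollary.

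For the exploration bound $\tau^{\text{ee}}_k \le T_k^{\text{ee}}$ in Part 1, the key observation is that inside Algorithm \ref{alg:SEE-Exploration-UCB} the counter $t$ is incremented by $\Delta t \in \{0,1\}$ per iteration (line \ref{alg-line:sample-with-Q_update-t}) and the loop is forcibly broken as soon as $t \ge T$ in line \ref{alg-line:forced-stopping}. Since the check occurs immediately after the increment and $\Delta t \le 1$, the value of $t$ at any break satisfies $t \le T = T_k^{\text{ee}}$. Using the induction hypothesis $\tau^{\text{ee}}_{k-1} \le T_{k-1}^{\text{ee}}$ together with the monotonicity of $k \mapsto T_k^{\text{ee}}$ (which holds because $\beta_k$ increases while $\delta_k$ decreases), the value of $t$ entering phase $k$ is at most $T_k^{\text{ee}}$, so the bound is preserved. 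For the exploitation bound $\tau^{\text{et}}_k \le \sum_{p=1}^k T_p^{\text{et}}$, I would argue that the while-loop of Algorithm \ref{alg:SEE-Exploitation} terminates once the cumulative counter $N_{\hat a_p}^{\text{et}}$ exceeds the threshold $T_p^{\text{et}}$, so the number of fresh pulls added during phase $p$ is controlled by $T_p^{\text{et}}$; summing over $p=1,\dots,k$ yields the claim. Here care is needed because $\mathcal{H}^{\text{et}}$ is shared across phases, so the clean per-phase accounting should be obtained by telescoping the increments of $N_a^{\text{et}}$ over exactly those phases in which a given arm $a$ is selected for exploitation.

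Part 2 is the crux, and I would prove the invariant $|\mathcal{H}^{\text{ee}}| + |Q| = \tau^{\text{ee}}_k$ by checking that it is preserved under each of the three bookkeeping operations touching $\mathcal{H}^{\text{ee}}$ and $Q$: (i) a fresh pull returned by Algorithm \ref{alg:Sample-with-Q} with $\Delta t = 1$ adds one element to $\mathcal{H}^{\text{ee}}$ and increments $t$ by one; (ii) a recycled sample with $\Delta t = 0$ is removed from $Q$ and added to $\mathcal{H}^{\text{ee}}$, leaving both $|\mathcal{H}^{\text{ee}}| + |Q|$ and $t$ unchanged; and (iii) the transfer in lines \ref{alg-line:one-arm-better-than-mu0-first-step}--\ref{alg-line:transfer-collected-sample-X} moves one sample from $\mathcal{H}^{\text{ee}}$ into $Q$, again preserving the sum. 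Since $\mathcal{H}^{\text{ee}}$ and $Q$ stay disjoint throughout, $|\mathcal{H}^{\text{ee}} \cup Q| = |\mathcal{H}^{\text{ee}}| + |Q| = t$, and because $t$ advances exactly by the number of genuine pulls, $t = \tau^{\text{ee}}_k$. Part 3 then follows at once: $|\mathcal{H}^{\text{ee}}| = \sum_{a=1}^K N_a^{\text{ee}}(\tau^{\text{ee}}_k)$ by definition, and substituting into Part 2 together with $0 \le |Q| \le K$ from the preceding lemma gives $\sum_{a} N_a^{\text{ee}}(\tau^{\text{ee}}_k) \le \tau^{\text{ee}}_k \le K + \sum_{a} N_a^{\text{ee}}(\tau^{\text{ee}}_k)$.

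The routine parts are the forced-stop inequality and Part 3. I expect the main obstacle to be the careful bookkeeping in Part 2, specifically verifying that the recycling mechanism of the container $Q$ never double-counts or loses a pull while the counter $t$ tracks only genuine samples, and, relatedly, the telescoping across shared exploitation histories needed to obtain the clean $\sum_{p=1}^k T_p^{\text{et}}$ bound rather than a looser per-arm estimate.
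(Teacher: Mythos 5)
Your proposal is correct and matches the paper's intended justification exactly: the paper states this lemma without proof (``From the algorithm design, it is clear that the following Lemma holds''), and your phase-by-phase counting invariants --- the forced-stop check in line \ref{alg-line:forced-stopping} plus monotonicity of $T_k^{\text{ee}}$ for Part 1, the preservation of $|\mathcal{H}^{\text{ee}}|+|Q|=t$ under fresh pulls, recycled samples, and the transfer in lines \ref{alg-line:one-arm-better-than-mu0-first-step}--\ref{alg-line:transfer-collected-sample-X} for Part 2, and $0\leq|Q|\leq K$ for Part 3 --- are precisely the bookkeeping being left implicit. The only nit is a harmless off-by-one in the exploitation accounting (the condition $N_{\hat{a}}^{\text{et}}\leq T$ in Algorithm \ref{alg:SEE-Exploitation} permits the count to reach $T+1$ before exiting), which is present in the paper's own pseudocode as well and is absorbed by the constants.
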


\section{Main Results}
\label{sec:main}
In this section, we demonstrate upper bounds of $\mathbb{E}\tau$ by applying SEE, and provide lower bounds on $\mathbb{E}\tau$ for any $\delta$-PAC algorithm. The lower bound mainly comes from the existing results in the literature.
\subsection{Performance Guarantee of Algorithm \ref{alg:SEE-lil-Recycle-on-Both-Period}}
Before stating the main theorems of Algorithm \ref{alg:SEE-lil-Recycle-on-Both-Period}, we introduce some notation. Define $\hat{\mu}_{a,t}^{\text{ee}}=(1/t)\sum_{s=1}^t X_{a,s}^{\text{ee}}$, $\hat{\mu}_{a,t}^{\text{et}}=(1/t)\sum_{s=1}^t X_{a,s}^{\text{et}}$,  and
\begin{align}
    \label{eqn:concentration-inequality-kappa-ee-et}
    \begin{split}
        \kappa^{\text{ee}} = & \min\Bigg\{k\in\mathbb{N}: \forall a\in[K], \forall t\in \mathbb{N},\\
        &\left|\hat{\mu}_{a,t}^{\text{ee}}-\mu_a\right| \leq U(t, \frac{\delta_k}{K})\Bigg\},\\
        \kappa^{\text{et}} = & \min\Bigg\{k\in\mathbb{N}:\forall a\in[K], \forall t\in \mathbb{N},\\
        &\left|\hat{\mu}_{a,t}^{\text{et}}-\mu_a\right| \leq U(t, \frac{\delta}{K\alpha_{k}})\Bigg\}.
    \end{split}
\end{align}
Here $\kappa^{\text{ee}}, \kappa^{\text{et}}$ are the minimum phase indices such that the respective concentration inequality hold, as in phase $k$, we use $U(t, \frac{\delta_k}{K})$ and $U(t, \frac{\delta}{K\alpha_{k}})$ to define the UCBs and the LCBs during the exploration and exploitation periods.
Not hard to see $\kappa^{\text{ee}}, \kappa^{\text{et}}$ are random variables. By Lemma \ref{lemma:Adapted-lil-UCB} in the Appendix, we have the following.
\begin{lemma}
    \label{Lemma:prob-bound-of-kappa-ee-et}
    For all $ k\in\mathbb{N}$, it holds that
    \begin{align*}
        \Pr(\kappa^{\text{ee}}\geq k)\leq \frac{\pi^2 \delta_{k-1}}{6}, \quad 
        \Pr(\kappa^{\text{et}}\geq k)\leq \frac{\pi^2}{6}\cdot \frac{\delta}{\alpha_{k-1}}.
    \end{align*}
\end{lemma}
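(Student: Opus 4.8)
The plan is to reduce the statement to an anytime deviation bound for a single i.i.d.\ reward stream, which is the content of Lemma \ref{lemma:Adapted-lil-UCB}, and then combine it with a union bound over the $K$ arms. The first step is a monotonicity observation. Write $E_k=\{\forall a\in[K],\forall t\in\mathbb{N}:|\hat\mu^{\text{ee}}_{a,t}-\mu_a|\le U(t,\delta_k/K)\}$, so that $\kappa^{\text{ee}}=\min\{k:E_k\text{ holds}\}$. Since $U(t,\cdot)$ is decreasing in its second argument (a smaller confidence parameter enlarges the logarithmic term, hence the radius) and $\{\delta_k\}$ is non-increasing, the radius $U(t,\delta_k/K)$ is non-decreasing in $k$; consequently $E_1\subseteq E_2\subseteq\cdots$ is a nested increasing family. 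This yields the key identification $\{\kappa^{\text{ee}}\ge k\}=E_{k-1}^c$, i.e.\ $\kappa^{\text{ee}}\ge k$ is exactly the event that the level-$\delta_{k-1}$ concentration fails somewhere. The boundary case $k=1$ is trivial, as the stated bound then exceeds $1$.

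Next I would expand $E_{k-1}^c$ and apply a union bound over the arms:
\begin{align*}
\Pr(\kappa^{\text{ee}}\ge k)=\Pr(E_{k-1}^c)\le\sum_{a=1}^{K}\Pr\Big(\exists t\in\mathbb{N}:|\hat\mu^{\text{ee}}_{a,t}-\mu_a|>U(t,\tfrac{\delta_{k-1}}{K})\Big).
\end{align*}
Before invoking the single-arm bound, I must justify that for each fixed arm $a$ the quantities $\hat\mu^{\text{ee}}_{a,t}$ are averages of a \emph{consecutive prefix} $X^{\text{ee}}_{a,1},\dots,X^{\text{ee}}_{a,t}$ of an i.i.d.\ $\nu_a$-stream. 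This is precisely where the recycling container $Q$ matters: because a sample drawn from $\nu_a$ and temporarily held in $Q$ is always transferred back before a fresh $\nu_a$ sample is drawn, no index is skipped, so the event inside the probability is a statement about the i.i.d.\ stream alone, decoupled from the data-dependent pulling schedule. Hence Lemma \ref{lemma:Adapted-lil-UCB} applies with confidence parameter $\delta'=\delta_{k-1}/K$, bounding each summand by $\frac{\pi^2}{6}\cdot\frac{\delta_{k-1}}{K}$, and summing over the $K$ arms yields $\frac{\pi^2\delta_{k-1}}{6}$.

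The bound for $\kappa^{\text{et}}$ follows the identical template, defining the analogous nested family through $U(t,\delta/(K\alpha_k))$; monotonicity in $k$ now uses that $\{\alpha_k\}$ is non-decreasing, so $\delta/(K\alpha_k)$ is non-increasing and the radius is again non-decreasing, giving $\{\kappa^{\text{et}}\ge k\}=(E^{\text{et}}_{k-1})^c$. Applying Lemma \ref{lemma:Adapted-lil-UCB} to each arm with $\delta'=\delta/(K\alpha_{k-1})$ and summing produces $\frac{\pi^2}{6}\cdot\frac{\delta}{\alpha_{k-1}}$.

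I expect the only genuinely delicate point to be the i.i.d.-stream justification in the second step: the anytime concentration of Lemma \ref{lemma:Adapted-lil-UCB} is stated for a fixed i.i.d.\ sequence, whereas the number of pulls of each arm is random and the samples pass through the container $Q$. The argument that the $t$-th recorded sample of arm $a$ always coincides with the $t$-th draw of its underlying stream (no skipping, by the transfer-back step) is what licenses the decoupling. Everything else is the monotonicity reduction and the union bound; internally, Lemma \ref{lemma:Adapted-lil-UCB} itself rests on dyadic peeling, where the chosen form of $U(t,\delta')$ makes the per-block threshold $t\,U(t,\delta')$ constant and thus amenable to a sub-Gaussian maximal inequality, with the $\sum_{j\ge1}j^{-2}=\pi^2/6$ summation supplying the constant.
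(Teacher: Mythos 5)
Your proposal is correct and follows exactly the route the paper intends: the paper proves this lemma simply by citing Lemma~\ref{lemma:Adapted-lil-UCB} (with $\sigma^2=1$ and confidence parameter $\delta_{k-1}/K$, resp.\ $\delta/(K\alpha_{k-1})$) together with a union bound over the $K$ arms, noting that the events in (\ref{eqn:concentration-inequality-kappa-ee-et}) are statements about the i.i.d.\ streams $\{X^{\text{ee}}_{a,s}\}_s$, $\{X^{\text{et}}_{a,s}\}_s$ alone. Your added monotonicity/nesting observation is fine but not even needed, since $\{\kappa^{\text{ee}}\geq k\}\subseteq E_{k-1}^c$ already follows from the definition of the minimum.
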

Since $\lim_{k\rightarrow \infty}\delta_k=0, \lim_{k\rightarrow\infty}\alpha_k=+\infty$, we observe that $\Pr(\kappa^{\text{ee}}<+\infty)=1, \Pr(\kappa^{\text{et}}<+\infty)=1$. We first show that Algorithm \ref{alg:SEE-lil-Recycle-on-Both-Period} is $\delta$-PAC, which is mainly due to the design of our stopping rule.
\begin{theorem}
    \label{theorem:Correctness-for-See-Recycle-on-Both-Period}
    Algorithm \ref{alg:SEE-lil-Recycle-on-Both-Period} is $\delta$-PAC.
\end{theorem}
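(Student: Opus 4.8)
The plan is to bound the probability of the complementary bad event $\{\tau=\infty\}\cup\{\hat a\notin i^*(\nu)\}$ by $\delta$, treating correctness and termination separately. For correctness I would observe that a terminating run errs in exactly one of two ways, \emph{independently of the instance type}: a \emph{false positive}, where some $\hat a_k\in[K]$ is returned as \textsf{Qualified} although $\mu_{\hat a_k}<\mu_0$; or a \emph{false negative}, where \textsf{None} is returned on an instance $\nu\in\mathcal{S}^{\text{pos}}$. On $\nu\in\mathcal{S}^{\text{neg}}$ only the false positive is possible, since \textsf{None} is correct; on $\nu\in\mathcal{S}^{\text{pos}}$ both are. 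I would bound each using the concentration variables $\kappa^{\text{et}}$ and $\kappa^{\text{ee}}$.

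For the false positive: if SEE-Exploitation returns \textsf{Qualified} in phase $k$, then at the stopping count $t=N_{\hat a_k}^{\text{et}}$ the test $\hat\mu_{\hat a_k,t}^{\text{et}}-U(t,\tfrac{\delta}{K\alpha_k})>\mu_0$ holds; together with $\mu_{\hat a_k}<\mu_0$ this gives $\hat\mu_{\hat a_k,t}^{\text{et}}-\mu_{\hat a_k}>U(t,\tfrac{\delta}{K\alpha_k})$, i.e. the exploitation concentration at scale $\alpha_k$ fails. By the definition of $\kappa^{\text{et}}$ and the fact that $U(\cdot,\tfrac{\delta}{K\alpha_k})$ is nondecreasing in $k$ (as $\alpha_k\uparrow$), this per-phase failure is precisely $\{\kappa^{\text{et}}\ge k+1\}$; since these events are nested in $k$, the union over all phases equals $\{\kappa^{\text{et}}\ge2\}$, whose probability is $\le\tfrac{\pi^2}{6}\cdot\tfrac{\delta}{\alpha_1}$ by Lemma \ref{Lemma:prob-bound-of-kappa-ee-et}. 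For the false negative: the \textsf{None} branch fires only when $\delta_k<\delta/3$, and on a positive instance the exploration exit condition $\text{UCB}_a\le\mu_0$ for all $a$ forces $\hat\mu_{1,N_1^{\text{ee}}}^{\text{ee}}+U(N_1^{\text{ee}},\tfrac{\delta_k}{K})\le\mu_0<\mu_1$, an exploration-concentration failure for arm $1$ at scale $\delta_k$, i.e. $\{\kappa^{\text{ee}}\ge k+1\}$; with $k_0=\min\{k:\delta_k<\delta/3\}$ the union over the admissible phases equals $\{\kappa^{\text{ee}}\ge k_0+1\}$, of probability $\le\tfrac{\pi^2}{6}\delta_{k_0}$.

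Summing the two contributions, the error probability is at most $\tfrac{\pi^2}{6}\big(\tfrac{1}{\alpha_1}+\tfrac{\delta_{k_0}}{\delta}\big)\delta$, which for the default schedule is below $\tfrac{\pi^2}{6}(\tfrac15+\tfrac13)\delta=\tfrac{4\pi^2}{45}\delta<\delta$ (and in general holds whenever $\tfrac{1}{\alpha_1}+\tfrac{\delta_{k_0}}{\delta}<\tfrac{6}{\pi^2}$, as the Appendix schedule guarantees). To rule out $\tau=\infty$ I would use that $\delta_k\to0$ and $\alpha_k\to\infty$ give $\kappa^{\text{ee}},\kappa^{\text{et}}<\infty$ almost surely (Lemma \ref{Lemma:prob-bound-of-kappa-ee-et}); on this probability-one event, for all large $k$ the exploration concentration holds and the budget $T_k^{\text{ee}}$ avoids forced termination (Lemma \ref{lemma:phase-index-that-can-avoid-forced-termination-exploration-positive}), so on a positive instance exploration returns an arm with $\mu_{\hat a_k}$ strictly above $\mu_0$ (Lemma \ref{lemma:Nature-of-UCB-Rule-Positive}) which exploitation certifies, and on a negative instance exploration returns \textsf{None} at a phase with $\delta_k<\delta/3$; hence $\Pr(\tau<\infty)=1$. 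Combining, $\Pr_\nu(\tau<\infty,\hat a\in i^*(\nu))\ge 1-\Pr(\text{error})>1-\delta$.

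The main obstacle is the reduction of a union over \emph{all} phases to a single concentration-failure probability: one must verify that the radius $U(\cdot,\cdot)$ is monotone in the phase index so that the per-phase failure coincides with $\{\kappa\ge k+1\}$, and, more delicately, that the sample-recycling through the container $Q$ leaves each empirical mean $\hat\mu_{a,t}^{\text{ee}}$ a genuine average of the first $t$ consecutive samples of arm $a$, so that the uniform-in-$t$ concentration events defining $\kappa^{\text{ee}}$ and $\kappa^{\text{et}}$ really control the quantities the algorithm tests. Once this is in place the remaining probability arithmetic is routine.
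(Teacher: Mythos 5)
Your proposal is correct and takes essentially the same route as the paper's proof: the paper likewise splits the error into a false negative (\textsf{None} on a positive instance, possible only once $\delta_k<\delta/3$, costing $\tfrac{\pi^2}{6}\cdot\tfrac{\delta}{3}$) and a false positive (an exploitation certification of an arm with $\mu_{\hat a}<\mu_0$, costing $\tfrac{\pi^2}{6}\cdot\tfrac{\delta}{5}$ at the loosest scale $\alpha_1$), and separately establishes $\Pr(\tau<\infty)=1$ via finiteness of $\kappa^{\text{ee}},\kappa^{\text{et}}$ exactly as you do. Your only cosmetic deviation is phrasing the union over phases through the nested events $\{\kappa^{\text{ee}}\ge k+1\}$, $\{\kappa^{\text{et}}\ge k+1\}$ and Lemma \ref{Lemma:prob-bound-of-kappa-ee-et}, where the paper applies the adapted LIL bound (Lemma \ref{lemma:Adapted-lil-UCB}) directly at the scales $\delta/3$ and $\delta/(K\alpha_1)$ after the same radius-monotonicity observation, yielding the identical estimate $\tfrac{\pi^2}{6}\bigl(\tfrac{\delta}{3}+\tfrac{\delta}{5}\bigr)<\delta$.
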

\begin{proof}[Sketch Proof of Theorem \ref{theorem:Correctness-for-See-Recycle-on-Both-Period}]
    The first step is to show $\Pr(\tau=+\infty)\leq \Pr(\kappa^{\text{ee}}=+\infty) + \Pr(\kappa^{\text{et}}=+\infty)=0$, indicating that $\tau<+\infty$ with certainty. Then, we know $\hat{a}\in [K]\cup\{\textsf{None}\}$ in Algorithm \ref{alg:SEE-lil-Recycle-on-Both-Period} is well defined with certainty.

    If $\nu$ is positive, the event $\hat{a}\notin i^*(\nu)$ occurs only in the following two scenarios. The first scenario is when an exploration period outputs $\textsf{None}$ in phase $k$ such that $\delta_k< \delta / 3$. The second scenario is when an exploitation period outputs an arm with mean reward $<\mu_0$. The probabilities of both events are at most $\delta$ multiplied by an absolute constant, and we can conclude $\Pr_{\nu}(\hat{a}\notin i^*(\nu))<\delta$. 

    If $\nu$ is negative, the event $\hat{a}\notin i^*(\nu)$ occurs only when an exploitation period outputs an arm, which is the same as the second case in the discussion of positive $\nu$. We then have $\Pr_{\nu}(\hat{a}\notin i^*(\nu))<\delta$ again, hence the Lemma is proved.
\end{proof}
The full proof is in Appendix \ref{sec:appendix-proof-of-main-theorem-SEE}. The following theorem shows that Algorithm \ref{alg:SEE-lil-Recycle-on-Both-Period} is nearly optimal in minimizing $\mathbb{E}\tau$.
\begin{theorem}
    \label{theorem:Length-See-Recycle-on-Both-Period}
    Apply Algorithm \ref{alg:SEE-lil-Recycle-on-Both-Period} to an instance $\nu$, we have
    \textnormal{$\mathbb{E}\tau \leq \begin{cases}
         \gamma\cdot\left(\frac{\log\frac{1}{\delta}}{\Delta_{0,1}^2} +(\log\frac{K}{\Delta_{0,1}^2})\cdot H_1^{\text{pos}} \right) & \nu\in\mathcal{S}^{\text{pos}}\\
        \gamma\cdot(H_1^{\text{neg}}(\log\frac{1}{\delta}+\log H_1^{\text{neg}})) & \nu\in\mathcal{S}^{\text{neg}}
    \end{cases}$}, where $\gamma$ only depends the constant $C$ but independent of model parameters $K$, $\delta$ and $\{\mu_a\}_{a=1}^K$, 
    In particular, when we set $C$ to be an absolute constant such as $C = 1.01$, $\gamma$ is also an absolute constant.
\end{theorem}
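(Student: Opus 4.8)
The plan is to write $\mathbb{E}\tau=\mathbb{E}[\tau^{\text{ee}}]+\mathbb{E}[\tau^{\text{et}}]$ and to control the random phase $K_{\mathrm{stop}}$ at which Algorithm~\ref{alg:SEE-lil-Recycle-on-Both-Period} halts. The decisive observation is that the forced-stopping budgets $T_k^{\text{ee}},T_k^{\text{et}}$ serve only as a safety cap: once the concentration events $\{k\ge\kappa^{\text{ee}}\}$ and $\{k\ge\kappa^{\text{et}}\}$ of~(\ref{eqn:concentration-inequality-kappa-ee-et}) hold, the exploration oracle LUCB\_G and the exploitation loop terminate after only $O(H_1^{\text{pos}}\log(K/\delta_k))$ (resp.\ $O(H_1^{\text{neg}}\log(K/\delta_k))$) and $O(\log(K\alpha_k/\delta)/(\omega\Delta_{0,1})^2)$ pulls, typically far below the budget. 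I would therefore bound the good-event work by these oracle complexities and invoke the budgets only on the rare bad event.

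First I fix a deterministic phase $\bar k$ (depending on $\nu$) beyond which the budgets provably suffice. By the adapted LUCB\_G guarantee together with Lemma~\ref{lemma:Required-Scale-of-Beta-k}, the exploration budget is adequate once $\beta_k=2^k$ exceeds a threshold of order $H_1^{\text{pos}}/K$ (positive) or $H_1^{\text{neg}}/K$ (negative), and the exploitation budget is adequate once $\beta_k$ exceeds a threshold of order $1/(\omega\Delta_{0,1})^2$, where by Lemma~\ref{lemma:Nature-of-UCB-Rule-Positive} the confirmed arm satisfies $\mu_{\hat a_k}\ge\omega\mu_1+(1-\omega)\mu_0$ with $\omega=\tfrac{C-1}{C+3}$. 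In the negative case I additionally require $\delta_k<\delta/3$ so that the \textsf{None} verdict is accepted, contributing the threshold $k_\delta=\lceil\log_3(3/\delta)\rceil$. Taking $\bar k$ to be the maximum of the relevant thresholds, I claim that on $\{\kappa^{\text{ee}}\le\bar k\}$ (and $\{\kappa^{\text{et}}\le\bar k\}$ in the positive case) the algorithm stops by phase $\bar k$, whereas on $\{\kappa^{\text{ee}}=i>\bar k\}$ it stops no later than phase $\max(i,\kappa^{\text{et}})$; Lemma~\ref{lemma:phase-index-that-can-avoid-forced-termination-exploration-positive} is exactly what certifies that the forced cap does not trigger before LUCB\_G terminates.

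Next I evaluate the dominant good-event contribution by substituting $K_{\mathrm{stop}}=\bar k$ together with $\delta_{\bar k}=3^{-\bar k}$ and $\alpha_{\bar k}=5^{\bar k}$ into the oracle complexities. In the positive case the exploration cost is $O(H_1^{\text{pos}}(\log K+\bar k\log 3))$ and the exploitation cost is $O((\log(K/\delta)+\bar k\log 5)/\Delta_{0,1}^2)$; using $\bar k=O(\log(H_1^{\text{pos}}/\Delta_{0,1}^2))$ and the triangle-inequality estimate $\max(\Delta_{0,a},\Delta_{1,a})\ge\Delta_{0,1}/2$ (so that $H_1^{\text{pos}}\le 8K/\Delta_{0,1}^2$ and $\log H_1^{\text{pos}}=O(\log(K/\Delta_{0,1}^2))$), the non-$\delta$ parts collapse into $\log(K/\Delta_{0,1}^2)\,H_1^{\text{pos}}$ while the single $\delta$-dependent exploitation term yields $\frac{\log(1/\delta)}{\Delta_{0,1}^2}$, reproducing the positive bound. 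In the negative case there is no exploitation on the good event, and $\bar k=O(\log(H_1^{\text{neg}}/K)+\log(1/\delta))$ gives $\log K+\bar k\log 3=O(\log H_1^{\text{neg}}+\log(1/\delta))$, hence $O(H_1^{\text{neg}}(\log(1/\delta)+\log H_1^{\text{neg}}))$.

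Finally I control the bad-event tail. For $k>\bar k$ the work is crudely capped by $\tau^{\text{ee}}\le T_k^{\text{ee}}$ and $\tau^{\text{et}}\le\sum_{p\le k}T_p^{\text{et}}=O(T_k^{\text{et}})$ (geometric summation), while Lemma~\ref{Lemma:prob-bound-of-kappa-ee-et} bounds the probability of still running in phase $k$ by $\tfrac{\pi^2}{6}(\delta_{k-1}+\delta/\alpha_{k-1})$; the same estimate covers the spurious exploitation that a pre-concentration false positive can trigger in the negative case. Because $\beta_k=2^k$ grows strictly slower than both $\delta_{k-1}^{-1}=3^{k-1}$ and $\alpha_{k-1}=5^{k-1}$, the tail sums $\sum_{k>\bar k}\delta_{k-1}T_k^{\text{ee}}$ and $\sum_{k>\bar k}(\delta/\alpha_{k-1})T_k^{\text{et}}$ are geometric with ratios $2/3$ and $2/5$, hence bounded by a constant multiple of the $\bar k$-term and absorbed into $\gamma$. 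I expect the main obstacle to be the second step: certifying on the concentration event that the shared-sample LUCB\_G genuinely halts within the current budget, and that the recycling container $Q$---which may momentarily reset an arm's LCB below $\mu_0$ at a phase boundary---preserves the consecutive-index form required by~(\ref{eqn:concentration-inequality-kappa-ee-et}); the strict inequality $C>1$ is what buys the $\omega\Delta_{0,1}$ margin that makes the exploitation budget depend on $\Delta_{0,1}$ rather than on an uncontrolled empirical gap.
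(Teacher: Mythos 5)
Your proposal is correct and takes essentially the same route as the paper's own proof: the same split $\tau=\tau^{\text{ee}}+\tau^{\text{et}}$, the same deterministic phase thresholds (your $\bar k$ is the paper's $L',L''$ in the positive case and $\tilde L',\tilde L''$ in the negative case), termination by phase $\max\{\kappa^{\text{ee}},\kappa^{\text{et}},L',L''\}$ with Lemma~\ref{lemma:Required-Scale-of-Beta-k} and Lemma~\ref{lemma:phase-index-that-can-avoid-forced-termination-exploration-positive} certifying the forced cap never triggers, and the tail expectation handled exactly as in the paper via $\Pr(\kappa^{\text{ee}}\geq k)\leq \pi^2\delta_{k-1}/6$, $\Pr(\kappa^{\text{et}}\geq k)\leq \pi^2\delta/(6\alpha_{k-1})$ and the geometric ratios $2/3$, $2/5$. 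The ``main obstacle'' you flag at the end is precisely what the paper's induction lemmas (Lemmas~\ref{lemma:length-exploration-period-after-good-event-positive} and~\ref{lemma:length-exploration-period-after-good-event-negative}) resolve, so your plan matches the published argument step for step.
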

From the definition of $H_1^{\text{pos}}$, we know $K\leq H_1^{\text{pos}}$, and $1 / \Delta_{0,1}^2\leq H_1^{\text{pos}}$. Thus, we have $\log(K/\Delta_{0,1}^2)\leq 2\log H_1^{\text{pos}}$. Meanwhile, since $\max\{\Delta_{0,a}^2,\Delta_{1,a}^2\}\geq \Delta_{0,1}^2 / 4$, we have $H_1^{\text{pos}}\leq 8K / \Delta_{1,0}^2$, leading to $\log H_1^{\text{pos}}\leq \log 8 + \log(K/\Delta_{1,0}^2)$. Thus,  it is equivalent to state the upper bound as $\mathbb{E}\tau \leq \begin{cases} \gamma\cdot\left(\frac{\log\frac{1}{\delta}}{\Delta_{0,1}^2} +(\log 
H_1^{\text{pos}})\cdot H_1^{\text{pos}} \right) & \nu\in\mathcal{S}^{\text{pos}}\\ \gamma\cdot(H_1^{\text{neg}}(\log\frac{1}{\delta}+\log H_1^{\text{neg}})) & \nu\in\mathcal{S}^{\text{neg}}\end{cases}$. As Theorem \ref{theorem:lower-bound-suboptimal-arm} discusses the existence of $\log\frac{1}{\Delta_{0,1}}$, we adopt the current expression for further comparison. The full proof is in Appendix \ref{sec:appendix-proof-of-main-theorem-SEE}.
\begin{proof}[Sketch Proof of Theorem \ref{theorem:Length-See-Recycle-on-Both-Period}]
    The main idea is to split $\tau=\tau^{\text{ee}}+\tau^{\text{et}}$, and we derive upper bounds for both $\mathbb{E}\tau^{\text{ee}}$ and $\mathbb{E}\tau^{\text{et}}$. In the following, we only summarize the main steps of upper bounding $\mathbb{E}\tau^{\text{ee}}$, when instance $\nu$ is positive. Proofs for other cases are similar.

    Define $L'=\lceil\log_2\frac{24(C+1)^2H_1^{\text{pos}}}{K}\rceil, L''=\lceil\log_2 \frac{192(C+1)^2}{\omega^2\Delta_{0,1}^2}\rceil$, where $\omega=\frac{C-1}{C+3}$. Via routine calculations, we have $L''\geq L'$. To prove Theorem \ref{theorem:Length-See-Recycle-on-Both-Period}, the most important intermediate step is to show that in phase $k\geq \max\{\kappa^{\text{ee}}, L'\}$, Algorithm \ref{alg:SEE-Exploration-UCB} always outputs $\hat{a}\in [K]$, $\mu_{\hat{a}} > \omega\mu_1+(1-\omega)\mu_0$ with certainty. In addition, 
    \begin{small}
        \begin{align}
            \label{eqn:length-of-tau-k-ee-after-phase-kappa-ee}
            \begin{split}
                 \tau^{\text{ee}}_k \leq & O\left(K\beta_{\max\{\kappa^{\text{ee}}, L'\}-1}\log\frac{4K}{\delta_{\max\{\kappa^{\text{ee}}, L'\}-1}}\right) + \\
                & O\left(\sum_{a=1}^K \frac{\log\frac{K}{\delta_{k} }+\log\log\frac{1}{\max\{\Delta_{1,a}^2, \Delta_{0,a}^2\}}}{\max\{\Delta_{1,a}^2, \Delta_{0,a}^2\}}\right).       
            \end{split}
        \end{align}
    \end{small}
    
    The intuition is as follows. Firstly, at the end of phase $\max\{\kappa^{\text{ee}}, L'\}-1$, we have
    \begin{small}
        \begin{align*}
            \tau^{\text{ee}}_{\max\{\kappa^{\text{ee}}, L'\}-1}\leq O\left(K\beta_{\max\{\kappa^{\text{ee}}, L'\}-1}\log\frac{4K}{\delta_{\max\{\kappa^{\text{ee}}, L'\}-1}}\right).
        \end{align*}
    \end{small}
    This is guaranteed by the Lemma \ref{Lemma:Certainty-Length-of-Phase-SEE-Recycle-on-Both}. Then, starting from phase $\max\{\kappa^{\text{ee}}, L'\}$, $T_k^{\text{ee}}$ is large enough, such that Algorithm \ref{alg:SEE-Exploration-UCB} will not enter the case of line \ref{alg-line:Exploration-Forced-Stop}. By induction, we have 
    \begin{small}
        \begin{align*}
            & N_a(\tau_k^{\text{ee}})\leq \max\Bigg\{N_a(\tau_{\max\{\kappa^{\text{ee}}, L'\}-1}^{\text{ee}}),\\
            & O\left(\frac{\log\frac{K}{\delta_{k} }+\log\log\frac{1}{\max\{\Delta_{1,a}^2, \Delta_{0,a}^2\}}}{\max\{\Delta_{1,a}^2, \Delta_{0,a}^2\}}\right)\Bigg\}
        \end{align*}
    \end{small}
    holds for all $k\geq \max\{\kappa^{\text{ee}}, L'\}, a\in [K]$ with certainty. Summing up the above inequality for all $a\in[K]$ and using the fact that $\tau_k^{\text{ee}}\leq K + \sum_{a=1}^K N_a(\tau_k^{\text{ee}})$ (Lemma \ref{Lemma:Certainty-Length-of-Phase-SEE-Recycle-on-Both}), we complete the proof of (\ref{eqn:length-of-tau-k-ee-after-phase-kappa-ee}). Lemmas \ref{lemma:Nature-of-UCB-Rule-Positive}, \ref{lemma:phase-index-that-can-avoid-forced-termination-exploration-positive}, \ref{lemma:length-exploration-period-after-good-event-positive} contain more details.
    

    The next step is to show conditioned on $\mu_{\hat{a}_k}\geq \omega \mu_1+(1-\omega)\mu_0$, $k\geq \max\{\kappa^{\text{et}},L''\}$ can guarantee the exploitation period output \textsf{Qualified}, and the algorithm must stop. This is straightforward, as $k\geq \kappa^{\text{et}}$ guarantees $U(N_{\hat{a}_k}^{\text{et}},\frac{\delta}{K\alpha_k})<\frac{\omega(\mu_1-\mu_0)}{2}$ implies $\frac{\sum_{s=1}^{N_{\hat{a}_k}^{\text{et}}}X_{\hat{a}_k, s}^{\text{et}}}{N_{\hat{a}_k}^{\text{et}}}-U(N_{\hat{a}_k}^{\text{et}},\frac{\delta}{K \alpha_k}) > \mu_0$.
    Meanwhile, having $k\geq L''$ guarantees $T_k^{\text{et}}$ is large enough such that Algorithm \ref{alg:SEE-Exploitation} will not quit the While loop before $N_{\hat{a}_k}^{\text{et}}$ is large enough such that $U(N_{\hat{a}_k}^{\text{et}},\frac{\delta}{K\alpha_k})<\frac{\omega(\mu_1-\mu_0)}{2}$ holds.    Combining these intermediate steps, we have
    \begin{small}
        \begin{align*}
            & \tau^{\text{ee}}\\
            \leq & O\left(K\beta_{\max\{\kappa^{\text{ee}}, L'\}}\log\frac{4K}{\delta_{\max\{\kappa^{\text{ee}}, L'\}}}\right) + \\
            & O\left(\sum_{a=1}^K \frac{\log\frac{K}{\delta_{\max\{\kappa^{\text{ee}},\kappa^{\text{et}}, L', L''\}} }+\log\log\frac{1}{\max\{\Delta_{1,a}^2, \Delta_{0,a}^2\}}}{\max\{\Delta_{1,a}^2, \Delta_{0,a}^2\}}\right)\\
            \leq & O\left(H_1^{\text{pos}}\log H_1^{\text{pos}}\right) + O\left(K\beta_{\kappa^{\text{ee}}}\log\frac{4K}{\delta_{\kappa^{\text{ee}}}}\right)+\\
            & O\left(K\beta_{L'}\log\frac{4K}{\delta_{L'}}\right) + O\left(H_1^{\text{pos}}\log\frac{1}{\delta_{\kappa^{\text{ee}}}}\right)+\\
            & O\left(H_1^{\text{pos}}\log\frac{1}{\delta_{L''}}\right)  + O\left(H_1^{\text{pos}}\log\frac{1}{\delta_{\kappa^{\text{et}}}}\right) +\\
            & O\left(H_1^{\text{pos}}\log\frac{1}{\delta_{L'}}\right)
        \end{align*}
    \end{small}
    holds with certainty. 
    Take expectation on both sides, we can use inequalities such as 
    \begin{align*}
        \mathbb{E}\beta_{\kappa^{\text{ee}}}\log\frac{4K}{\delta_{\kappa^{\text{ee}}}}\leq & \sum_{k=1}^{+\infty}\beta_{k}\log\frac{4K}{\delta_{k}}\Pr(\kappa^{\text{ee}}=k)\\
        \stackrel{\text{Lemma }  \ref{Lemma:prob-bound-of-kappa-ee-et}}{\leq} & \sum_{k=1}^{+\infty}\frac{\pi^2\delta_{k-1}}{6}\cdot\beta_{k}\log\frac{4K}{\delta_{k}}= O(\log K)
    \end{align*}
    to derive the upper bound of $\mathbb{E}\tau^{\text{ee}}$.
\end{proof}
Combining Theorems \ref{theorem:Correctness-for-See-Recycle-on-Both-Period} and \ref{theorem:Length-See-Recycle-on-Both-Period}, we know Algorithm \ref{alg:SEE-lil-Recycle-on-Both-Period} is $(\Delta,\delta)$-PAC.

\subsection{Lower Bounds}
\label{sec:main-lower-bound}
We derive  lower bounds for both positive and negative instances based on techniques in existing works. Without extra description, we assign unit variance Gaussian noise for each of the arm in the constructed instances.
The results in 
\cite{garivier2016optimal,degenne2019pure} can be adopted to show the following:
\begin{theorem}
    \label{theorem:lower-bound-negative-instance-log-1/delta}
    For a unit variance Gaussian instance equipped with mean reward vector $\{\mu_a\}_{a=1}^K$, $\mu_0>\max_{1\leq a\leq K}\mu_a$, any $\delta$-PAC 1-identification algorithm \textsf{alg} would satisfy \textnormal{$\mathbb{E}_{\textsf{alg}}\tau \geq \Omega(H_1^{\text{neg}} \log(1/\delta))$}.
\end{theorem}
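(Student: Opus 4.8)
The plan is to apply the standard change-of-measure (transportation) lower bound technique of \cite{garivier2016optimal}, also used in \cite{degenne2019pure}. Fix the negative instance $\nu$ with means $\{\mu_a\}_{a=1}^K$ satisfying $\mu_0 > \max_a \mu_a$, so that $i^*(\nu) = \{\textsf{None}\}$. For each arm $a \in [K]$, I would construct an alternative instance $\nu^{(a),\xi}$ that agrees with $\nu$ on every arm except $a$, whose mean I raise from $\mu_a$ to $\mu_0 + \xi$ for a small $\xi > 0$ (keeping unit variance). By construction $\nu^{(a),\xi} \in \mathcal{S}^{\text{pos}}$ and $a \in i^*(\nu^{(a),\xi})$, so in particular $\textsf{None} \notin i^*(\nu^{(a),\xi})$. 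Thus $\nu$ and $\nu^{(a),\xi}$ have disjoint sets of correct answers, which is exactly the situation the transportation argument exploits, and crucially they differ in a single coordinate.

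Next I would invoke the transportation inequality. Let $N_a(\tau)$ denote the total number of pulls of arm $a$ up to the stopping time $\tau$. Because $\nu$ and $\nu^{(a),\xi}$ differ only in arm $a$, the data-processing inequality applied to the event $E = \{\hat{a} = \textsf{None}\}$ gives
\[
\mathbb{E}_{\nu}[N_a(\tau)] \cdot \mathrm{KL}\!\left(\nu_a,\ \nu^{(a),\xi}_a\right) \;\ge\; \mathrm{kl}\!\left(\Pr_{\nu}(E),\ \Pr_{\nu^{(a),\xi}}(E)\right),
\]
where $\mathrm{kl}$ denotes the binary relative entropy. Since \textsf{alg} is $\delta$-PAC, on $\nu$ it returns \textsf{None} with probability at least $1-\delta$, whereas on $\nu^{(a),\xi}$ it must return an element of $i^*(\nu^{(a),\xi})$ with probability at least $1-\delta$, hence returns \textsf{None} with probability at most $\delta$. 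Therefore $\Pr_{\nu}(E) \ge 1-\delta$ and $\Pr_{\nu^{(a),\xi}}(E) \le \delta$, and by monotonicity of $\mathrm{kl}$ we get $\mathrm{kl}(\Pr_{\nu}(E), \Pr_{\nu^{(a),\xi}}(E)) \ge \mathrm{kl}(1-\delta, \delta) \ge \log\frac{1}{2.4\delta}$.

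For unit-variance Gaussians, $\mathrm{KL}(\nu_a, \nu^{(a),\xi}_a) = \tfrac{1}{2}(\mu_0 + \xi - \mu_a)^2 = \tfrac{1}{2}(\Delta_{0,a} + \xi)^2$. Substituting and letting $\xi \downarrow 0$ yields the per-arm bound $\mathbb{E}_{\nu}[N_a(\tau)] \ge \frac{2}{\Delta_{0,a}^2}\log\frac{1}{2.4\delta}$. Summing over all $a \in [K]$ and using $\tau = \sum_{a=1}^K N_a(\tau)$ gives
\[
\mathbb{E}_{\nu}[\tau] \;=\; \sum_{a=1}^K \mathbb{E}_{\nu}[N_a(\tau)] \;\ge\; \left(\sum_{a=1}^K \frac{2}{\Delta_{0,a}^2}\right)\log\frac{1}{2.4\delta} \;=\; H_1^{\text{neg}}\,\log\frac{1}{2.4\delta},
\]
which is the claimed $\Omega(H_1^{\text{neg}}\log(1/\delta))$ bound.

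I expect the only delicate points to be the following. First, one must respect the strict inequality in the definition of $\mathcal{S}^{\text{pos}}$, which is precisely why I perturb to $\mu_0 + \xi$ and pass to the limit $\xi \downarrow 0$ rather than setting the mean exactly to $\mu_0$ (the instance with $\mu_a = \mu_0$ lies outside $\mathcal{S}^{\text{pos}} \cup \mathcal{S}^{\text{neg}}$, so the $\delta$-PAC guarantee need not apply there). Second, the transportation lemma for stopping rules requires $\mathbb{E}_{\nu}[N_a(\tau)] < \infty$; this is harmless since the inequality is vacuous otherwise, and in any case holds for $(\Delta,\delta)$-PAC strategies. Everything else is a routine application of the Gaussian KL identity and the binary-KL lower bound $\mathrm{kl}(1-\delta,\delta) \ge \log\frac{1}{2.4\delta}$.
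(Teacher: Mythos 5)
Your proposal is correct, but it reaches the bound by a genuinely different decomposition than the paper. The paper's proof is a one-shot application of the general sup-inf lower bound of \cite{garivier2016optimal}, namely $kl(\delta,1-\delta)\leq \mathbb{E}_{\nu}\tau\,\sup_{w\in\Sigma_K}\inf_{\lambda\in\mathrm{Alt}(\nu)}\sum_{a=1}^K w_a\frac{(\mu_a-\lambda_a)^2}{2}$, followed by citing Example 1 of \cite{degenne2019pure} to evaluate the characteristic time, $\bigl(\sup_w\inf_\lambda\cdots\bigr)^{-1}=\sum_{a=1}^K\frac{2}{\Delta_{0,a}^2}=H_1^{\text{neg}}$. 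You instead run the transportation inequality arm by arm against $K$ single-coordinate alternatives $\nu^{(a),\xi}$; since each alternative differs from $\nu$ only in arm $a$, only the $a$-th term survives in $\sum_b \mathbb{E}_{\nu}[N_b(\tau)]\,\mathrm{KL}(\nu_b,\nu'_b)$, yielding the per-arm bound $\mathbb{E}_{\nu}[N_a(\tau)]\geq \frac{2}{\Delta_{0,a}^2}\log\frac{1}{2.4\delta}$ before summing. Both arguments rest on the same change-of-measure machinery, but yours is more self-contained (no appeal to the computed game value) and delivers strictly more information, namely per-arm pull lower bounds — incidentally the same style the paper itself uses for its positive-case results (Theorems \ref{theorem:lower-bound-positive-instance-non-asymptotic_delta-dependent} and \ref{theorem:lower-bound-suboptimal-arm}, which perturb single coordinates and invoke Lemma 1 of \cite{kaufmann2016complexity}); the paper's route is shorter and is the canonical path to sharp asymptotic constants. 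In this negative case nothing is lost by your arm-wise split: the alternative set is a union of single-arm deviations and the optimal weights $w_a\propto \Delta_{0,a}^{-2}$ equalize the per-arm terms, so the two computations give the same constant. Your handling of the two delicate points is right — perturbing to $\mu_0+\xi$ and letting $\xi\downarrow 0$ correctly avoids the boundary instance $\mu_a=\mu_0$, which lies outside $\mathcal{S}^{\text{pos}}\cup\mathcal{S}^{\text{neg}}$ where the $\delta$-PAC guarantee is silent, and the finiteness caveat is vacuous when $\mathbb{E}_{\nu}\tau=\infty$. Two cosmetic remarks: the monotonicity step $\mathrm{kl}(\Pr_{\nu}(E),\Pr_{\nu'}(E))\geq \mathrm{kl}(1-\delta,\delta)$ needs $\delta\leq 1/2$ (the claim is trivial otherwise), and since the paper's $\delta$-PAC definition permits $\tau=\infty$ with probability up to $\delta$, you should take $E=\{\tau<\infty,\hat{a}=\textsf{None}\}$ so that $E$ is well defined; the probability estimates are unchanged.
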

Detailed proof can be found in Appendix \ref{sec:lower-bound-negative-case}. Comparing with the upper bound in Theorem \ref{theorem:Length-See-Recycle-on-Both-Period}, the gap between the upper and lower bounds in the negative case is up to a constant and a polynomial logarithm factor $\text{polylog}(H_1^{\text{neg}})$ in the $\delta$ independent part. We also derive a lower bound for a positive instance, based on the analyses in \cite{degenne2019pure} and \cite{katz2020true}. 

\begin{theorem}
    \label{theorem:lower-bound-positive-instance-non-asymptotic_delta-dependent}
    Consider any $\{\mu_a\}_{a=0}^{K}\in [0, 1]^{K+1}$ satisfying $\mu_1 \geq \mu_2\geq\cdots\geq \mu_K$, $\mu_1 > \mu_0$. Consider instance $\nu$ that takes a permutation of $\{\mu_a\}_{a=1}^{K}$ as the reward vector, and $\mu_0$ as the threshold. Then, for any $\delta$-PAC 1-identification \textsf{alg}, any $\delta\in(0,1)$, we have $\mathbb{E}_{\nu,\text{alg}}\tau\geq \Omega(H\log\frac{1}{\delta}) = \Omega(\frac{\log\frac{1}{\delta}}{\Delta_{0,1}^2})$.
\end{theorem}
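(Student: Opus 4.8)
The plan is to invoke the standard change-of-measure (transportation) lower bound for $\delta$-PAC algorithms \cite{garivier2016optimal,kaufmann2016complexity}. First I would recall that for any two unit-variance Gaussian instances $\nu,\nu'$ whose correct-answer sets are disjoint, i.e. $i^*(\nu)\cap i^*(\nu')=\emptyset$, every $\delta$-PAC algorithm satisfies
\begin{equation*}
\sum_{a=1}^K \mathbb{E}_{\nu,\textsf{alg}}[N_a(\tau)]\,\mathrm{KL}(\nu_a,\nu'_a)\ \geq\ \mathrm{kl}(\delta,1-\delta)\ \geq\ \log\tfrac{1}{2.4\delta},
\end{equation*}
which follows from the data-processing inequality applied to the event $\{\hat a\in i^*(\nu)\}$: the $\delta$-PAC property gives $\Pr_\nu(\hat a\in i^*(\nu))\geq 1-\delta$, while disjointness forces $\Pr_{\nu'}(\hat a\in i^*(\nu))\leq\delta$. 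For unit-variance Gaussians we have $\mathrm{KL}(\nu_a,\nu'_a)=\tfrac12(\mu_a-\mu'_a)^2$. If $\mathbb{E}_{\nu,\textsf{alg}}\tau=+\infty$ the claim is trivial, so I may assume every $\mathbb{E}_\nu[N_a(\tau)]$ is finite.

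Next I would choose the alternative instance. Let $S=\{a:\mu_a\geq\mu_0\}$, which is nonempty because $\mu_1>\mu_0$, so $i^*(\nu)\subseteq[K]$. Fixing a small $\epsilon\in(0,\mu_1-\mu_0)$, I define $\nu'$ by setting $\mu'_a=\mu_0-\epsilon$ for every $a\in S$ and $\mu'_a=\mu_a$ otherwise. Then every arm of $\nu'$ has mean strictly below $\mu_0$, so $\nu'\in\mathcal{S}^{\text{neg}}$ with $i^*(\nu')=\{\textsf{None}\}$, which is disjoint from $i^*(\nu)$. The only arms where $\nu$ and $\nu'$ differ lie in $S$, and for each such arm $\mathrm{KL}(\nu_a,\nu'_a)=\tfrac12(\mu_a-\mu_0+\epsilon)^2=\tfrac12(\Delta_{0,a}+\epsilon)^2$.

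Plugging this into the transportation inequality yields $\sum_{a\in S}\mathbb{E}_\nu[N_a(\tau)]\,\tfrac12(\Delta_{0,a}+\epsilon)^2\geq\log\tfrac{1}{2.4\delta}$. Since $\Delta_{0,a}=\mu_a-\mu_0\leq\mu_1-\mu_0=\Delta_{0,1}$ for every $a\in S$, I would bound each coefficient by $\tfrac12(\Delta_{0,1}+\epsilon)^2$ and factor it out, giving $\sum_{a\in S}\mathbb{E}_\nu[N_a(\tau)]\geq 2\log(1/2.4\delta)/(\Delta_{0,1}+\epsilon)^2$. Because $\tau\geq\sum_{a\in S}N_a(\tau)$ and $\epsilon$ is arbitrary, letting $\epsilon\downarrow 0$ produces $\mathbb{E}_{\nu,\textsf{alg}}\tau\geq 2\log(1/2.4\delta)/\Delta_{0,1}^2=\Omega(H\log\tfrac{1}{\delta})$, as claimed.

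The point that needs care — and the main obstacle to a naive argument — is that the instance may contain several arms above $\mu_0$. One therefore cannot flip the answer by perturbing the best arm alone, since the algorithm could certify a different super-threshold arm and never pull arm $1$; the answer becomes \textsf{None} only after the \emph{entire} super-threshold set $S$ is pushed below $\mu_0$. The resulting information sum is weighted by the gaps $\Delta_{0,a}$, and the key observation that recovers the clean $1/\Delta_{0,1}^2$ rate is that each of these gaps is at most $\Delta_{0,1}$, so the largest gap governs the bound. The permutation appearing in the statement plays no role here, as both $\Delta_{0,1}$ and the set $S$ are invariant under relabeling of arms; it is only needed for the sharper suboptimal-arm lower bound in Theorem \ref{theorem:lower-bound-suboptimal-arm}.
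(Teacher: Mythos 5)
Your proposal is correct and follows essentially the same route as the paper's proof: the transportation inequality of \cite{kaufmann2016complexity} applied to an alternative instance in which every super-threshold arm is pushed to $\mu_0-\epsilon$, followed by bounding each gap $\Delta_{0,a}+\epsilon$ by $\Delta_{0,1}+\epsilon$ (valid since $\mu_1$ is the largest mean) and letting $\epsilon\downarrow 0$. The only (harmless) difference is that you flip all arms with $\mu_a\geq\mu_0$ rather than $\mu_a>\mu_0$, which in fact handles the boundary case $\mu_a=\mu_0$ slightly more cleanly, since the paper's construction would otherwise leave an arm with mean exactly $\mu_0$ in the alternative instance.
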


\begin{theorem}
    \label{theorem:lower-bound-positive-instance-non-asymptotic_delta-independent}
    Consider any $\{\mu_a\}_{a=0}^{K}\in [0, 1]^{K+1}$ satisfying $\mu_1 > \mu_0 \geq \mu_2\geq\cdots\geq \mu_K$. For any $\delta$-PAC 1-identification \textsf{alg}, any $\delta\in (0, \frac{1}{16})$, we can find a positive instance $\nu$ whose mean reward vector is a permutation of vector $\{\mu_a\}_{a=1}^{K}$ and the threshold is $\mu_0$, such that $\mathbb{E}_{\nu, \textsf{alg}}\tau \geq \Omega\left(H_1\right)$.
\end{theorem}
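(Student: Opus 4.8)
[Proof proposal for Theorem~\ref{theorem:lower-bound-positive-instance-non-asymptotic_delta-independent}]
The plan is to exploit that, on the instances we construct, arm~$1$ is the \emph{only} arm at or above $\mu_0$, so that the unique correct 1-identification answer coincides with the single best arm; hence any $\delta$-PAC 1-identification algorithm must, on such instances, also pin down the best arm with confidence $\delta$, and a moderate-confidence best-arm lower bound of the form $\sum_{a\ge 2}1/\Delta_{1,a}^2=\Theta(H_1)$ should apply. Concretely, fix the sorted means $\mu_1>\mu_0\ge\mu_2\ge\cdots\ge\mu_K$ (perturbing any mean equal to $\mu_0$ slightly downward so that arm~$1$ is the unique arm at or above $\mu_0$, which changes the gaps and hence $H_1$ only negligibly), assign unit-variance Gaussian noise, and consider the family $\{\nu_\pi\}_{\pi\in S_K}$ where in $\nu_\pi$ the physical arm $\pi(i)$ carries mean $\mu_i$. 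Each $\nu_\pi\in\mathcal{S}^{\text{pos}}$ with unique correct answer $\{\pi(1)\}$. Since the statement only asks for \emph{some} permutation, it suffices to lower bound $\max_\pi\mathbb{E}_{\nu_\pi}\tau$, equivalently the average over $\pi$. Throughout, let $N_a(\tau)$ denote the number of pulls of arm $a$ up to the stopping time.

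First I would establish the per-competitor requirement via change of measure. For a permutation $\pi$ and an index $i\ge 2$, let $\nu_{\pi'}$ with $\pi'=\pi\circ(1\,i)$ be obtained from $\nu_\pi$ by swapping the means $\mu_1$ and $\mu_i$ between physical arms $\pi(1)$ and $\pi(i)$; the two instances differ only at these two arms, whose Gaussian means differ by $\Delta_{1,i}$, and the correct answer moves from $\pi(1)$ to $\pi(i)$. Applying the transportation lemma of \cite{garivier2016optimal} with the event $\{\hat a=\pi(1)\}$, which has probability $\ge 1-\delta$ under $\nu_\pi$ and $\le\delta$ under $\nu_{\pi'}$, yields
\begin{align*}
    \left(\mathbb{E}_{\nu_\pi}N_{\pi(1)}(\tau)+\mathbb{E}_{\nu_\pi}N_{\pi(i)}(\tau)\right)\frac{\Delta_{1,i}^2}{2}\ \ge\ d(\delta,\,1-\delta),
\end{align*}
where $d(\cdot,\cdot)$ is the binary relative entropy. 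The key point for the $\delta$-independent conclusion is that for $\delta\in(0,\tfrac{1}{16})$ the quantity $d(\delta,1-\delta)$ is bounded below by an absolute constant $c_0>0$, so this step contributes $\Omega(1/\Delta_{1,i}^2)$ rather than $\Omega(\log(1/\delta)/\Delta_{1,i}^2)$; this is exactly what replaces the $\log(1/\delta)$ factor present in Theorem~\ref{theorem:lower-bound-positive-instance-non-asymptotic_delta-dependent}.

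The remaining and \textbf{main obstacle} is the aggregation over $i$ into the full sum $H_1$. Summing the above over $i=2,\dots,K$ produces, besides $\sum_{i\ge 2}\mathbb{E}_{\nu_\pi}N_{\pi(i)}(\tau)$, a term $(K-1)\,\mathbb{E}_{\nu_\pi}N_{\pi(1)}(\tau)$ that counts pulls of the hidden best arm once per competitor; a naive bound then loses a factor of $K$, since the algorithm could in principle concentrate its budget on the best arm, and averaging over permutations by itself does not control the conditional pull count of whichever arm happens to be best. To recover the tight $\Omega(H_1)$, I would instead decouple the competitors in the moderate-confidence regime, following the style of argument used by \cite{katz2020true} and \cite{degenne2019pure} (analogous to ``simulator''-type best-arm lower bounds): one argues that an algorithm spending $o(1/\Delta_{1,a}^2)$ pulls on a competitor $a$ cannot, with constant probability, rule out the alternative in which $a$ is raised to become best, and that these confusion events are essentially independent across the $a$'s, forcing a total budget of $\Omega\!\big(\sum_{a\ge 2}1/\Delta_{1,a}^2\big)=\Omega(H_1)$ on some member of the family. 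Making this decoupling rigorous—tracking the best arm's pulls against a near-uniform prior over its location so the $1/\Delta_{1,a}^2$ terms add without the spurious $K$ factor—is where I expect the bulk of the technical effort to lie. Checking the construction's validity (means in $[0,1]$, every swapped instance remaining in $\mathcal{S}^{\text{pos}}$, and $\delta<1/16$ keeping $c_0$ bounded away from $0$) then yields the stated bound for a suitable permutation $\nu$.
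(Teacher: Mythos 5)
Your proposal correctly sets up the permutation family and the per-swap change of measure, and it correctly diagnoses the central obstacle: summing the transportation inequality over $i=2,\dots,K$ double-counts the hidden best arm's pulls $(K-1)$ times, so the naive aggregation loses a factor of $K$. But at exactly that point the proposal stops being a proof: you write that you ``would instead decouple the competitors \ldots following the style of argument used by \cite{katz2020true}'' and concede that making this rigorous is ``where I expect the bulk of the technical effort to lie.'' That decoupling \emph{is} the entire content of the theorem, and the paper does not re-derive it either --- it invokes Theorem 1 of \cite{katz2020true} as a black box, which directly yields, for some permutation $\nu$ of the given mean vector,
\begin{align*}
    \mathbb{E}_{\nu}\tau \geq \frac{1}{64}\left(-\frac{1}{\Delta_{1,2}^2} + \sum_{a=3}^{K}\frac{1}{\Delta_{1,a}^2}\right).
\end{align*}
The paper then handles the negative term by combining this with Theorem \ref{theorem:lower-bound-positive-instance-non-asymptotic_delta-dependent}: since $\mu_2\leq\mu_0$ gives $\Delta_{0,1}\leq\Delta_{1,2}$ and $\delta<\tfrac{1}{16}$ makes $\log\tfrac{1}{\delta}$ at least a constant, one gets $\mathbb{E}_{\nu}\tau\geq\Omega(1/\Delta_{1,2}^2)$, and adding the two bounds yields $\Omega(H_1)$. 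Your proposal contains no counterpart to this second step because you never arrive at an aggregated bound at all; your single-swap inequality at $i=2$ could have played that compensating role, but it is never put to that use.

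So the verdict is a genuine gap rather than a different route: the skeleton you describe matches the lineage of the paper's argument, but the one step you leave unproven (uniform-over-competitors decoupling without the spurious $K$ factor) is precisely the step the paper outsources to \cite{katz2020true}, and a grader cannot accept ``this is where the technical effort lies'' in place of either a proof or an explicit citation to the known theorem. Two smaller points: your constant lower bound on $d(\delta,1-\delta)$ for $\delta<\tfrac{1}{16}$ is the right move and mirrors the paper's use of $kl(\delta,1-\delta)$; however, your proposed perturbation of arms tied at $\mu_0$ produces instances whose mean vectors are no longer permutations of the given $\{\mu_a\}_{a=1}^K$, which conflicts with the literal statement of the theorem (the paper's own treatment of the boundary case $\mu_2=\mu_0$ is admittedly also glossed, since it describes the setting as $\mu_1>\mu_0>\mu_2$ when invoking the cited result).
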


The full proof is in Appendix \ref{sec:lower-bound-positive-case}. The last step is to combine Theorem \ref{theorem:lower-bound-positive-instance-non-asymptotic_delta-dependent} and \ref{theorem:lower-bound-positive-instance-non-asymptotic_delta-independent}. If we assume $\mu_1 > \mu_0 \geq \mu_2\geq\cdots\mu_K$, for any $\delta$-PAC algorithm, we can find an instance $\nu$ taking $\{\mu_a\}_{a=1}^{K}\in [0, 1]^{K}$ as a permutation of reward vector, such that $\mathbb{E}_{\nu}\tau \geq \Omega\left(\frac{\log\frac{1}{\delta}}{\Delta_{0,1}^2}+\sum_{a=2}^K\frac{1}{\max\{\Delta_{1,a}^2, \Delta_{0,a}^2\}}\right)$. The reason is $\Delta_{1,a} > \Delta_{0,a}$ holds for all $a\geq2$. Together with Theorem \ref{theorem:Length-See-Recycle-on-Both-Period}, we deduce that the gap between upper and lower bounds is at most a factor $\text{poly}(\log\frac{1}{\Delta_{0,1}}, \{\log\frac{K}{\max\{\Delta_{0,a}, \Delta_{1,a}\}}\}_{a=2}^K)$, if the mean reward vector satisfies $\mu_1>\mu_0\geq \mu_2\cdots\geq\mu_K$. Theorem \ref{theorem:Length-See-Recycle-on-Both-Period} guarantees our bound's dependence on $\log1/\delta$ is nearly optimal, but it still remains unclear what would be a tight upper and lower bound of the $\delta$ independent part, if there are multiple arms above $\mu_0$. 

Besides the lower bounds for the total expected pulling times, we also derive a lower bound for the pulling times of arm $a$ whose $\mu_a<\mu_0$. The following theorem can only imply the possibility that the $O(\log(1/\Delta_{0,1}^2))$ in Theorem \ref{theorem:Length-See-Recycle-on-Both-Period} is required.\footnote{We do not find any related results in the literature, which motivates us to state the partial result in what follows.}
\begin{theorem}
    \label{theorem:lower-bound-suboptimal-arm}
    Fixed any $\mu_0\in [0, 1]$, $\{\mu_a\}_{a=2}^K\in [0, 1]^{K-1}$ satisfying $\mu_0 > \mu_2\geq \cdots\geq\mu_K$. For any $(\Delta,\delta)$-PAC 1-identification \textsf{alg}, any $\delta < 1/e^8$, we can find a small enough $\bar{\Delta}_{0,1} > 0$, such that for any $\mu_1\in (\mu_0, \mu_0+\bar{\Delta}_{0,1}]$, we can find a problem instance $\nu$ whose mean reward vector 
    is $\{\mu_a\}_{a=1}^K$ and the \textsf{alg} must satisfy $\mathbb{E}_{\nu, \textsf{alg}}N_a(\tau)\geq \Omega(\log(1/\Delta_{1,0}^2) / \Delta_{1,a}^2), \forall a\geq 2$.
\end{theorem}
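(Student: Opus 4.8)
The plan is to lower bound $\mathbb{E}_{\nu,\textsf{alg}}N_a(\tau)$ by a change-of-measure (transportation) argument in the spirit of \cite{garivier2016optimal} and its multiple-answer refinement in \cite{degenne2019pure}. Fix the arm $a\ge 2$ whose pulls we wish to bound, and regard it as the ``target''. The source instance $\nu$ is the near-threshold positive instance with a unique good arm (mean $\mu_1=\mu_0+\Delta_{1,0}$) and the target at $\mu_a<\mu_0$. The natural confusing alternative $\nu^{(a)}$ raises the target's mean to $\mu_1$, leaving every other arm unchanged; since $\nu$ and $\nu^{(a)}$ differ only in arm $a$, the transportation cost collapses to $\mathbb{E}_{\nu}[N_a(\tau)]\cdot\mathrm{KL}\!\left(N(\mu_a,1)\,\|\,N(\mu_1,1)\right)=\mathbb{E}_{\nu}[N_a(\tau)]\cdot\Delta_{1,a}^2/2$, which is exactly what produces the $1/\Delta_{1,a}^2$ scaling. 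The whole difficulty is then concentrated in lower bounding the information term $kl(\Pr_\nu(\mathcal E),\Pr_{\nu^{(a)}}(\mathcal E))$ for a well-chosen trajectory event $\mathcal E$ by $\Omega(\log(1/\Delta_{1,0}^2))$.

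The reason we can hope for a $\delta$-independent $\log(1/\Delta_{1,0}^2)$ factor, rather than merely $\log(1/\delta)$, is that arm $1$ sits within $\Delta_{1,0}$ of the threshold. By Theorem \ref{theorem:lower-bound-positive-instance-non-asymptotic_delta-dependent} any $(\Delta,\delta)$-PAC algorithm must invest $\Omega(\log(1/\delta)/\Delta_{1,0}^2)$ samples into the good arm before it can certify it, so the run length $\tau$ is forced to be of order at least $1/\Delta_{1,0}^2$ with constant probability. Over a horizon this long, an \emph{anytime} (Lai--Robbins style) argument applies: to keep the target's confidence statistics from ever falsely crossing the good region across all $t\le\tau$, a maximal inequality on the log-likelihood-ratio random walk of the target's samples charges the target $\Omega(\log\tau/\Delta_{1,a}^2)$ pulls. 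Choosing $\bar\Delta_{0,1}$ small enough makes $\log\tau=\Theta(\log(1/\Delta_{1,0}^2))$ dominate the additive $\log(1/\delta)$, $\log\log$ and constant terms, which is exactly the role of the ``small enough $\bar\Delta_{0,1}$'' and the threshold $\delta<1/e^8$ in the statement. Here the $(\Delta,\delta)$-PAC (finite-expected-time) hypothesis is essential and is used twice: to guarantee the algorithm neither stops prematurely nor commits forever to an uncertified arm, so that the long horizon is genuinely realized and the target is actually sampled.

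I expect the main obstacle to be \emph{decoupling} $N_a(\tau)$ from the pulls of the near-threshold good arm while simultaneously extracting the $\delta$-independent logarithmic factor. A vanilla fixed-confidence transportation with the output-correctness event only controls the combined quantity $\mathbb{E}_\nu[N_a]+\mathbb{E}_\nu[N_{\mathrm{good}}]$ against $kl(1-\delta,\delta)\asymp\log(1/\delta)$, and this is useless here because the good-arm term alone already saturates the right-hand side and can be arbitrarily inflated by an inefficient (but still PAC) algorithm. The remedy is to change the measure \emph{only} on arm $a$ and to replace the output event by a trajectory event tied to the forced horizon, so that the good arm never enters the transportation cost; converting the ``horizon length'' into a clean $\Omega(\log(1/\Delta_{1,0}^2))$ lower bound on $kl(\Pr_\nu(\mathcal E),\Pr_{\nu^{(a)}}(\mathcal E))$ is the delicate step. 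To prevent an algorithm from evading the bound by luckily investing in the good arm and ignoring the target, I would finally symmetrize over the hidden permutation of $\{\mu_a\}_{a=1}^K$, following the construction ideas of \cite{degenne2019pure} and \cite{katz2020true}, which is precisely why the statement only asserts the existence of \emph{some} instance $\nu$ (equivalently, some permutation) realizing the bound.
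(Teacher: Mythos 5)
Your proposal assembles several of the right ingredients (change of measure with per-arm cost $\mathbb{E}_\nu[N_a(\tau)]\cdot\Delta_{1,a}^2/2$, a high-probability forced investment in arm 1, a maximal inequality on the log-likelihood-ratio walk, and the recognition that the vanilla correctness-event transportation fails), but the central construction has a genuine gap. The alternative you choose --- raising arm $a$'s mean to $\mu_1$ while leaving arm $1$ untouched --- yields an instance $\nu^{(a)}$ in which arm $1$ is \emph{still} a correct answer, so a strategy may behave essentially identically on $\nu$ and $\nu^{(a)}$, certify arm $1$, and be $\delta$-PAC on both. Hence the PAC constraints force no event separation at all for this pair, and the quantity $kl\bigl(\Pr_\nu(\mathcal{E}),\Pr_{\nu^{(a)}}(\mathcal{E})\bigr)$ that you need to be $\Omega(\log(1/\Delta_{1,0}^2))$ can be made negligible for every event that is cheap in arm-$a$ samples; no choice of trajectory event can repair this. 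Your fallback --- that keeping arm $a$'s ``confidence statistics'' from falsely crossing over a horizon $\tau\gtrsim 1/\Delta_{1,0}^2$ charges $\Omega(\log\tau/\Delta_{1,a}^2)$ pulls --- is an upper-bound-side intuition about UCB-style algorithms: a lower bound must hold for arbitrary strategies, including ones that never form a confidence interval for arm $a$. Note also that $\nu^{(a)}$ lies only in $\mathcal{S}^{\text{pos}}_{\Delta_{0,1}}$, so the finite-expected-time clause of $(\Delta,\delta)$-PAC degenerates as $\Delta_{0,1}\to 0$ and provides no leverage there; and the permutation symmetrization you invoke at the end plays no role in this theorem (the instance $\nu$ has a fixed mean vector; the existential quantifier concerns $\bar{\Delta}_{0,1}$).

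The paper instead flips \emph{both} arms about the threshold: $\nu_a'$ puts arm $1$ at $2\mu_0-\mu_1<\mu_0$ and arm $a$ at $2\mu_0-\mu_a>\mu_0$, so $\nu_a'\in\mathcal{S}^{\text{pos}}_{\Delta_{0,a}}$ with a gap bounded away from zero \emph{independently of} $\Delta_{0,1}$ --- this is exactly where the $(\Delta,\delta)$-PAC hypothesis bites. Arguing by contradiction from $\mathbb{E}_\nu N_a(\tau)<\log(1/\Delta_{0,1}^2)/(M_1\Delta_{0,a}^2)$, it truncates at the time $\tilde{\tau}$ when $N_1$ reaches $1/\Delta_{0,1}^2$, so that the arm-$1$ transportation cost is the \emph{constant} $2$ rather than $\log(1/\delta)$ (this truncation is the key device your sketch lacks); it establishes via Lemma \ref{lemma:reproof-high-prob-instance-dependent-lower-bound-for-BAI-delta_dependent} that the event $\mathcal{G}=\{N_a(\tilde{\tau})\le \log(1/\Delta_{0,1}^2)/(M_2\Delta_{0,a}^2),\ N_1(\tilde{\tau})=1/\Delta_{0,1}^2\}$ has constant probability under $\nu$ (this is where $\delta<1/e^8$ enters), and transports it to $\nu_a'$ at multiplicative cost $\exp\bigl(-O(\log(1/\Delta_{0,1}^2)/M_2)\bigr)$, a quantity polynomially small in $\Delta_{0,1}$ by the contradiction hypothesis. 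This gives $\Pr_{\nu_a'}\bigl(N_1(\tau)\ge 1/\Delta_{0,1}^2\bigr)\ge\Omega(\Delta_{0,1})$, hence $\mathbb{E}_{\nu_a'}\tau\ge\Omega(1/\Delta_{0,1})$ by Markov, contradicting $\sup_{\nu'\in\mathcal{S}^{\text{pos}}_{\Delta_{0,a}}\cup\mathcal{S}^{\text{neg}}_{\Delta_{0,a}}}\mathbb{E}_{\nu'}\tau<+\infty$ once $\bar{\Delta}_{0,1}$ is small. In other words, the $\log(1/\Delta_{1,0}^2)$ factor is the largest arm-$a$ transportation budget compatible with a polynomially small probability transfer to the flipped instance --- not the logarithm of the horizon in an anytime concentration argument, which is the mechanism your proposal relies on and cannot be made rigorous.
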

The full proof is in Appendix \ref{sec:Lower-Bound-for-a-Suboptimal-Arm}. Theorem \ref{theorem:lower-bound-suboptimal-arm} implies the expected pulling times of unqualified arms would be impacted by the gap between the best arm and the threshold. Nevertheless, we caution that this impact might only occur when $\Delta_{1,0}$ is sufficiently small. If $\Delta_{1,0}$ is sufficiently small, it is possible $\sum_{a=2}^K\frac{\log(1/\Delta_{1,0}^2)}{\Delta_{1,a}^2} < \frac{1}{\Delta_{1,0}^2}$. Thus, the $O(\log(1/\Delta_{1,0}^2))$ term might not occur in the upper bound of $\mathbb{E}\tau$, since the upper bound of $\mathbb{E}\tau$ usually contains $O(1/\Delta_{1,0}^2)$.


\section{Numerical Experiments}
\label{sec:Numeric-Experiments}
\begin{figure}
    \centering
    \begin{subfigure}
      \centering
      \includegraphics[width=0.45\textwidth]{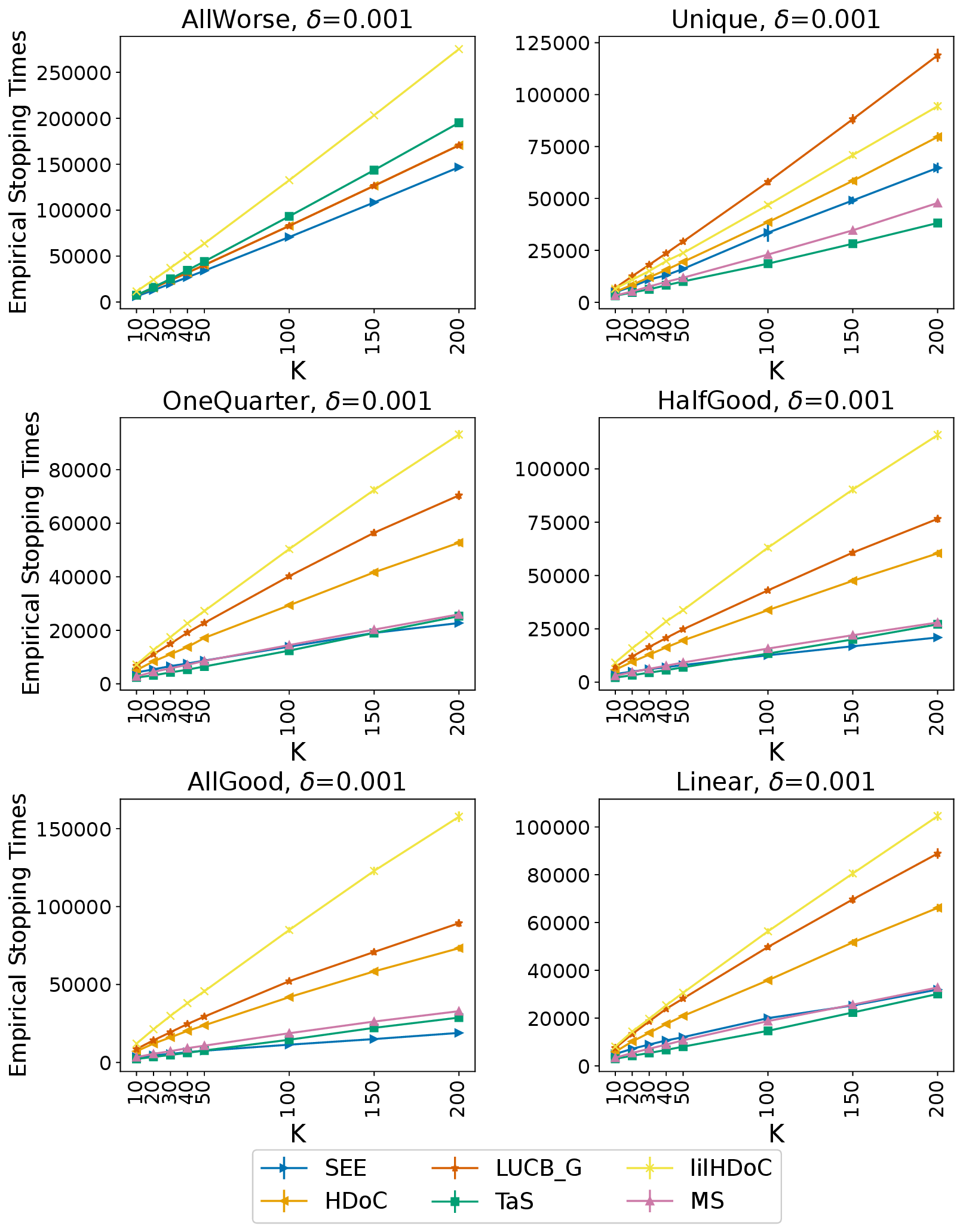}
    \end{subfigure}%
    \begin{subfigure}
      \centering
      \includegraphics[width=0.45\textwidth]{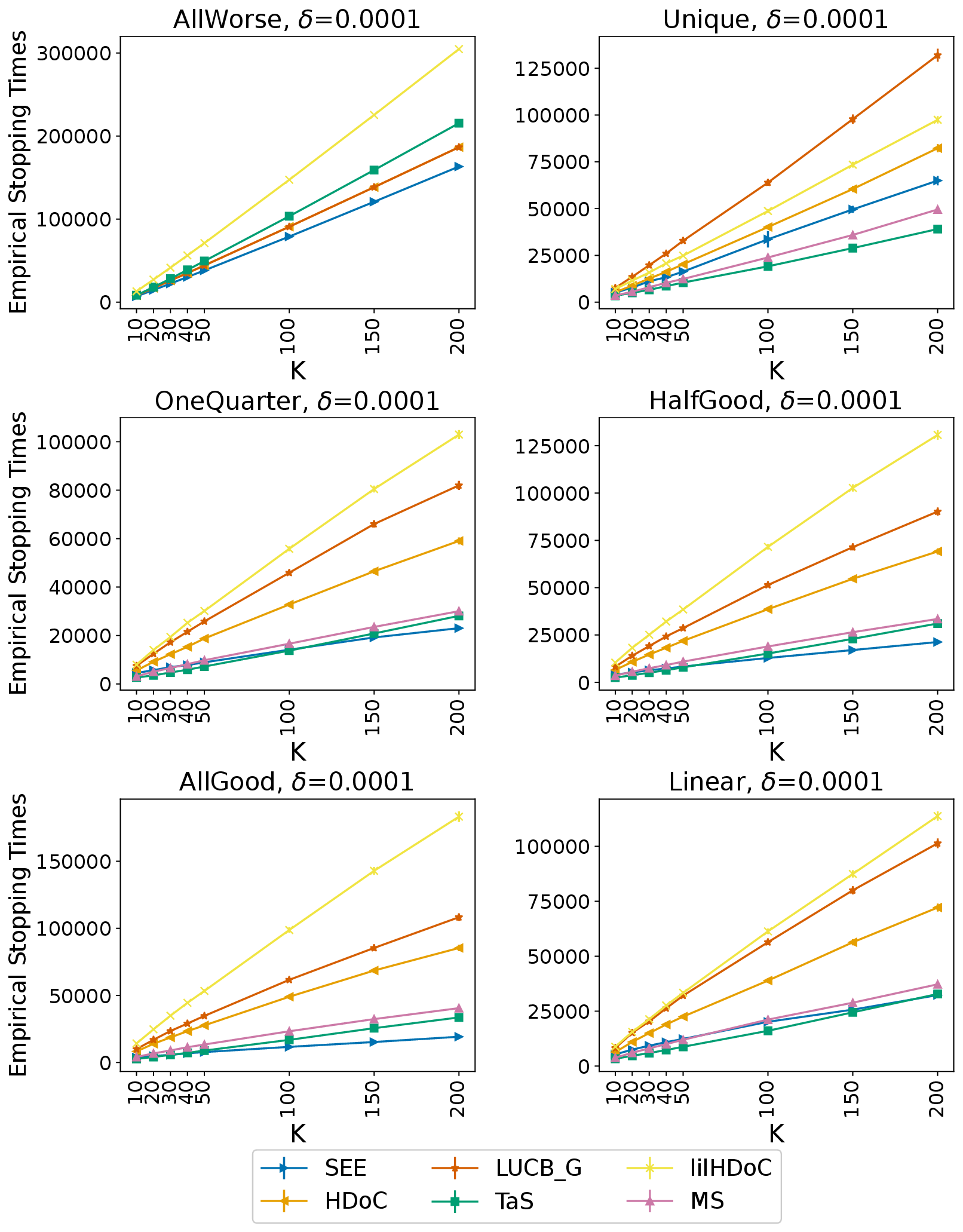}
    \end{subfigure}
    \caption{Numerical Experiments on SEE and Benchmarks}
    \label{fig:numeric-result}
\end{figure}

We conduct numerical evaluations on SEE and existing benchmark algorithms on synthetic instances. The benchmark algorithms include HDoC, LUCB\_G in \cite{kano2017Good}, lilHDoC in \cite{tsai2024lil}, Murphy Sampling (MS) in \cite{kaufmann2018sequential} and TaS in \cite{garivier2016optimal}. Algorithm MS and TaS are not originally designed for 1-identification, but we adapt these algorithms by applying the GLR stopping rule (see Lemma 2 in \cite{jourdan2023anytime}). In what follows, we call these adapted versions as adapted-MS and adapted-TaS respectively. While adapted-MS and adapted-TaS are still $\delta$-PAC, there is no non-trivial theoretical guarantees on their sample complexity bounds. 
Hence, these adapted algorithms are heuristic. In Appendix \ref{sec:Correctness-of-APGAI-algorithm}, we discuss APGAI in \cite{jourdan2023anytime} on its numerical performance, which presents a significantly different trend from others.

We do not include algorithms S-TaS \cite{degenne2019pure} and APT\_G \cite{kano2017Good} in the benchmark algorithm list, since the performance of algorithm S-TaS heavily relies on the position of the qualified arm in the positive case, as shown by \cite{jourdan2023anytime}. And \cite{kano2017Good} show that empirically APT\_G performs poorly compared to the HDoC, LUCB\_G. 

We consider six different groups of reward vectors, which are ``All Worse'', ``Unique Qualified'', ``One Quarter'', ``Half Good'', ``All Good'' and ``Linear''. The main difference among these groups is the number of qualified arms. All instances in ``All Worse'' group are negative, whose mean rewards of all arms are below $\mu_0$. Due to the low execution speed of algorithm MS on group ``All Worse'', we do not apply MS to the instance ``All Worse''. Other groups of instances are positive, meaning that there is at least one arm whose mean reward is $> \mu_0$. We set $10^8$ as our forced stopping threshold for each group of experiments. Every algorithm stops and outputs an answer at or before the number of arm pulls reaches $10^8$. The details of the numerical setting, tuning of the hyper-parameters in SEE, and other implementation details can be found in Appendix \ref{sec:Numeric-Settings}.  

Figure \ref{fig:numeric-result} illustrates the numerical results. In ``AllWorse'', our proposed algorithm SEE outperforms all the benchmarks, and the advantage of SEE becomes more obvious as $K$ increases. In ``One Quarter'', ``Half Good'', ``All Good'', ``Linear '', adapted-TaS and adapted-MS outperform SEE when $K\leq 50$. However, as $K$ increases, the leading gap of adapted-TaS and adapted-MS becomes smaller. When $K=200$, SEE outperforms adapted-TaS and adapted-MS in instances ``One Quarter'', ``Half Good'', ``All Good''. In instances ``Unique Qualified'', adapted-TaS and adapted-MS consistently have better numerical performance. 
In comparison with benchmark algorithms in \cite{kano2017Good}, our proposed SEE outperforms HDoC, lilHDoC and LUCB\_G in most of the instances, except $K=10, 20$ in the ``Unique Qualified'' group. When $\delta$ gets smaller, the above phenomenon is more pronounced. 
Altogether, SEE is competitive among all the algorithms with theoretical claims. 

The numerical performance is consistent with our theoretical analysis. As Theorem \ref{theorem:Length-See-Recycle-on-Both-Period} suggests, if we apply SEE to a positive instance, the empirical stopping times increase proportionally to $K$. However, our dependence on $K$ is better than the existing algorithms' sample complexity bounds, in the sense that the coefficient of the $\delta$  in our sample complexity bound is independent of $K$  on positive instances. This property does not hold for the sample complexity upper bounds for algorithms HDoC and lilHDoC. The above provides intuition on a larger leading gap of SEE in the case of smaller $\delta$.

In our numerical experiments, adapted-MS and adapted-TaS have good performances, 
especially in the case of a small arm number. However, from figure \ref{fig:numeric-result-proportion} in the appendix \ref{sec:supplement-figure}, it can be observed that their empirical stopping times are not monotonically decreasing as the proportion of the qualified arm increases. In contrast, SEE performs better when the proportion increases, which suggests the stability and robustness of our proposed SEE.

\section{Conclusion}
We study the 1-identification problem in the fixed confidence setting, and design a new algorithm SEE. We establish a non-asymptotic sample complexity upper bound for $\mathbb{E}\tau$ under SEE, which is nearly optimal in the negative instance and positive instance with a unique qualified arm. The superior performance of SEE is also corroborated by our numerical experiments.

\section*{Acknowledgment}
This research work is funded by a Singapore Ministry of Education AcRF Tier 2 grant (A-8003063-00-00).

\section*{Impact Statement}
This paper presents work whose goal is to advance the field of Machine Learning. There are many potential societal consequences of our work, none which we feel must be specifically highlighted here.

\bibliography{main}
\bibliographystyle{icml2025}

\newpage
\appendix
\onecolumn

\section{Additional Literature Review}

\subsection{Additional Review on \cite{katz2020true} and Others}
\label{sec:Additional-Literature-Review}
\cite{katz2020true} put forward FWER-FWPD (Algorithm 2) to solve their proposed $k$-identification problem, which is slightly different from the definition in this paper. FWER-FWPD is an anytime algorithm, returning a subset $\mathcal{R}_t\subset [K]$ at each round $t$, and never stops. The set $\mathcal{R}_t$ contains all the arms whose mean reward are believed to be greater than $\mu_0$ up to time $t$. The set $\mathcal{R}_1$ is initialized as the empty set. Algorithm FWER-FWPD sequentially adds more and more arms into $\mathcal{R}_t$. While \cite{katz2020true} do not provide explicit stopping time $\tau$ in their algorithms, in their analysis they consider another sequence of stopping times $\lambda_k=\min\{t:|\mathcal{R}_t\cap \{a:\mu_a>\mu_0\}|\geq k\}$, denoting the first round index whose $\mathcal{R}_t$ contains at least $k$ qualified arms. Since the set $\{a:\mu_a>\mu_0\}$ is unknown to the agent, Algorithm FWER-FWPD is unable to figure out the exact value of $\{\lambda_k\}$, which is called \textit{Unverifiable Stopping Time}. It is evident that $\lambda_k$ is equal to infinity, if $\{a:\mu_a>\mu_0\}=\emptyset$. In this case, the algorithm FWER-FWPD may never stop with keeping $\mathcal{R}_t=\emptyset, \forall t\in \mathbb{N}$, meaning that the Algorithm FWER-FWPD cannot handle a negative instance $\nu$. For this reason, we do not conduct a direct comparison between the FWER-FWPD and the algorithms in table \ref{table:Comparison-of-bounds}.

If we apply the Algorithm FWER-FWPD to a positive instance $\nu$, we can still derive a conclusion by adapting the upper bound of $\mathbb{E}\lambda_1$. Theorem 8 in \cite{katz2020true} guarantee that $\Pr(\exists t, \mathcal{R}_t\cap \{a:\mu_a<\mu_0\})<10\delta$. Thus, we can consider stopping time $\tau=\min\{t:\mathcal{R}_t\neq \emptyset\}$ as the termination round in our formulation. Taking $\hat{a}$ as the unique element in $\mathcal{R}_{\tau}$, we have $\Pr(\mu_{\hat{a}} < \mu_0)\leq \Pr(\exists t, \mathcal{R}_t\cap \{a:\mu_a<\mu_0\})<10\delta$. It is evident that $\tau\leq \lambda_1$. By the upper bound mentioned in Theorem 8 in \cite{katz2020true} we have
\begin{align*}
    \mathbb{E}\lambda_1\leq O\left(\left((\frac{K}{m}-1)\cdot \frac{\log\frac{1}{\delta}}{\Delta^2}+\log\frac{\frac{K}{m}}{\delta}\right)\log\left((\frac{K}{m}-1)\cdot \frac{\log\frac{1}{\delta}}{\Delta^2}+\log\frac{\frac{K}{m}}{\delta}\right)\right),
\end{align*}
The above bound can be taken as an upper bound of $\mathbb{E}\tau$ of the FWER-FWPD Algorithm in \cite{katz2020true} coupled with the stopping time $\lambda_1$. The sub-optimality of the above upper bound mainly comes from two parts. Firstly, the dependence on $\delta$ is $\log\frac{1}{\delta}\log\log\frac{1}{\delta}$ instead of the commonly seen result $\log\frac{1}{\delta}$. Then, in the asymptotic regime, i.e. $\delta\rightarrow 0$, this upper bound is larger than all the upper bounds on positive instance in table \ref{table:Comparison-of-bounds}. Secondly, the coefficient of $\log\frac{1}{\delta}\log\log\frac{1}{\delta}$ is the inverse minimum gap between qualified arms and $\mu_0$. In the case $\Delta<< \Delta_{1,0}$, the upper bound is much larger than the lower bound, the upper bound becomes loose. This theoretical upper bound remains competitive only when the number of qualified arm $m$ is proportional to $K$, and the $\delta$ is considered a constant. These are also the assumptions adopted by \cite{katz2020true}.

There are research works aiming to output all the arms better than the threshold $\mu_0$. As previously mentioned, \cite{kano2017Good} aim to output all the arm sequentially and 
its first outputting round is indeed the $\tau$. 
Besides the fixed confidence setting adopted by the above papers, \cite{locatelli2016optimal, mukherjee2017thresholding} work on the fixed budget setting, which aims to maximize the probability of correct output given a finite number of rounds.

Another related topic is $\epsilon$-Good Arm Identification. The agent needs to find all the arms \cite{mason2020finding} in $\{a:\mu_a\geq \mu_1-\epsilon\}$ or only one arm \cite{even2002pac,gabillon2012best,kalyanakrishnan2012pac,kaufmann2013information,katz2020true}. In the second case, if the gap $\Delta_{0,1}$ is known to us, then the $\epsilon-$Good Arm Identification and 1-identification are equivalent by taking $\epsilon=\Delta_{0,1}$. But it is unclear whether equivalence still holds if the largest mean reward of $\{\mu_a\}_{a=0}^K$ is unknown. Besides the fixed confidence setting, \cite{zhao2023revisiting} works on simple regret, under the fixed budget setting.

\subsection{Comments on Table \ref{table:Comparison-of-bounds}}
\label{sec:comments-on-table-of-upper-bound}
Due to the space limit, we only present imprecise upper bounds for algorithm HDoC and APGAI in the table \ref{table:Comparison-of-bounds}. Their actual upper bounds are larger, still suffering from the suboptimality discussed in the section \ref{sec:lit-review}. Following are more details.

Regarding algorithm HDoC, Theorem 3 in \cite{kano2017Good} proves for $\nu\in\mathcal{S}^{\text{pos}}$, we have
\begin{align*}
    \mathbb{E}_{\nu}\tau\leq n_1 + \sum_{a=2}^K\left(\frac{\log(K \max_{j\in[K]} n_j)}{2(\Delta_{1,a}-2\epsilon)^2}+\delta n_a\right) + \frac{K^{2-\frac{\epsilon^2}{(\min_{a\in [K]} \Delta_{0,a}-\epsilon)^2}}}{2\epsilon^2} + \frac{K(5+\log\frac{1}{2\epsilon^2})}{4\epsilon^2},
\end{align*}
where $n_a=\frac{1}{(\Delta_{0,a}-\epsilon)^2} \log\left(\frac{4\sqrt{K/\delta}}{(\Delta_{0,a}-\epsilon)^2}\log\frac{5\sqrt{K/\delta}}{(\Delta_{0,a}-\epsilon)^2}\right)$, $\epsilon=\min\Big\{\min\limits_{a\in[K]}\Delta_{0, a}, \min\limits_{a\in[K-1]}\frac{\Delta_{a, a+1}}{2}\Big\}$. It is easy to see $n_1\geq \Omega(\frac{\log K/\delta}{\Delta_{0,1}^2})$, $\frac{K(5+\log\frac{1}{2\epsilon^2})}{4\epsilon^2}\geq \Omega(\frac{K}{\epsilon^2})$. In addition, $\log(K \max_{j\in[K]} n_j)\geq \Omega(\log\log\frac{1}{\delta})$, suggesting that
\begin{align*}
    & \sum_{a=2}^K\left(\frac{\log(K \max_{j\in[K]} n_j)}{2(\Delta_{1,a}-2\epsilon)^2}\right) \\
    \geq & \Omega\left(\sum_{a=2}^K \frac{\log\log\frac{1}{\delta}}{2(\Delta_{1,a}-2\epsilon)^2}\right)\\
    \geq & \Omega\left(\sum_{a=2}^K \frac{\log\log\frac{1}{\delta}}{2\Delta_{1,a}^2}\right)\\
    \geq & \Omega(H_1 \log\log\frac{1}{\delta}).
\end{align*}
Thus, we can conclude 
\begin{align*}
     & \Omega\left(H\log\frac{K}{\delta}+H_1\log\log\frac{1}{\delta}+\frac{K}{\epsilon^2}\right)\\
     \leq & n_1 + \sum_{a=2}^K\left(\frac{\log(K \max_{j\in[K]} n_j)}{2(\Delta_{1,a}-2\epsilon)^2}+\delta n_a\right) + \frac{K^{2-\frac{\epsilon^2}{(\min_{a\in [K]} \Delta_{0,a}-\epsilon)^2}}}{2\epsilon^2} + \frac{K(5+\log\frac{1}{2\epsilon^2})}{4\epsilon^2},
\end{align*}
and the actual upper bound for $\nu\in\mathcal{S}^{\text{pos}}$ still suffers from the suboptimality from two perspectives mentioned in the section \ref{sec:lit-review}. The term $\frac{1}{\epsilon^2}$ implies the upper bound can be infinity. And the term $H_1\log\log (1/\delta)$ suggests there is still gap between the upper and lower bounds.

Regarding algorithm APGAI, \cite{jourdan2023anytime} do not explicitly provide upper bound for $\mathbb{E}\tau$. Therorem 2 in \cite{jourdan2023anytime} suggests
\begin{align*}
    \mathbb{E}_{\nu}\tau & \leq C^{\text{pos}}(\delta) + \frac{K\pi^2}{6} + 1, \nu\in\mathcal{S}^{\text{pos}} \\
    \mathbb{E}_{\nu}\tau & \leq C^{\text{neg}}(\delta) + \frac{K\pi^2}{6} + 1, \nu\in\mathcal{S}^{\text{neg}},
\end{align*}
with definition
\begin{align*}
    C^{\text{pos}}(\delta) = & \sup\left\{t: t\leq \left(\sum_{a:\mu_a\geq \mu_0}\frac{2}{\Delta_{0,a}^2}\right) (\sqrt{c(t,\delta)}+\sqrt{3\log t})^2 + D^{\text{pos}}(\{\mu_a\}_{a=1}^K)\right\}\\
    C^{\text{neg}}(\delta) = & \sup\left\{t: t\leq \left(\sum_{a=1}^K\frac{2}{\Delta_{0,a}^2}\right) (\sqrt{c(t,\delta)}+\sqrt{3\log t})^2 + D^{\text{neg}}(\{\mu_a\}_{a=1}^K)\right\}\\
    2c(t,\delta) = & \bar{W}_{-1}\left(2\log \frac{K}{\delta}+4\log\log (e^4t) + \frac{1}{2}\right)\\
    \bar{W}_{-1}(x) = & -W_{-1}(-e^{-x})\approx x+\log x.
\end{align*}
$W_{-1}$ is the negative branch of the Lambert W function. And $D^{\text{pos}}(\{\mu_a\}_{a=1}^K)$, $D^{\text{neg}}(\{\mu_a\}_{a=1}^K)$ are further defined in the appendix F in \cite{jourdan2023anytime}. From the above definition, we know $c(t,\delta)\geq \Omega(\log\frac{K}{\delta})$, suggesting that $(\sqrt{c(t,\delta)}+\sqrt{3\log t})^2\geq \Omega(\log\frac{K}{\delta})$. Further, we can conclude $C^{\text{pos}}(\delta) \geq \Omega(H_0(\log\frac{K}{\delta}))$ and $C^{\text{pos}}(\delta) \geq \Omega(H_1^{\text{neg}}(\log\frac{K}{\delta}))$. Thus, if there exists $a\in[K], \mu_a=\mu_0$ for an instance $\nu\in \mathcal{S}^{\text{pos}}$, $C^{\text{pos}}(\delta)=\infty$, resulting in a vacuous upper bound.

\subsection{lilHDoC, HDoC, APGAI are not $(\Delta,\delta)$-PAC}
\label{sec:Benchmark-are-not-Delta_delta_PAC}
In this subsection, we are going to show lilHDoC in \cite{tsai2024lil}, HDoC in \cite{kano2017Good}, APGAI in \cite{jourdan2023anytime} are not $(\Delta,\delta)$-PAC. To do this, we suffice to construct an instance $\nu\in\mathcal{S}^{\text{pos}}$, such that $\mathbb{E}_{\nu}\tau=+\infty$ holds for all these three algorithms.

Consider a two-arm instance $\nu$ with $\mu_1>\mu_0=\mu_2$. Arm 1 follows unit variance Gaussian and arm 2 returns constant reward. Easy to validate $\nu\in\mathcal{S}^{\text{pos}}$.

Algorithm lilHDoC applies uniform sampling on 2 arms, by pulling each arm $T$ times. With non-zero prob p, $\text{UCB}_1(t) < \mu_0$ holds for all $t\leq T$. Then, arm 1 will get removed from the arm set, so as algorithm HDoC. In this case, both lilHDoC and HDoC will keep pulling arm 2 without termination. In other words, we find an event with positive probability, such that $\tau=+\infty$. We can conclude $\mathbb{E}_{\nu,\text{HDoC}}\tau=+\infty$, $\mathbb{E}_{\nu,\text{lilHDoC}}\tau=+\infty$.

The same idea also applies to the algorithm APGAI. With non-zero probability, the first collected sample from arm 1 is below $\mu_0$. Conditioned on this event, APGAI will keep pulling arm 2 without a termination, meaning that $\mathbb{E}_{\nu,\text{APGAI}}\tau=+\infty$.

\section{Analysis of Algorithm \ref{alg:SEE-lil-Recycle-on-Both-Period}}
\subsection{Selection Rules of $C, \{\delta_k,\alpha_k,\beta_k\}_{k=1}^{+\infty}$}
\label{sec:selection-rule-of-delta-alpha-beta}
Regarding $C$, any $C>1$ is available for the algorithm analysis. And $C$ will only impact the constant coefficient in the upper bound mentioned in Theorem \ref{alg:SEE-lil-Recycle-on-Both-Period}, which is ignored as $C$ is predetermined.

Regarding $\{\delta_k,\alpha_k,\beta_k\}_{k=1}^{+\infty}$, there are some restrictions on the decreasing/increasing speed of the sequence. Generally speaking, we require $\{\delta_k\}_{k=1}^{+\infty}$ is a decreasing sequence with limit zero and $\{\alpha_k,\beta_k\}_{k=1}^{+\infty}$ are increasing sequences with limit infinity. And they should fulfill following properties
\begin{align*}
    & \beta_{k+1}\geq 2\beta_k, \beta_{k+1}\log\frac{1}{\delta_{k+1}}\geq 2\beta_k\log\frac{1}{\delta_k}\\
    & \sum_{k=1}^{+\infty}\delta_{k-1}\beta_k\log\frac{\alpha_k}{\delta_k}<+\infty\\
    & \sum_{k=1}^{+\infty}\frac{1}{\alpha_{k-1}}\beta_k\log\frac{\alpha_k}{\delta_k}<+\infty\\
    & \sum_{k=1}^{+\infty}\frac{1}{\alpha_k}\leq \frac{1}{4}.
\end{align*}
With out loss of generality, we can take $\delta_0=\beta_0=\alpha_0=1$. Considering all these requirements, taking $\delta_k=\frac{1}{3^k},\beta_k=2^k,\alpha_k=5^k$ is a suitable choice. It is also possible to find other sequences.

\subsection{Proof of Main Theorems}
\label{sec:appendix-proof-of-main-theorem-SEE}
Before illustrating the proof of Theorem \ref{theorem:Correctness-for-See-Recycle-on-Both-Period} and \ref{theorem:Length-See-Recycle-on-Both-Period}, we need some preparations. Section \ref{sec:sec:appendix-proof-of-supplement-SEE} includes all the required lemmas to complement the proof. 

We firstly prove Theorem \ref{theorem:Correctness-for-See-Recycle-on-Both-Period}, which asserts Algorithm \ref{alg:SEE-lil-Recycle-on-Both-Period} is $\delta$-PAC. Recall the definition of of $\delta$-PAC, we need to prove $\Pr_{\nu}(\tau<+\infty, \hat{a}\in i^*(\nu))>1-\delta$ holds for any $\delta\in(0, 1)$ and $\nu\in \mathcal{S}^{\text{pos}}\cup\mathcal{S}^{\text{neg}}$. Here we split the proof into two steps. The first part is to show $\Pr(\tau<+\infty)=1$, which is guaranteed by Lemma \ref{Lemma:Finite-tau-SEE-lil-Recycle}. Then, given the first step, we just need to show $\Pr_{\nu}(\hat{a}\in i^*(\nu))>1-\delta$. Equivalently, suffice to prove $\Pr_{\nu}(\hat{a}\notin i^*(\nu))<\delta$ holds for any $\delta\in(0, 1)$ and $\nu\in \mathcal{S}^{\text{pos}}\cup\mathcal{S}^{\text{neg}}$.
\begin{proof}[Proof of Theorem \ref{theorem:Correctness-for-See-Recycle-on-Both-Period}]
    By the Lemma \ref{Lemma:Finite-tau-SEE-lil-Recycle}, we know $\tau<+\infty$ with certainty. Thus, $\hat{a}\in [K]\cup\{\text{None}\}$ in Algorithm \ref{alg:SEE-lil-Recycle-on-Both-Period} is well defined with certainty.

    For positive instance $\nu$,
    \begin{align*}
        & \Pr(\hat{a}\notin i^*(\nu))\\
        \leq & \Pr(\hat{a}=\textsf{None}) + \Pr(\hat{a}\in [K], \mu_{\hat{a}}<\mu_0)\\
        \leq & \Pr\left(\exists t\in\mathbb{N}, \exists \delta_k<\frac{\delta}{3},  \frac{\sum_{s=1}^t X_{1,s}^{\text{ee}}}{t} + U(t, \frac{\delta_k}{K})<\mu_0\right) + \\
        & \Pr\left(\exists t\in\mathbb{N}, \exists a\in [K], \mu_a<\mu_0,  \frac{\sum_{s=1}^t X_{a,s}^{\text{et}}}{t} - U(t, \frac{\delta}{K\alpha_1})>\mu_0\right)\\
        \leq & \Pr\left(\exists t\in\mathbb{N},  \frac{\sum_{s=1}^t X_{1,s}^{\text{ee}}}{t} + U(t, \frac{\delta}{3K})<\mu_0\right) + \\
        & \Pr\left(\exists t\in\mathbb{N}, \exists a\in [K], \mu_a<\mu_0,  \frac{\sum_{s=1}^t X_{1,s}^{\text{et}}}{t} - U(t, \frac{\delta}{K\alpha_1})>\mu_a\right)\\
        \leq & \frac{\pi^2}{6}(\frac{\delta}{3}+\frac{\delta}{5}) < \delta.
    \end{align*}

    For negative instance $\nu$,
    \begin{align*}
        & \Pr(\hat{a}\notin i^*(\nu))\\
        = & \Pr(\hat{a}\in [K])\\
        \leq & \Pr\left(\exists t\in\mathbb{N}, \exists a\in [K], \mu_a<\mu_0,  \frac{\sum_{s=1}^t X_{a,s}^{\text{et}}}{t} - U(t, \frac{\delta}{K\alpha_1})>\mu_0\right)\\
        \leq & \frac{\delta}{5} \frac{\pi^2}{6} < \delta.
    \end{align*}
\end{proof}

To prove Theorem \ref{theorem:Length-See-Recycle-on-Both-Period}, i.e the upper bound of $\mathbb{E}\tau$, we need more preparations. As the sketch proof in section \ref{sec:main} illustrated, the most important step is to find the upper bound of phase index such that that the Algorithm \ref{alg:SEE-Exploration-UCB} starts to output correct answer with an appropriate upper bound of $\tau_k^{\text{ee}}$. 

Since the proof is too long, we split the proof of Theorem \ref{theorem:Length-See-Recycle-on-Both-Period} into two parts. In the first part, we prove the upper bound $\mathbb{E}\tau\leq O\left(\frac{\log\frac{1}{\delta}}{\Delta_{0,1}^2} +(\log\frac{K}{\Delta_{0,1}^2})\cdot H_1^{\text{pos}} \right)$, for the case $\nu\in\mathcal{S}^{\text{pos}}$. In the second part, we prove the upper bound $\mathbb{E}\tau\leq O(H_1^{\text{neg}}(\log\frac{1}{\delta}+\log H_1^{\text{neg}}))$, for the case $\nu\in\mathcal{S}^{\text{neg}}$. Lemma \ref{lemma:Nature-of-UCB-Rule-Positive}, \ref{lemma:phase-index-that-can-avoid-forced-termination-exploration-positive} and \ref{lemma:length-exploration-period-after-good-event-positive} are required  for the case $\nu\in\mathcal{S}^{\text{pos}}$. And Lemma \ref{lemma:Nature-of-UCB-Rule-Negative}, \ref{lemma:phase-index-that-can-avoid-forced-termination-exploration-negative} and \ref{lemma:length-exploration-period-after-good-event-negative} are required for the case $\nu\in\mathcal{S}^{\text{neg}}$.
\begin{proof}[Proof of Theorem \ref{theorem:Length-See-Recycle-on-Both-Period}, positive case]

    Denote $L'=\lceil\log_2\frac{24(C+1)^2H_1^{\text{pos}}}{K}\rceil, L''=\lceil\log_2 \frac{192(C+1)^2}{\omega^2\Delta_{0,1}^2}\rceil$, where $\omega=\frac{C-1}{C+3}$. Easy to see $L'\leq L''$. We split $\tau=\tau^{\text{ee}}+\tau^{\text{et}}$ and derive an upper bound for $\mathbb{E}\tau^{\text{ee}}$ and $\mathbb{E}\tau^{\text{et}}$ separately.

    Since we take $\beta_k=2^k, \delta_k=\frac{1}{3^k}$, we have
    \begin{align*}
        \beta_{L'} \leq & 2^{\log_2\frac{24(C+1)^2H_1^{\text{pos}}}{K} + 1} =2 \cdot \frac{24(C+1)^2H_1^{\text{pos}}}{K}= O\left(\frac{H_1^{\text{pos}}}{K}\right)\\
        \log \frac{1}{\delta_{L'}} \leq & \log \left(3^{\log_2\frac{24(C+1)^2H_1^{\text{pos}}}{K} + 1}\right) =\log 3 \cdot \left(\log_2\frac{24(C+1)^2H_1^{\text{pos}}}{K} + 1\right)= O\left(\log \frac{H_1^{\text{pos}}}{K}\right)\\
        \beta_{L''} \leq & 2^{1+\log_2 \frac{192(C+1)^2}{\omega^2\Delta_{0,1}^2}} \leq 2\cdot \frac{192(C+1)^2}{\omega^2\Delta_{0,1}^2}=O\left(\frac{1}{\Delta_{0,1}^2}\right)\\
        \log\frac{1}{\delta_{L''}}\leq & \log\left(3^{1+\log_2 \frac{192(C+1)^2}{\omega^2\Delta_{0,1}^2}}\right)=\log 3\left(1+\log_2 \frac{192(C+1)^2}{\omega^2\Delta_{0,1}^2}\right) =O\left(\log \frac{1}{\Delta_{0,1}^2}\right)
    \end{align*}
    We ignore constant $C$ as it is a predetermined constant in the Algorithm \ref{alg:SEE-lil-Recycle-on-Both-Period}. 
    
    By the Lemma \ref{lemma:Required-Scale-of-Beta-k}, \ref{lemma:length-exploration-period-after-good-event-positive}, we know the exploration period after phase $\max\{\kappa^{\text{ee}}, L'\}$ will always output $\hat{a}\in [K]$, $\mu_{\hat{a}} > \omega\mu_1+(1-\omega)\mu_0$, and the forced termination at line \ref{alg-line:Exploration-Forced-Stop} never triggers. By the Lemma \ref{lemma:Exploitation-Correctness-SEE-recycle}, we know the exploitation period will return ``Qualified'' after phase $\max\{\kappa^{\text{et}}, L''\}$, conditioned on the event $\mu_{\hat{a}_k} > \omega\mu_1+(1-\omega)\mu_0$. In summary, the algorithm must terminate no late than the end of phase $\max\{\kappa^{\text{ee}},\kappa^{\text{et}}, L', L''\}$. By the Lemma \ref{lemma:length-exploration-period-after-good-event-positive} we can conclude
    \begin{align*}
        \tau^{\text{ee}} \leq & O\left(K\beta_{\max\{\kappa^{\text{ee}}, L'\}}\log\frac{4K}{\delta_{\max\{\kappa^{\text{ee}}, L'\}}}\right) + \\
        & O\left(\frac{\log\frac{K}{\delta_{\max\{\kappa^{\text{ee}},\kappa^{\text{et}}, L', L''\}}}+\log\log\frac{1}{\Delta_{0,1}^2} }{\Delta_{0,1}^2} + \sum_{a=2}^K \frac{\log\frac{K}{\delta_{\max\{\kappa^{\text{ee}},\kappa^{\text{et}}, L', L''\}} }+\log\log\frac{1}{\max\{\Delta_{1,a}^2, \Delta_{0,a}^2\}}}{\max\{\Delta_{1,a}^2, \Delta_{0,a}^2\}}\right)\\
        \leq & O\left(H_1^{\text{pos}}\log H_1^{\text{pos}}\right) + O\left(K\beta_{\kappa^{\text{ee}}}\log\frac{4K}{\delta_{\kappa^{\text{ee}}}}\right) + O\left(K\beta_{L'}\log\frac{4K}{\delta_{L'}}\right)+\\
        & O\left(H_1^{\text{pos}}\log\frac{1}{\delta_{\kappa^{\text{ee}}}}\right) + O\left(H_1^{\text{pos}}\log\frac{1}{\delta_{\kappa^{\text{et}}}}\right) +
        O\left(H_1^{\text{pos}}\log\frac{1}{\delta_{L'}}\right)+O\left(H_1^{\text{pos}}\log\frac{1}{\delta_{L''}}\right)\\
        \leq &  O\left(H_1^{\text{pos}}\log\frac{K}{\Delta_{0,1}^2}\right) + O\left(H_1^{\text{pos}}\log\frac{1}{\delta_{\kappa^{\text{ee}}}}\right) + O\left(H_1^{\text{pos}}\log\frac{1}{\delta_{\kappa^{\text{et}}}}\right) + O\left(K\beta_{\kappa^{\text{ee}}}\log\frac{4K}{\delta_{\kappa^{\text{ee}}}}\right).
    \end{align*}
    Take expectation on both side, we have
    \begin{align*}
        \mathbb{E}\beta_{\kappa^{\text{ee}}}\log\frac{4K}{\delta_{\kappa^{\text{ee}}}}\leq & \sum_{k=1}^{+\infty}\beta_{k}\log\frac{4K}{\delta_{k}}\Pr(\kappa^{\text{ee}}=k)\\
        \leq & \sum_{k=1}^{+\infty}\frac{\pi^2\delta_{k-1}}{6}\cdot\beta_{k}\log\frac{4K}{\delta_{k}}\\
        = & O(\log K).
    \end{align*}
    \begin{align*}
        \mathbb{E}\log\frac{1}{\delta_{\kappa^{\text{ee}}}}\leq & \sum_{k=1}^{+\infty}\log\frac{1}{\delta_{\kappa^{\text{ee}}}}\Pr(\kappa^{\text{ee}}=k)\\
        \leq & \frac{\pi^2}{6}\sum_{k=1}^{+\infty}\delta_{k-1}\log\frac{1}{\delta_{\kappa^{k}}}\\
        = & O(1).
    \end{align*}
    \begin{align*}
        \mathbb{E}\log\frac{1}{\delta_{\kappa^{\text{et}}}}\leq & \sum_{k=1}^{+\infty}\log\frac{1}{\delta_{\kappa^{\text{et}}}}\Pr(\kappa^{\text{et}}=k)\\
        \leq & \frac{\pi^2\delta}{6}\sum_{k=1}^{+\infty}\frac{1}{\alpha_{k-1}}\log\frac{1}{\delta_{\kappa^{k}}}\\
        = & O(1).
    \end{align*}
    Thus, we can conclude $\mathbb{E}\tau^{\text{ee}}\leq O\left(H_1^{\text{pos}}\log\frac{K}{\Delta_{0,1}^2}\right)$.
    
    The remaining work is to prove an upper bound for $\mathbb{E}\tau^{\text{et}}$. Recall we take $\beta_k=2^k,\alpha_k=5^k$. It is not hard to see for any integer $N\in\mathbb{N}$, we have $\beta_{N+1}\log\frac{4K\alpha_{N+1}}{\delta}\geq 2 \beta_{N}\log\frac{4K\alpha_{N}}{\delta}$ and $\sum_{n=1}^N\beta_n\log\frac{4K\alpha_n}{\delta}\leq \beta_{N+1}\log\frac{4K\alpha_{N+1}}{\delta}$. Thus, we can conclude
    \begin{align*}
        \tau^{\text{et}}= & \sum_{k=1}^{\{\kappa^{\text{ee}},\kappa^{\text{et}},  L''\}}1000\beta_k\log\frac{4K\alpha_k}{\delta}\\
        \leq & 2000\beta_{\max\{\kappa^{\text{ee}},\kappa^{\text{et}},  L''\}+1}\log\frac{4K\alpha_{\max\{\kappa^{\text{ee}},\kappa^{\text{et}},  L''\}+1}}{\delta}\\
        \leq & O\left(\beta_{\kappa^{\text{ee}}}\log\frac{4K\alpha_{\kappa^{\text{ee}}} }{\delta}+\beta_{\kappa^{\text{et}}}\log\frac{4K\alpha_{\kappa^{\text{et}}} }{\delta}+\beta_{L''}\log\frac{4K\alpha_{L''} }{\delta}\right)
    \end{align*}
    holds with certainty. From the definition, we know $\beta_{L''}, \alpha_{L''}\leq O\left(\frac{1}{\Delta_{0,1}^2}\right)$. Thus, we can conclude
    \begin{align*}
        \tau^{\text{et}}\leq O\left(\beta_{\kappa^{\text{ee}}}\log\frac{4K\alpha_{\kappa^{\text{ee}}} }{\delta}+\beta_{\kappa^{\text{et}}}\log\frac{4K\alpha_{\kappa^{\text{et}}} }{\delta}\right) + O\left(\frac{\log\frac{K}{\delta}+\log \frac{1}{\Delta_{0,1}^2} }{\Delta_{0,1}^2}\right).
    \end{align*}
    Take Expectation on both side, we get
    \begin{align*}
        \mathbb{E}\beta_{\kappa^{\text{ee}}}\log\frac{4K\alpha_{\kappa^{\text{ee}}} }{\delta}\leq & \sum_{k=1}^{+\infty}\beta_{k}\log\frac{4K\alpha_{k} }{\delta}\Pr(\kappa^{\text{ee}}=k)\\
        \leq & \sum_{k=1}^{+\infty}\frac{\pi^2\delta_{k-1}}{6}\cdot\beta_{k}\log\frac{4K\alpha_{k}}{\delta}\\
        \leq & O(\log\frac{K}{\delta}).
    \end{align*}
    \begin{align*}
        \mathbb{E}\beta_{\kappa^{\text{et}}}\log\frac{4K\alpha_{\kappa^{\text{et}}} }{\delta}\leq & \sum_{k=1}^{+\infty}\beta_{k}\log\frac{4K\alpha_{k} }{\delta}\Pr(\kappa^{\text{et}}=k)\\
        \leq & \sum_{k=1}^{+\infty}\frac{\pi^2\delta}{6\alpha_{k-1}}\cdot\beta_{k}\log\frac{4K\alpha_{k}}{\delta}\\
        \leq & O(\log K).
    \end{align*}
    In summary, we have $\mathbb{E}\tau^{\text{et}}\leq O\left(\frac{\log\frac{K}{\delta}+\log \frac{1}{\Delta_{0,1}^2} }{\Delta_{0,1}^2}\right)$.
    
    Combining the following upper bounds,
    \begin{align*}
        \mathbb{E}\tau^{\text{et}}\leq & O\left(\frac{\log\frac{K}{\delta}+\log \frac{1}{\Delta_{0,1}^2} }{\Delta_{0,1}^2}\right),\\
        \mathbb{E}\tau^{\text{ee}}\leq & O\left(H_1^{\text{pos}}\log\frac{K}{\Delta_{0,1}^2}\right),
    \end{align*}
    we have proved the positive part of theorem.
\end{proof}

The above completes the proof of inequality $\mathbb{E}\tau\leq O\left(\frac{\log\frac{1}{\delta}}{\Delta_{0,1}^2} +(\log\frac{K}{\Delta_{0,1}^2})\cdot H_1^{\text{pos}} \right)$, for the case $\nu\in\mathcal{S}^{\text{pos}}$. The following is going to prove another inequality $\mathbb{E}\tau\leq O(H_1^{\text{neg}}(\log\frac{1}{\delta}+\log H_1^{\text{neg}})), \nu\in\mathcal{S}^{\text{neg}}$ in Theorem \ref{theorem:Length-See-Recycle-on-Both-Period}.
\begin{proof}[Proof of Theorem \ref{theorem:Length-See-Recycle-on-Both-Period}, negative case]
    Define $\tilde{L}'=\lceil\log_2 \frac{\sum_{a=1}^K\frac{24}{\Delta_{0,a}^2}}{K}\rceil, \tilde{L}''=\min\{k:\delta_k < \frac{\delta}{3}\}=\lceil\log_3 \frac{3}{\delta}\rceil$. We split $\tau=\tau^{\text{ee}}+\tau^{\text{et}}$ and derive an upper bound for $\mathbb{E}\tau^{\text{ee}}$ and $\mathbb{E}\tau^{\text{et}}$ separately.
    
    By the Lemma \ref{lemma:length-exploration-period-after-good-event-negative} and \ref{lemma:Nature-of-UCB-Rule-Negative}, we know
    \begin{align*}
        \tau^{\text{ee}}\leq &  O\left(K\beta_{\max\{\tilde{L}', \kappa^{\text{ee}}\} }\log\frac{4K}{\delta_{\max\{\tilde{L}', \kappa^{\text{ee}}\}}}\right)+O\left(\sum_{a=1}^K\frac{\log\frac{K}{\delta_{\max\{\tilde{L}', \tilde{L}'', \kappa^{\text{ee}}\}} }+\log\log\frac{1}{\Delta_{0,a}^2}}{\Delta_{0,a}^2}\right)\\
        \leq & O\left(K\beta_{\tilde{L}' }\log\frac{4K}{\delta_{\tilde{L}'}}\right) + O\left(K\beta_{\kappa^{\text{ee}} }\log\frac{4K}{\delta_{\kappa^{\text{ee}}}}\right)+ O(H_1^{\text{neg}}\log H_1^{\text{neg}} )+\\
        & O\left(H_1^{\text{neg}}(\log\frac{1}{\delta_{\tilde{L}'}} + \log\frac{1}{\delta_{\tilde{L}''}} + \log\frac{1}{\delta_{\kappa^{\text{ee}}}})\right).
    \end{align*}
    Similar to the proof of positive part, we can take expectation on both side and it is not hard to see
    \begin{align*}
        O\left(K\beta_{\tilde{L}' }\log\frac{4K}{\delta_{\tilde{L}'}}\right)=&O\left(H_1^{\text{neg}}\log H_1^{\text{neg}} \right)\\
        \mathbb{E}K\beta_{\kappa^{\text{ee}} }\log\frac{4K}{\delta_{\kappa^{\text{ee}}}}\leq & \sum_{k=1}^{+\infty}\frac{\pi^2\delta_{k-1}}{6}K\beta_{k }\log\frac{4K}{\delta_{k}}\leq O(K)\\
        O(H_1^{\text{neg}}\log\frac{1}{\delta_{\tilde{L}'}})\leq & O\left(H_1^{\text{neg}}\log H_1^{\text{neg}} \right)\\
        O(H_1^{\text{neg}}\log\frac{1}{\delta_{\tilde{L}''}})\leq & O\left(H_1^{\text{neg}}\log \frac{1}{\delta} \right)\\
        \mathbb{E}H_1^{\text{neg}}\log\frac{1}{\delta_{\kappa^{\text{ee}}}}\leq & \sum_{k=1}^{+\infty}\frac{\pi^2\delta_{k-1}}{6}H_1^{\text{neg}}\log\frac{1}{\delta_{k}}\leq O(H_1^{\text{neg}}).
    \end{align*}
    In summary, we can conclude $\mathbb{E}\tau^{\text{ee}}\leq O\left(H_1^{\text{neg}}(\log\frac{1}{\delta}+H_1^{\text{neg}})\right)$.

    For the upper bound of $\mathbb{E}\tau^{\text{et}}$, the idea is similar. From the Lemma \ref{lemma:length-exploration-period-after-good-event-negative} and \ref{lemma:Nature-of-UCB-Rule-Negative}, we know the exploration oracle would alway output $\hat{a}_k=\textsf{None}$ for phase index $k\geq \max\{\kappa^{\text{ee}}, \tilde{L}'\}$, and we can conclude
    \begin{align*}
        \tau^{\text{et}}\leq & \sum_{k=1}^{\max\{\kappa^{\text{ee}}, \tilde{L}'\}}1000\beta_k\log\frac{4K\alpha_k}{\delta}\\
        \leq & 2000\beta_{\max\{\kappa^{\text{ee}}, \tilde{L}'\} + 1}\log\frac{4K\alpha_{\max\{\kappa^{\text{ee}}, \tilde{L}'\}+1}}{\delta}\\
        \leq & O\left(\beta_{\kappa^{\text{ee}}+1}\log\frac{4K\alpha_{\kappa^{\text{ee}}+1}}{\delta}\right) + O\left(\beta_{\tilde{L}'+1}\log\frac{4K\alpha_{\tilde{L}'+1}}{\delta}\right).
    \end{align*}
    Apply the same calculation as above, we can assert $\mathbb{E}\tau^{\text{et}}\leq O\left(\log\frac{K}{\delta}+\frac{H_1^{\text{neg}}\log H_1^{\text{neg}}}{K}\right)$.

    Combining the upper bounds, we get $\mathbb{E}\tau\leq O\left(H_1^{\text{neg}}(\log\frac{1}{\delta}+H_1^{\text{neg}})\right)$.
\end{proof}

\subsection{Supplement Lemmas}
\label{sec:sec:appendix-proof-of-supplement-SEE}
In this section, we illustrate more on the supplement lemmas of the main Theorems \ref{theorem:Correctness-for-See-Recycle-on-Both-Period} and \ref{theorem:Length-See-Recycle-on-Both-Period}. The first lemma is about the upper bound the smallest round index $k$, such that the Algorithm SEE does not receive $\hat{a}_k=\textsf{Not Complete}$, conditioned on the concentration inequality.
\begin{lemma}
    \label{lemma:Required-Scale-of-Beta-k}
    Denote 
    \begin{align*}
        \tilde{T}^{\text{pos}}_k:= & \frac{113(\log\frac{2K\alpha_k}{\delta} + \log\log\frac{96}{\omega^2\Delta_{0,1}^2})}{\omega^2\Delta_{0,1}^2}\\
        \bar{T}^{\text{pos}}_k:= & \frac{113(C+1)^2(\log\frac{2K}{\delta_k}+\log\log\frac{96(C+1)^2}{\Delta_{0,1}^2}) }{\Delta_{0,1}^2} + \sum_{a=2}^K \frac{113(C+1)^2(\log\frac{K}{\delta}+\log\log\frac{96(C+1)^2}{\max\{\Delta_{1,a}^2, \Delta_{0,a}^2\} })}{\max\{\Delta_{1,a}^2, \Delta_{0,a}^2\}}\\
        T^{\text{neg}}_k:= & \sum_{a=1}^{K}\frac{113(\log\frac{2K}{\delta_k} + \log\log \frac{96}{\Delta_{0,a}^2})}{\Delta_{0,a}^2},
    \end{align*}
    where $\omega\in(0,1)$. Define
    \begin{align*}
        L^{\text{pos}}_{\text{ee}} = & \min\left\{k:\frac{T_k^{\text{ee}}}{2}\geq \bar{T}^{\text{pos}}_k\right\},\\
        L^{\text{pos}}_{\text{et}} = & \min\left\{k:T_k^{\text{et}}\geq \tilde{T}^{\text{pos}}_k\right\},\\
        L^{\text{neg}} = & \min\left\{k:\frac{T_k^{\text{ee}}}{2}\geq T^{\text{neg}}_k\right\}.
    \end{align*}
    We have $L^{\text{pos}}_{\text{ee}}\leq \lceil\log_2\frac{24(C+1)^2H_1^{\text{pos}}}{K}\rceil$, $L^{\text{pos}}_{\text{et}}\leq \lceil\log_2 \frac{192(C+1)^2}{\omega^2\Delta_{0,1}^2}\rceil$, $L^{\text{neg}}\leq \lceil\log_2 \frac{\sum_{a=1}^K\frac{24}{\Delta_{0,a}^2}}{K}\rceil$
\end{lemma}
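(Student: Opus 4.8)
The plan is to prove all three inequalities by the same device. Each $L$ is defined as the minimum phase index $k$ satisfying a threshold inequality, so to bound $L$ from above it suffices to exhibit a single admissible $k$ — namely the claimed ceiling — and verify the defining inequality there; no monotonicity argument is required, only feasibility at one point. Write $k^\star=\lceil\log_2\frac{24(C+1)^2H_1^{\text{pos}}}{K}\rceil$ for the first claim, $k^\dagger=\lceil\log_2\frac{192(C+1)^2}{\omega^2\Delta_{0,1}^2}\rceil$ for the second, and $k^\circ=\lceil\log_2\frac{\sum_{a=1}^K 24/\Delta_{0,a}^2}{K}\rceil$ for the third. Proving $\frac{T_{k^\star}^{\text{ee}}}{2}\geq\bar T^{\text{pos}}_{k^\star}$, $T_{k^\dagger}^{\text{et}}\geq\tilde T^{\text{pos}}_{k^\dagger}$, and $\frac{T_{k^\circ}^{\text{ee}}}{2}\geq T^{\text{neg}}_{k^\circ}$ is then exactly what is needed.

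First I would exploit the schedule $\beta_k=2^k$ to convert the ceiling into a clean lower bound on the left-hand side. At $k=k^\star$ the ceiling gives $\beta_{k^\star}=2^{k^\star}\geq\frac{24(C+1)^2H_1^{\text{pos}}}{K}$, so $\frac{T_{k^\star}^{\text{ee}}}{2}=500(C+1)^2K\beta_{k^\star}\log\frac{4K}{\delta_{k^\star}}$ is bounded below by $12000(C+1)^4H_1^{\text{pos}}\log\frac{4K}{\delta_{k^\star}}$. The analogous steps for $k^\dagger$ (using $T_k^{\text{et}}=1000\beta_k\log\frac{4\alpha_kK}{\delta}$) and for $k^\circ$ are identical in spirit, producing left-hand-side lower bounds proportional to $\frac{1}{\omega^2\Delta_{0,1}^2}$ and to $H_1^{\text{neg}}$ respectively, each carrying a large absolute constant.

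Next I would upper bound the right-hand side by pulling out the complexity factor and a single worst-case logarithmic term. Using $\frac{1}{\Delta_{0,1}^2}+\sum_{a=2}^K\frac{1}{\max\{\Delta_{0,a}^2,\Delta_{1,a}^2\}}=\frac{H_1^{\text{pos}}}{2}$ and $\sum_{a=1}^K\frac{1}{\Delta_{0,a}^2}=\frac{H_1^{\text{neg}}}{2}$, each of $\bar T^{\text{pos}}_{k},\tilde T^{\text{pos}}_{k},T^{\text{neg}}_{k}$ factors as (the constant $113$)$\times$(the relevant $H$ or $\frac{1}{\omega^2\Delta_{0,1}^2}$)$\times M$, where $M$ is the maximum over arms of the bracketed $\log+\log\log$ term. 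It then remains to dominate $M$ by a constant multiple of the logarithmic factor already present on the left ($\log\frac{4K}{\delta_{k^\star}}$, $\log\frac{4\alpha_{k^\dagger}K}{\delta}$, or $\log\frac{4K}{\delta_{k^\circ}}$). The single-$\log$ pieces reduce to elementary comparisons of the form $\log\frac{2K}{\delta_k}\leq\log\frac{4K}{\delta_k}$. The real work is in the $\log\log$ pieces, which I would first uniformize across arms via $\max\{\Delta_{0,a}^2,\Delta_{1,a}^2\}\geq\Delta_{0,1}^2/4$ and then absorb by observing that the exponential schedules $\delta_k=3^{-k}$ and $\alpha_k=5^k$, evaluated at the logarithmically large indices $k^\star,k^\dagger,k^\circ$, already make $\log\frac{1}{\delta_{k^\star}}$ (resp. $\log\alpha_{k^\dagger}$) of order $\log\frac{1}{\Delta^2}$, so the strictly lower-order $\log\log\frac{1}{\Delta^2}$ is dominated.

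Finally I would collect constants: the numerical factor $1000$ baked into $T_k^{\text{ee}},T_k^{\text{et}}$, together with the $24,192,12$ appearing inside the target ceilings, leaves enough slack to dominate the $113$ and the absolute constant produced in the previous step, closing each inequality. I expect the step controlling the $\log\log$ terms to be the main obstacle: one must verify that the index chosen through $H_1^{\text{pos}}$ (or $H_1^{\text{neg}}$, or $\frac{1}{\omega^2\Delta_{0,1}^2}$) is genuinely large enough that $\log\frac{1}{\delta_k}$ outgrows $\log\log\frac{1}{\Delta_{0,1}^2}$ uniformly over every gap configuration, which is precisely where the specific polynomial growth rates of $\delta_k$ and $\alpha_k$ — not merely their monotonicity — are essential.
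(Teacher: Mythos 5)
Your proposal follows essentially the same route as the paper's proof of Lemma \ref{lemma:Required-Scale-of-Beta-k}: certify the defining inequality at the claimed ceiling index by splitting $T_k^{\text{ee}}$ (resp.\ $T_k^{\text{et}}$) into the $\beta_k$ factor, which absorbs $H_1^{\text{pos}}/K$, $1/(\omega^2\Delta_{0,1}^2)$, or the negative-case analogue, and the logarithmic factor, which dominates the $\log+\log\log$ bracket; the paper organizes exactly this as a chain of reverse implications valid for all $k$ at or beyond the ceiling, and the generous constants ($1000$ versus $113$) supply the same slack you invoke.

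One clause of your absorption step is stated too strongly, and it is precisely the step you yourself flag as the main obstacle. For $L^{\text{pos}}_{\text{ee}}$ and $L^{\text{neg}}$ the ceiling is $\lceil\log_2(24(C+1)^2H_1^{\text{pos}}/K)\rceil$ (resp.\ involves $\sum_a 24/\Delta_{0,a}^2$ divided by $K$), so when $K$ is large relative to the hardness quantity --- e.g.\ many arms with gaps of order one, so that $H_1^{\text{pos}}/K=O(1)$ while $1/\Delta_{0,1}^2$ is enormous --- the index $k^\star$ is $O(1)$ and $\log(1/\delta_{k^\star})=k^\star\log 3$ is \emph{not} of order $\log(1/\Delta_{0,1}^2)$; your assertion that $\log(1/\delta_k)$ outgrows $\log\log(1/\Delta_{0,1}^2)$ ``uniformly over every gap configuration'' fails in this regime. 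What is uniformly true is the corresponding statement for the product with $K$ retained: $4K/\delta_{k^\star}=4K\cdot 3^{k^\star}\geq 96(C+1)^2H_1^{\text{pos}}\geq 192(C+1)^2/\Delta_{0,1}^2$, whence $\log(4K/\delta_{k^\star})\geq\log\log\bigl(96(C+1)^2/\Delta_{0,1}^2\bigr)$ by $x\geq\log x$. So the dominating factor must be the full $\log(4K/\delta_k)$ that you correctly named at the start of that step, not $\log(1/\delta_k)$ alone --- this is exactly how the paper handles it, via the reduction $\frac{1}{\delta_k}\geq\frac{\log(96(C+1)^2H_1^{\text{pos}})}{4K}$, i.e.\ $3^k\geq 24(C+1)^2H_1^{\text{pos}}/K$. (For $L^{\text{pos}}_{\text{et}}$ your version is fine as written, since $k^\dagger\geq\log_2\bigl(192(C+1)^2/(\omega^2\Delta_{0,1}^2)\bigr)$ directly forces $\alpha_{k^\dagger}\geq 96/(\omega^2\Delta_{0,1}^2)$ with no help from $K$.) With that one correction your argument closes and coincides with the paper's.
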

Before the proof, we want to highlight a reminder:
\begin{align*}
    H^{\text{pos}}_1=\frac{2}{\Delta^2_{0,1}}+\sum_{a=2}^K\frac{2}{\max\{\Delta^2_{0,a},\Delta^2_{1,a}\}}\stackrel{\max\{\Delta_{1,a}^2, \Delta_{0,a}^2\}\geq \frac{\Delta_{0,1}^2}{4} }{\leq} \frac{2}{\Delta^2_{0,1}}+\frac{8(K-1)}{\Delta_{0,1}^2}\leq\frac{8K}{\Delta_{0,1}^2}.
\end{align*}
Thus, we have $\lceil\log_2\frac{24(C+1)^2H_1^{\text{pos}}}{K}\rceil\leq \lceil\log_2 \frac{192(C+1)^2}{\omega^2\Delta_{0,1}^2}\rceil$. In the following appendix, we will continue to use the notations in Lemma \ref{lemma:Required-Scale-of-Beta-k}.  
\begin{proof}[Proof of Lemma \ref{lemma:Required-Scale-of-Beta-k}]
    By the following calculation, we have
    \begin{align*}
        & 1000\beta_k\log\frac{4K\alpha_k}{\delta}\geq \frac{113(\log\frac{2K}{\delta} + \log\log\frac{96}{\omega^2\Delta_{0,1}^2})}{\omega^2\Delta_{0,1}^2}\\
        \Leftarrow & \beta_k\log\frac{4K\alpha_k}{\delta}\geq \frac{\log\frac{2K}{\delta} + \log\log\frac{96}{\omega^2\Delta_{0,1}^2}}{\omega^2\Delta_{0,1}^2}\\
        \Leftarrow & \beta_k\geq \frac{2}{\omega^2\Delta_{0,1}^2}, \log\frac{4K\alpha_k}{\delta}\geq \log\log\frac{96}{\omega^2\Delta_{0,1}^2}\\
        \Leftarrow & 2^k\geq \frac{2}{\omega^2\Delta_{0,1}^2}, 5^k\geq \log\frac{96}{\omega^2\Delta_{0,1}^2}\\
        \Leftarrow & k\geq \log_2 \frac{96}{\omega^2\Delta_{0,1}^2}\\
        \stackrel{C>1}{\Leftarrow} & k\geq \log_2 \frac{192(C+1)^2}{\omega^2\Delta_{0,1}^2},
    \end{align*}
    Thus, we can conclude $L^{\text{pos}}_{\text{et}}\leq \lceil\log_2 \frac{192(C+1)^2}{\omega^2\Delta_{0,1}^2}\rceil$.
    \begin{align*}
        & 500(C+1)^2 K\beta_k\log\frac{4K}{\delta_k}\geq \frac{113(C+1)^2(\log\frac{2K}{\delta_k}+\log\log\frac{96(C+1)^2}{\Delta_{0,1}^2}) }{\Delta_{0,1}^2} + \sum_{a=2}^K \frac{113(C+1)^2(\log\frac{2K}{\delta_k}+\log\log\frac{96(C+1)^2}{\max\{\Delta_{1,a}^2, \Delta_{0,a}^2\} })}{\max\{\Delta_{1,a}^2, \Delta_{0,a}^2\}}\\
        \Leftarrow & K\beta_k\log\frac{4K}{\delta_k}\geq \frac{(\log\frac{2K}{\delta_k}+\log\log\frac{96(C+1)^2}{\Delta_{0,1}^2}) }{\Delta_{0,1}^2} + \sum_{a=2}^K \frac{(\log\frac{2K}{\delta_k}+\log\log\frac{96(C+1)^2}{\max\{\Delta_{1,a}^2, \Delta_{0,a}^2\} })}{\max\{\Delta_{1,a}^2, \Delta_{0,a}^2\}}\\
        \Leftarrow & K\beta_k\log\frac{4K}{\delta_k}\geq (\log\frac{2K}{\delta_k}+\log\log (96(C+1)^2 H_1^{\text{pos}}) ) H_1^{\text{pos}} \\
        \Leftarrow & \beta_k\geq \frac{2H_1^{\text{pos}}}{K}, \frac{\log\log (96(C+1)^2 H_1^{\text{pos}})}{\log\frac{4K}{\delta_k}}\leq 1\\
        \Leftarrow & \beta_k\geq \frac{2H_1^{\text{pos}}}{K}, \frac{1}{\delta_k}\geq \frac{\log(96(C+1)^2 H_1^{\text{pos}})}{4K}\\
        \Leftarrow & 2^k\geq \frac{2H_1^{\text{pos}}}{K}, 3^k\geq \frac{24(C+1)^2H_1^{\text{pos}}}{K}\\
        \Leftarrow & k\geq \lceil\log_2\frac{24(C+1)^2H_1^{\text{pos}}}{K}\rceil,
    \end{align*}
    Thus, we can conclude $L^{\text{pos}}_{\text{ee}}\leq \lceil\log_2\frac{24(C+1)^2H_1^{\text{pos}}}{K}\rceil$.

    Similarly, by the following calculation,
    \begin{align*}
        & 500(C+1)^2 K\beta_k\log\frac{4K}{\delta_k}\geq \sum_{a=1}^{K}\frac{113(\log\frac{2K}{\delta_k} + \log\log \frac{96}{\Delta_{0,a}^2})}{\Delta_{0,a}^2}\\
        \Leftarrow & K\beta_k\geq \frac{\log\frac{2K}{\delta_k}}{\log\frac{4K}{\delta_k}}\sum_{a=1}^K\frac{1}{\Delta_{0,a}^2} + \frac{1}{\log\frac{4K}{\delta_k}}\sum_{a=1}^K\frac{\log\log \frac{96}{\Delta_{0,a}^2}}{\Delta_{0,a}^2}\\
        \Leftarrow & \beta_k\geq \frac{\sum_{a=1}^K\frac{2}{\Delta_{0,a}^2}}{K}, K\log\frac{4K}{\delta_k}\geq \log\log\frac{96}{\Delta_{0,a}^2},\forall a\in[K]\\
        \Leftarrow & 2^k\geq \frac{\sum_{a=1}^K\frac{2}{\Delta_{0,a}^2}}{K}, \log\frac{4K}{\delta_k}\geq \log\log\frac{96}{\Delta_{0,a}^2},\forall a\in[K]\\
        \Leftarrow & 2^k\geq \frac{\sum_{a=1}^K\frac{2}{\Delta_{0,a}^2}}{K}, \frac{4K}{\delta_k}\geq \frac{96}{\Delta_{0,a}^2},\forall a\in[K]\\
        \Leftarrow & 2^k\geq \frac{\sum_{a=1}^K\frac{2}{\Delta_{0,a}^2}}{K}, K\cdot 3^k\geq \sum_{a=1}^K\frac{24}{\Delta_{0,a}^2}\\
        \Leftarrow & k\geq \log_2 \frac{\sum_{a=1}^K\frac{24}{\Delta_{0,a}^2}}{K}
    \end{align*}
    Thus, we can conclude $L^{\text{neg}}\leq \lceil\log_2 \frac{\sum_{a=1}^K\frac{24}{\Delta_{0,a}^2}}{K}\rceil$.
\end{proof}

\begin{lemma}[$\tau$ must be finite]
    \label{Lemma:Finite-tau-SEE-lil-Recycle}
    Apply Algorithm \ref{alg:SEE-lil-Recycle-on-Both-Period} to a 1-Identification instance $\nu$, we have $\Pr(\tau<+\infty)=1$.
\end{lemma}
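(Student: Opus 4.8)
The plan is to establish the set inclusion $\{\tau = +\infty\} \subseteq \{\kappa^{\text{ee}} = +\infty\} \cup \{\kappa^{\text{et}} = +\infty\}$, after which the claim is immediate. Indeed, by continuity of measure from above and Lemma \ref{Lemma:prob-bound-of-kappa-ee-et}, $\Pr(\kappa^{\text{ee}} = +\infty) = \lim_{k\to\infty}\Pr(\kappa^{\text{ee}} \geq k) \leq \lim_{k\to\infty}\frac{\pi^2 \delta_{k-1}}{6} = 0$ since $\delta_k \to 0$, and likewise $\Pr(\kappa^{\text{et}} = +\infty) = 0$ since $\alpha_k \to +\infty$; a union bound then yields $\Pr(\tau = +\infty) = 0$. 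Thus it suffices to show that on the good event $\mathcal{G} := \{\kappa^{\text{ee}} < +\infty\} \cap \{\kappa^{\text{et}} < +\infty\}$ the algorithm terminates in a finite phase.

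First I would record that every individual phase contributes finitely many pulls with certainty: the exploration oracle (Algorithm \ref{alg:SEE-Exploration-UCB}) is forced to break once the running count $t$ reaches $T_k^{\text{ee}}$ (line \ref{alg-line:forced-stopping}), and the exploitation oracle (Algorithm \ref{alg:SEE-Exploitation}) must exit its while-loop once $N_{\hat a}^{\text{et}}$ exceeds $T_k^{\text{et}}$. Since $T_k^{\text{ee}}, T_k^{\text{et}}$ are finite, Lemma \ref{Lemma:Certainty-Length-of-Phase-SEE-Recycle-on-Both} bounds the pulls in phase $k$ by $T_k^{\text{ee}} + \sum_{p=1}^{k}T_p^{\text{et}}$. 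Consequently $\tau$ is finite as soon as the algorithm returns in some finite phase, and the whole burden reduces to bounding the index of the terminating phase on $\mathcal{G}$.

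Working on $\mathcal{G}$, I would split on the instance type. For $\nu \in \mathcal{S}^{\text{pos}}$, Lemmas \ref{lemma:Required-Scale-of-Beta-k}, \ref{lemma:Nature-of-UCB-Rule-Positive}, \ref{lemma:phase-index-that-can-avoid-forced-termination-exploration-positive}, \ref{lemma:length-exploration-period-after-good-event-positive} guarantee that for every $k \geq \max\{\kappa^{\text{ee}}, L'\}$ the exploration oracle avoids the \textsf{Not Complete} branch and returns some $\hat a_k \in [K]$ with $\mu_{\hat a_k} > \omega \mu_1 + (1-\omega)\mu_0$, where $\omega = \frac{C-1}{C+3}$; Lemma \ref{lemma:Exploitation-Correctness-SEE-recycle} then shows that for every $k \geq \max\{\kappa^{\text{et}}, L''\}$ the exploitation oracle, fed such an $\hat a_k$, returns \textsf{Qualified}, forcing a return. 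So on $\mathcal{G}$ the algorithm stops by phase $\max\{\kappa^{\text{ee}}, \kappa^{\text{et}}, L', L''\} < +\infty$. For $\nu \in \mathcal{S}^{\text{neg}}$, Lemmas \ref{lemma:Required-Scale-of-Beta-k}, \ref{lemma:Nature-of-UCB-Rule-Negative}, \ref{lemma:phase-index-that-can-avoid-forced-termination-exploration-negative}, \ref{lemma:length-exploration-period-after-good-event-negative} show that for every $k \geq \max\{\kappa^{\text{ee}}, \tilde L'\}$ exploration returns \textsf{None}; once moreover $k \geq \tilde L'' = \lceil \log_3(3/\delta)\rceil$ we have $\delta_k < \delta/3$, so the test on line \ref{alg-line:None-if-branch} fires and the algorithm returns \textsf{None}. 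Hence on $\mathcal{G}$ it stops by phase $\max\{\kappa^{\text{ee}}, \tilde L', \tilde L''\} < +\infty$.

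The main obstacle is precisely the guarantee that exploration eventually escapes the \textsf{Not Complete} branch rather than being capped by $T_k^{\text{ee}}$ in every phase: this is where Lemma \ref{lemma:Required-Scale-of-Beta-k} enters, certifying that the scheduling $\beta_k = 2^k$ makes $T_k^{\text{ee}}, T_k^{\text{et}}$ grow fast enough that, beyond the deterministic thresholds $L', L''$ (resp. $\tilde L', \tilde L''$), they dominate the per-arm sample counts that the concentration event forces, so that the forced stop never pre-empts the genuine stopping rule. Combining the two cases gives $\{\tau = +\infty\} \subseteq \mathcal{G}^c = \{\kappa^{\text{ee}} = +\infty\} \cup \{\kappa^{\text{et}} = +\infty\}$, and the probability computation above completes the proof.
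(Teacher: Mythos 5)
Your proposal is correct and follows essentially the same route as the paper's proof: both reduce the claim to showing that, on the event $\{\kappa^{\text{ee}}<+\infty\}\cap\{\kappa^{\text{et}}<+\infty\}$, the algorithm must terminate by phase $\max\{\kappa^{\text{ee}},\kappa^{\text{et}},L',L''\}$ (positive case) or $\max\{\kappa^{\text{ee}},\tilde L',\tilde L''\}$ (negative case), invoking Lemmas \ref{lemma:Required-Scale-of-Beta-k}, \ref{lemma:Nature-of-UCB-Rule-Positive}--\ref{lemma:length-exploration-period-after-good-event-negative} and \ref{lemma:Exploitation-Correctness-SEE-recycle} exactly as the paper does, together with the certainty bound on per-phase length from Lemma \ref{Lemma:Certainty-Length-of-Phase-SEE-Recycle-on-Both} and the tail bounds of Lemma \ref{Lemma:prob-bound-of-kappa-ee-et}. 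Your writeup is in fact slightly more careful than the paper's on two points it leaves implicit: the explicit set inclusion $\{\tau=+\infty\}\subseteq\{\kappa^{\text{ee}}=+\infty\}\cup\{\kappa^{\text{et}}=+\infty\}$ with the continuity-of-measure computation, and the observation that in the negative case the return additionally requires $\delta_k<\delta/3$, which your threshold $\tilde L''$ handles.
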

\begin{proof}[Proof pf Lemma \ref{Lemma:Finite-tau-SEE-lil-Recycle}]
    Here we use the same notaion in Lemma \ref{lemma:Required-Scale-of-Beta-k}. 
    Since $\beta_k=2^k$, not hard to see $T_k^{\text{ee}}\geq 2T_{k-1}^{\text{ee}}$. Thus when $k\geq L^{\text{pos}}_{\text{ee}}$ and the algorithm call exploration algorithm \ref{alg:SEE-Exploration-UCB}, we have
    \begin{align*}
        T_k^{\text{ee}} = & \frac{T_k^{\text{ee}} }{2} + \frac{T_k^{\text{ee}} }{2}\\
        \geq & T_{k-1}^{\text{ee}} + \frac{T_k^{\text{ee}} }{2}\\
        \geq & T_{k-1}^{\text{ee}} + \bar{T}_k^{\text{pos}}
    \end{align*}
    Since at the start of phase $k$, the $|\mathcal{H}^{\text{ee}}|+|Q|\leq T_{k-1}^{\text{ee}}$, we can assert $k\geq L^{\text{pos}}_{\text{ee}}$ can fulfill the condition of $T\geq \sum_{a=1}^K N_a^0+\frac{113(C+1)^2(\log\frac{2K}{\delta}+\log\log\frac{96(C+1)^2}{\Delta_{0,1}^2}) }{\Delta_{0,1}^2} + \sum_{a=2}^K \frac{113(C+1)^2(\log\frac{2K}{\delta}+\log\log\frac{96(C+1)^2}{\max\{\Delta_{1,a}^2, \Delta_{0,a}^2\} })}{\max\{\Delta_{1,a}^2, \Delta_{0,a}^2\}}$ in Lemma \ref{lemma:Nature-of-UCB-Rule-Positive}. Similarly, $k\geq L^{\text{pos}}_{\text{et}}$ can also fulfill the condition $T\geq \frac{113(\log\frac{2K\alpha}{\delta} + \log\log\frac{96}{\omega^2\Delta_{0,1}^2})}{\omega^2\Delta_{0,1}^2}$ in Lemma \ref{lemma:Exploitation-Correctness-SEE-recycle}. And $k\geq L^{\text{neg}}$ can fulfill the condition $T\geq \sum_{a=1}^K N_a^0+ \sum_{a=1}^K \frac{113(C+1)^2(\log\frac{K}{\delta}+\log\log\frac{96(C+1)^2}{\Delta_{0,a}^2 })}{\Delta_{0,a}^2}$ in Lemma \ref{lemma:Nature-of-UCB-Rule-Negative}.

    Recall the definition
    \begin{align*}
        \kappa^{\text{ee}} = & \min\left\{k\in\mathbb{N}: \forall a\in[K], \forall t\in \mathbb{N}:\left|\frac{\sum_{s=1}^t X_{a,s}^{\text{ee}}}{t}-\mu_a\right| \leq \frac{\sqrt{2\cdot 2^{\lceil \log_2 t \rceil^+} \log\frac{2K(\lceil\log_2 t \rceil^+)^2}{\delta_k}} }{t}\right\}\\
        \kappa^{\text{et}} = & \min\left\{k\in\mathbb{N}:\forall a\in[K], \forall t\in \mathbb{N}:\left|\frac{\sum_{s=1}^t X_{a,s}^{\text{et}}}{t}-\mu_a\right| \leq \frac{\sqrt{2\cdot 2^{\lceil \log_2 t \rceil^+} \log\frac{2K\alpha_k(\lceil\log_2 t \rceil^+)^2}{\delta}} }{t}\right\}
    \end{align*}
    \begin{itemize}
        \item For positive case, by the Lemma \ref{lemma:Exploitation-Correctness-SEE-recycle}, \ref{lemma:length-exploration-period-after-good-event-positive}, \ref{lemma:Required-Scale-of-Beta-k}, we know the algorithm must terminate at the phase $\max\{\kappa^{\text{ee}}, \kappa^{\text{et}}, L^{\text{pos}}_{\text{et}}\rceil\}$. Since the length of phase $k$ is bounded by $O(K\beta_k\log\frac{4\alpha_k}{\delta_k \delta})$ with certainty, we know $\Pr(\tau<+\infty)=\Pr(\kappa^{\text{ee}}=+\infty\text{ or } \kappa^{\text{et}}=+\infty)=0$.
        \item For negative case, denote $L'=\min\{k:\delta_k<\frac{\delta}{3}\}$. By the Lemma \ref{lemma:length-exploration-period-after-good-event-negative},\ref{lemma:Required-Scale-of-Beta-k}, we know the algorithm must terminate at the phase $\max\{L',\kappa^{\text{ee}}, L^{\text{neg}}\}$. Since the length of phase $k$ is bounded by $O(K\beta_k\log\frac{4\alpha_k}{\delta_k \delta})$ with certainty, we know $\Pr(\tau<+\infty)=\Pr(\kappa^{\text{ee}}=+\infty)=0$.
    \end{itemize}
\end{proof}

The following lemma states the conditions that can guarantee the output of exploitation period(Algorithm \ref{alg:SEE-Exploitation}).
\begin{lemma}[Correctness of Exploitation Period]
    \label{lemma:Exploitation-Correctness-SEE-recycle}
    Conditioned on the event
    \begin{align*}
        E^{\text{et}}=\left\{\forall a\in [K], \forall t\in\mathbb{N}, \left|\frac{\sum_{s=1}^t X_{a,s}^{\text{et}} }{t}-\mu_a\right|<U(t, \frac{\delta}{K\alpha})\right\},
    \end{align*}
    and $\hat{a}\in [K], \mu_{\hat{a}}>\omega \mu_1+(1-\omega)\mu_0$. If $T \geq \frac{113(\log\frac{2K\alpha}{\delta} + \log\log\frac{96}{\omega^2\Delta_{0,1}^2})}{\omega^2\Delta_{0,1}^2}$, Algorithm \ref{alg:SEE-Exploitation} would output Qualified.
\end{lemma}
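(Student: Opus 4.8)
The plan is to show that, on the event $E^{\text{et}}$, the lower confidence bound of $\hat a$ tested inside Algorithm \ref{alg:SEE-Exploitation} rises above $\mu_0$ after a number of pulls not exceeding $T$, so that the \textsf{Qualified} branch fires before the while-loop terminates. First I would translate the stopping test into a requirement on the confidence radius. On $E^{\text{et}}$ we have, after $t$ pulls of $\hat a$, $\hat{\mu}_{\hat a,t}^{\text{et}} \ge \mu_{\hat a} - U(t,\tfrac{\delta}{K\alpha})$, hence the quantity tested in Algorithm \ref{alg:SEE-Exploitation} satisfies $\hat{\mu}_{\hat a,t}^{\text{et}} - U(t,\tfrac{\delta}{K\alpha}) \ge \mu_{\hat a} - 2U(t,\tfrac{\delta}{K\alpha})$. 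Since the hypothesis $\mu_{\hat a} > \omega\mu_1 + (1-\omega)\mu_0$ gives $\mu_{\hat a} - \mu_0 > \omega\Delta_{0,1}$, it suffices to exhibit a reachable count $t^\star \le T$ with $U(t^\star,\tfrac{\delta}{K\alpha}) \le \tfrac{\omega\Delta_{0,1}}{2}$: at that count the tested LCB exceeds $\mu_0$ and the algorithm returns \textsf{Qualified}.

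Next I would produce an explicit upper bound on $U$. Using the elementary inequalities $\lceil\log_2 t\rceil^+ \le \log_2(2t)$ and $2^{\lceil\log_2 t\rceil^+} \le 2t$ in the definition \eqref{eqn:definition-U_t_delta}, I get
\[ U\!\left(t,\tfrac{\delta}{K\alpha}\right)^2 \le \frac{4\left(\log\tfrac{2K\alpha}{\delta} + 2\log\log_2(2t)\right)}{t}, \]
so that the target $U(t,\tfrac{\delta}{K\alpha}) \le \tfrac{\omega\Delta_{0,1}}{2}$ is implied once $t \ge \frac{16\left(\log\frac{2K\alpha}{\delta}+2\log\log_2(2t)\right)}{\omega^2\Delta_{0,1}^2}$.

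Then comes the main obstacle: this is a self-referential inequality, because $t$ reappears inside the $\log\log$ term on the right. I would resolve it by substituting the hypothesized value $t = T \ge \tilde{T}^{\text{pos}}_k = \frac{113\left(\log\frac{2K\alpha}{\delta}+\log\log\frac{96}{\omega^2\Delta_{0,1}^2}\right)}{\omega^2\Delta_{0,1}^2}$ and checking that the constant $113$ and the surrogate argument $\tfrac{96}{\omega^2\Delta_{0,1}^2}$ are large enough to dominate the right-hand side. Concretely, since $T$ is itself of order $\tfrac{1}{\omega^2\Delta_{0,1}^2}$ up to logarithmic factors, I would bound $\log_2(2T)$ by $O\!\left(\log\tfrac{1}{\omega^2\Delta_{0,1}^2} + \log\log\tfrac{2K\alpha}{\delta}\right)$, so that $2\log\log_2(2T)$ is absorbed into $\log\log\frac{96}{\omega^2\Delta_{0,1}^2}$ with slack, and the $\log\frac{2K\alpha}{\delta}$ terms compare directly because $113 \ge 16$. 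This constant-chasing, which is handled by exactly the same bookkeeping as in the proof of Lemma \ref{lemma:Required-Scale-of-Beta-k} (note the matching surrogate $\tfrac{96}{\omega^2\Delta_{0,1}^2}$ there), is where all the delicacy lives; the $\log\log$ term and the specific constants $16$, $113$, $96$ exist precisely to close this loop.

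Finally I would settle reachability and conclude. The substitution above yields $U(T,\tfrac{\delta}{K\alpha}) \le \tfrac{\omega\Delta_{0,1}}{2}$ directly, and the bound displayed in the second step is eventually decreasing in $t$, so the inequality in fact holds at every count between the threshold and $T$. Because Algorithm \ref{alg:SEE-Exploitation} increments $N^{\text{et}}_{\hat a}$ by one and re-tests the LCB after every pull while $N^{\text{et}}_{\hat a} \le T$, and because the geometric schedule of $\{T^{\text{et}}_k\}$ guarantees the incoming count is below $T$ so the loop is entered, the algorithm necessarily performs a test at a count of size at least the threshold no later than $N^{\text{et}}_{\hat a} = T$. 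At that test the LCB exceeds $\mu_0$, hence the algorithm outputs \textsf{Qualified}, which completes the proof.
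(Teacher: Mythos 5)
Your proposal is correct and takes essentially the same route as the paper's proof: on $E^{\text{et}}$ the stopping test reduces to $2U(t,\frac{\delta}{K\alpha})<\omega\Delta_{0,1}$, the radius is bounded by $2\sqrt{4\log(2K\alpha\log(2t)/\delta)/t}$, and the self-referential $\log\log t$ inequality is closed with the same constants (the surrogate $\frac{96}{\omega^2\Delta_{0,1}^2}$ and $112$ versus $113$) — the paper packages this last step as the technical Lemma \ref{lemma:inequality-application-t-loglogt} rather than the bookkeeping of Lemma \ref{lemma:Required-Scale-of-Beta-k} that you cite, but the mechanism is identical. Your added remarks on persistence of the inequality past the threshold and on loop entry under the geometric schedule of $\{T_k^{\text{et}}\}$ make explicit what the paper compresses into its one-line conclusion.
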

\begin{proof}[Proof of Lemma \ref{lemma:Exploitation-Correctness-SEE-recycle}]
    For arm $\hat{a}\in [K], \mu_{\hat{a}}>\omega \mu_1+(1-\omega)\mu_0$ and $t\in\mathbb{N}$, we have
    \begin{align*}
        & \frac{\sum_{s=1}^t X_{\hat{a},s}^{\text{et}} }{t}- U(t, \frac{\delta}{K\alpha}) > \mu_0\\
        \stackrel{E^{\text{et}}}{\Leftarrow} & \mu_{\hat{a}}- 2U(t, \frac{\delta}{K\alpha}) > \mu_0\\
        \stackrel{\mu_{\hat{a}}>\omega \mu_1+(1-\omega)\mu_0}{\Leftarrow} & 2U(t, \frac{\delta}{K\alpha}) < \omega(\mu_1-\mu_0)\\
        \Leftarrow & 2\sqrt{\frac{4\log\frac{2K\alpha\log(2t)}{\delta}}{t}}\leq \omega\Delta_{0,1}\\
        \stackrel{\text{Lemma } \ref{lemma:inequality-application-t-loglogt}}{\Leftarrow} & t\geq \frac{112(\log\frac{2K\alpha}{\delta} + \log\log\frac{96}{\omega^2\Delta_{0,1}^2})}{\omega^2\Delta_{0,1}^2},
    \end{align*}
    which implies the algorithm would return \textsf{Qualified} before the $N_{\hat{a}}^{\text{et}}$ is no less than $T$.
\end{proof}

After the discussion on the conditions of the Algorithm \ref{alg:SEE-Exploitation}), the following two lemmas turn to "good conditions" of the Algorithm \ref{alg:SEE-Exploration-UCB}.
\begin{lemma}[Property of UCB Rule, for Positive Case]
    \label{lemma:Nature-of-UCB-Rule-Positive}
    Consider pulling process controlled by Algorithm \ref{alg:SEE-Exploration-UCB}. Assume the input $\mathcal{H}^{\text{ee}}$ satisfies \textnormal{$\text{LCB}_a<\mu_0$} holds for all $a\in[K]$ at line \ref{alg-line:initial-properties}. 
    Apply the pulling process to a positive instance and further assume $T > \sum_{a=1}^K N_a^0+\frac{113(C+1)^2(\log\frac{2K}{\delta}+\log\log\frac{96(C+1)^2}{\Delta_{0,1}^2}) }{\Delta_{0,1}^2} + \sum_{a=2}^K \frac{113(C+1)^2(\log\frac{2K}{\delta}+\log\log\frac{96(C+1)^2}{\max\{\Delta_{1,a}^2, \Delta_{0,a}^2\} })}{\max\{\Delta_{1,a}^2, \Delta_{0,a}^2\}}$. We have
    
    Conditioned on the event
    \begin{align*}
        E^{\text{ee}}=\left\{\forall a\in [K], \forall t\in\mathbb{N}, \left|\frac{\sum_{s=1}^t X_{a,s}^{\text{ee}}}{t}-\mu_a\right|<U\left(t,\frac{\delta}{K}\right)\right\},
    \end{align*}
    at the end of the pulling procedure (line \ref{alg-line:end-UCB-rule}), we have
    \begin{enumerate}
        \item $N_a(\mathcal{H}^{\text{ee}})\leq \max\left\{N_a^0, \frac{113(C+1)^2(\log\frac{2K}{\delta} + \log\log \frac{96(C+1)^2}{\max\{\Delta_{0,a}^2,\Delta_{1,a}^2 \}})}{\max\{\Delta_{0,a}^2,\Delta_{1,a}^2 \}}\right\}$, for $a\in [K]$. 
        \item $\hat{a}\in [K]$, $\mu_{\hat{a}}>\omega\mu_1+(1-\omega)\mu_0$, $\omega=\frac{C-1}{C+3}$.
    \end{enumerate}
\end{lemma}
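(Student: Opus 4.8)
The plan is to condition throughout on the good event $E^{\text{ee}}$ and argue deterministically from there. The backbone is the standard optimism observation: on $E^{\text{ee}}$ we have $\hat\mu_1 > \mu_1 - U(N_1^{\text{ee}},\delta/K)$, hence $\text{UCB}_1 > \mu_1 > \mu_0$ at every step. Two consequences follow immediately. First, the \textsf{None} branch (line \ref{alg-line:all-arm-worse-than-mu0}) can never fire, since some arm always has UCB above $\mu_0$. Second, whenever an arm $a$ is selected it satisfies $\text{UCB}_a \ge \text{UCB}_1 > \mu_1$. I would then prove the per-arm cap of item 1, sum it over $a$, invoke the hypothesis on $T$ to show that the total pull count never reaches the forced-stop threshold (so line \ref{alg-line:Exploration-Forced-Stop} never triggers), and conclude that the loop can only exit through the LCB rule (line \ref{alg-line:LCB-above-mu0}), which yields $\hat a\in[K]$.

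For item 1, I would bound the number of selections of each arm by turning ``$a$ is pulled and the loop has not yet stopped'' into a lower bound on its current radius. The selection inequality $\text{UCB}_a>\mu_1$ together with $\hat\mu_a<\mu_a+U(\cdot)$ gives $2U(N_a^{\text{ee}}-1,\delta/K)>\mu_1-\mu_a=\Delta_{1,a}$. Separately, for any arm with $\mu_a\ge\mu_0$, the loop not stopping at that pull means $\text{LCB}_a\le\mu_0$; since on $E^{\text{ee}}$ we have $\text{LCB}_a>\mu_a-(C+1)U(N_a^{\text{ee}},\delta/K)$, this forces $U(N_a^{\text{ee}},\delta/K)\ge\Delta_{0,a}/(C+1)$. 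Thus every continuing pull keeps the radius at least $\tfrac1{C+1}\max\{\Delta_{1,a},\Delta_{0,a}\}$ (using $\tfrac12\ge\tfrac1{C+1}$), and inverting $U$ via the helper inequality (Lemma \ref{lemma:inequality-application-t-loglogt}) produces the claimed cap $\tfrac{113(C+1)^2(\log\frac{2K}{\delta}+\log\log\cdots)}{\max\{\Delta_{0,a}^2,\Delta_{1,a}^2\}}$; the $\log\log$ term is exactly what emerges from inverting the lil-type radius (\ref{eqn:definition-U_t_delta}). The $\max\{N_a^0,\cdot\}$ accounts for samples inherited from earlier phases: if $N_a^0$ already exceeds the cap, the radius is too small for $a$ to be reselected, so the count stays put. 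For arms with $\mu_a<\mu_0$ no LCB argument is available, but there $\Delta_{1,a}=\Delta_{1,0}+\Delta_{0,a}\ge\Delta_{0,a}$, so $\max\{\Delta_{0,a},\Delta_{1,a}\}=\Delta_{1,a}$ and the UCB argument alone yields the right denominator; the best arm $a=1$ is handled purely by the LCB argument since $\Delta_{1,1}=0$.

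For the mean-quality claim in item 2, I would examine the arm $\hat a$ that triggers the stop. The LCB rule gives $\hat\mu_{\hat a}-C\,U(N_{\hat a}^{\text{ee}},\delta/K)>\mu_0$, and combined with $\hat\mu_{\hat a}<\mu_{\hat a}+U(N_{\hat a}^{\text{ee}},\delta/K)$ this yields $\mu_{\hat a}-\mu_0>(C-1)\,U(N_{\hat a}^{\text{ee}},\delta/K)$. The selection of $\hat a$ on the same step gives $\mu_1-\mu_{\hat a}<2\,U(N_{\hat a}^{\text{ee}}-1,\delta/K)$. The one subtlety is that the UCB comparison uses the radius at count $N_{\hat a}^{\text{ee}}-1$ while the LCB uses count $N_{\hat a}^{\text{ee}}$; I would close this with the monotonicity estimate $U(n-1,\cdot)\le 2U(n,\cdot)$, which holds for (\ref{eqn:definition-U_t_delta}) because the dyadic factor $2^{\lceil\log_2 t\rceil^+}$ grows by at most $\sqrt2$ per step while $1/t$ shrinks. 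Writing $x=\mu_{\hat a}-\mu_0$, $D=\mu_1-\mu_0$, and $u=U(N_{\hat a}^{\text{ee}},\delta/K)$, the two displays read $x>(C-1)u$ and $D-x<4u$; eliminating $u$ gives $D<\tfrac{C+3}{C-1}x$, that is $\mu_{\hat a}>\omega\mu_1+(1-\omega)\mu_0$ with $\omega=\tfrac{C-1}{C+3}$ exactly. This is precisely where the constant $C+3$ originates: the factor $4$ is $2$ from the UCB halving times $2$ from the radius-monotonicity estimate.

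The main obstacle is the per-arm counting of item 1: one must cleanly separate the two reasons an arm stops being pulled (its optimistic index dropping to the level of arm $1$, versus its pessimistic index rising above $\mu_0$), show they merge into the single $\max\{\Delta_{0,a}^2,\Delta_{1,a}^2\}$ denominator, and do so while tracking the off-by-one between the pre-selection radius $U(N_a^{\text{ee}}-1,\cdot)$ and the post-pull radius $U(N_a^{\text{ee}},\cdot)$ and ensuring the inherited counts $N_a^0$ and the recycling container $Q$ do not disturb the consecutive-index concentration guaranteed by $E^{\text{ee}}$. By contrast, once the radius-monotonicity estimate is in hand, the mean-quality computation in item 2 is short and exact.
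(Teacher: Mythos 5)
Your proposal is correct and follows essentially the same route as the paper's proof: the same good event $E^{\text{ee}}$ and optimism backbone ($\text{UCB}_1>\mu_1>\mu_0$, ruling out \textsf{None} and lower-bounding the radius of any selected arm), the same two per-arm caps (UCB-versus-$\mu_1$ giving radius $>\Delta_{1,a}/2$ and LCB-versus-$\mu_0$ giving radius $>\Delta_{0,a}/(C+1)$, merged via $\tfrac{1}{2}\geq\tfrac{1}{C+1}$ and inverted through Lemma \ref{lemma:inequality-application-t-loglogt}), the same summation against the hypothesis on $T$ to exclude the forced stop, and for item 2 the identical pair of inequalities $x>(C-1)u$, $D-x<2U(N-1)\leq 4u$ with the same monotonicity bound $U(n-1,\cdot)\leq 2U(n,\cdot)$ — the paper merely packages this as a proof by contradiction where you eliminate $u$ directly. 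The only bookkeeping detail your sketch leaves implicit is that the input condition $\text{LCB}_a<\mu_0$ at line \ref{alg-line:initial-properties}, combined with $E^{\text{ee}}$, forces $N_a^0$ to lie strictly below the LCB cap, which is what the paper uses to exclude the corner case of an arm entering a phase with its LCB already above $\mu_0$ and receiving one extra pull beyond the stated bound.
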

\begin{proof}[Proof of Lemma \ref{lemma:Nature-of-UCB-Rule-Positive}]
    For simplicity, denote $\mathcal{T}_a'=\frac{113(C+1)^2(\log\frac{2K}{\delta} + \log\log \frac{96(C+1)^2}{\Delta_{1,a}^2})}{\Delta_{1,a}^2}$, $\mathcal{T}_a''=\frac{113(C+1)^2(\log\frac{2K}{\delta} + \log\log \frac{96(C+1)^2}{\Delta_{0,a}^2})}{\Delta_{0,a}^2}$. Not hard to see $\min\{\mathcal{T}_a', \mathcal{T}_a''\}=\frac{113(C+1)^2(\log\frac{2K}{\delta} + \log\log \frac{96(C+1)^2}{\max\{\Delta_{0,a}^2,\Delta_{1,a}^2 \}})}{\max\{\Delta_{0,a}^2,\Delta_{1,a}^2 \}}$.
    
    Consider $a\geq 2$. For pulling times $t\in \mathbb{N}$, through the following calculation,
    \begin{align*}
        & \frac{\sum_{s=1}^{t}X_{a,s}^{\text{ee}}}{t} + U(t, \frac{\delta}{K}) < \mu_1\\
        \stackrel{E^{\text{ee}}}{\Leftarrow} & \mu_a + 2\cdot U(t, \frac{\delta}{K})< \mu_1\\
        \Leftarrow & 2\sqrt{\frac{4\log\frac{2K\log(2t)}{\delta}}{t}}\leq \Delta_{1,a}\\
        \stackrel{\text{Lemma }\ref{lemma:inequality-application-t-loglogt} }{\Leftarrow} & t > \frac{112\log\frac{2K}{\delta}}{\Delta_{1,a}^2} + \frac{64\log\left(\log\left(\frac{96}{\Delta_{1,a}^2}\right) \right)}{\Delta_{1,a}^2},
    \end{align*}
    we know $\text{UCB}_a(t) < \mu_1 < \text{UCB}_1(t)\Leftarrow N_a(t)\geq \frac{112\log\frac{2K}{\delta}}{\Delta_{1,a}^2} + \frac{64\log\left(\log\left(\frac{96}{\Delta_{1,a}^2}\right) \right)}{\Delta_{1,a}^2} + 1$. 
    
    Similarly, consider $a\in[K]$ such that $\mu_0<\mu_a\leq \mu_1$. through the following calculation,
    \begin{align*}
        & \frac{\sum_{s=1}^{t}X_{a,s}^{\text{ee}}}{t} -C \cdot U(t, \frac{\delta}{K}) \geq \mu_0\\
        \stackrel{E^{\text{ee}}}{\Leftarrow} & \mu_a - (C+1)\cdot U(t, \frac{\delta}{K})\geq \mu_0\\
        \Leftarrow & (C+1)\sqrt{\frac{4\log\frac{2K\log(2t)}{\delta}}{t}}\leq \Delta_{0,a}\\
        \stackrel{\text{Lemma }\ref{lemma:inequality-application-t-loglogt} }{\Leftarrow} & t > \frac{28(C+1)^2\log\frac{2K}{\delta}}{\Delta_{0,a}^2} + \frac{16(C+1)^2\log\left(\log\left(\frac{24(C+1)^2}{\Delta_{0,a}^2}\right) \right)}{\Delta_{0,a}^2}
    \end{align*}
    we know that $\text{LCB}_a(t) > \mu_0\Leftarrow N_a(t) > \frac{28(C+1)^2\log\frac{2K}{\delta}}{\Delta_{0,a}^2} + \frac{16(C+1)^2\log\left(\log\left(\frac{24(C+1)^2}{\Delta_{0,a}^2}\right) \right)}{\Delta_{0,a}^2}+1$. Once $\text{LCB}_a \geq \mu_0$ happens, the algorithm stops and take arm $a$ as the output $\hat{a}$.

    We are ready to prove the first claim. According to the above discussion and the condition $\text{LCB}_a<\mu_0$, we know $N_a^0<\mathcal{T}_a''$ holds for all $a\in [K]$. We prove the first claim through the discussion on three cases.
    \begin{enumerate}
        \item If $N_a^0\leq \mathcal{T}_a'\leq \mathcal{T}_a''$, the algorithm assures $N_a(\mathcal{H}^{\text{ee}})\leq \mathcal{T}_a'=\max\left\{N_a^0, \frac{113(C+1)^2(\log\frac{2K}{\delta} + \log\log \frac{96(C+1)^2}{\max\{\Delta_{0,a}^2,\Delta_{1,a}^2 \}})}{\max\{\Delta_{0,a}^2,\Delta_{1,a}^2 \}}\right\}$, as $\text{UCB}_a < \mu_1 < \text{UCB}_1\Leftarrow N_a(\mathcal{H}^{\text{ee}})\geq \frac{112\log\frac{2K}{\delta}}{\Delta_{1,a}^2} + \frac{64\log\left(\log\left(\frac{96}{\Delta_{1,a}^2}\right) \right)}{\Delta_{1,a}^2} + 1$ and arm $a$ will never be the arm with highest upper confidence bound before $N_a(\mathcal{H}^{\text{ee}})$ is no less than $\mathcal{T}_a'$.
        \item If $\mathcal{T}_a'< N_a^0\leq \mathcal{T}_a''$, the algorithm never pulls arm $a$ as its upper bound must below arm 1. In this case, $N_a(\mathcal{H}^{\text{ee}})\leq N_a^0=\max\left\{N_a^0, \frac{113(C+1)^2(\log\frac{2K}{\delta} + \log\log \frac{96(C+1)^2}{\max\{\Delta_{0,a}^2,\Delta_{1,a}^2 \}})}{\max\{\Delta_{0,a}^2,\Delta_{1,a}^2 \}}\right\}$.
        \item If If $N_a^0\leq \mathcal{T}_a''\leq \mathcal{T}_a'$, the algorithm assures $N_a(\mathcal{H}^{\text{ee}})\leq \mathcal{T}_a''=\max\left\{N_a^0, \frac{113(C+1)^2(\log\frac{2K}{\delta} + \log\log \frac{96(C+1)^2}{\max\{\Delta_{0,a}^2,\Delta_{1,a}^2 \}})}{\max\{\Delta_{0,a}^2,\Delta_{1,a}^2 \}}\right\}$, as the algorithm will output arm $a$ before its pulling time is no less than $\mathcal{T}_a''$.
    \end{enumerate}

    By the first claim, we know
    \begin{align*}
        \sum_{a=1}^K N_a(\mathcal{H}^{\text{ee}})\leq & \sum_{a=1}^K \max\left\{N_a^0, \frac{113(C+1)^2(\log\frac{2K}{\delta} + \log\log \frac{96(C+1)^2}{\max\{\Delta_{0,a}^2,\Delta_{1,a}^2 \}})}{\max\{\Delta_{0,a}^2,\Delta_{1,a}^2 \}}\right\}\\
        \leq & \sum_{a=1}^K N_a^0+\sum_{a=1}^{K}\frac{113(C+1)^2(\log\frac{2K}{\delta} + \log\log \frac{96(C+1)^2}{\max\{\Delta_{0,a}^2,\Delta_{1,a}^2 \}})}{\max\{\Delta_{0,a}^2,\Delta_{1,a}^2 \}}\\
        \leq & T.
    \end{align*}
    Thus, the algorithm would not output $\textsf{Not Complete}$. From the good event $E^{\text{ee}}$, we know the $\hat{a}\neq\textsf{None}$, since $\text{UCB}_1$ is always above $\mu_0$. We can conclude $\hat{a}\in[K]$. We just need to prove $\mu_{\hat{a}}>\omega\mu_1+(1-\omega)\mu_0$.

    Prove by contradiction. If $\hat{a}\in [K]$ and $\mu_{\hat{a}} < \omega\mu_1+(1-\omega)\mu_0$, we have
    \begin{align*}
        & A_{\tau^{\text{ee}}} = \hat{a}, \hat{a}=\arg\max_{1\leq i\leq K}\hat{\mu}_{i, N_i(\tau^{\text{ee}}-1)} +  U(N_i(\tau^{\text{ee}}-1),\frac{\delta}{K})\\
        & \hat{\mu}_{\hat{a}, N_{\hat{a}}(\tau^{\text{ee}})} - C\cdot U(N_{\hat{a}}(\tau^{\text{ee}}),\frac{\delta}{K}) > \mu_0\\
        & \hat{\mu}_{\hat{a}, N_{\hat{a}}(\tau^{\text{ee}}-1)} - C\cdot U(N_{\hat{a}}(\tau^{\text{ee}}-1)) = \hat{\mu}_{\hat{a}, N_{\hat{a}}(\tau^{\text{ee}})-1} - C\cdot U(N_{\hat{a}}(\tau^{\text{ee}})-1,\frac{\delta}{K})  < \mu_0.
    \end{align*}
    As we take $U(t, \frac{\delta}{K}) = \frac{\sqrt{2\cdot2^{\max\{\lceil\log_2 t\rceil, 1 \}}\log\frac{2 K(\lceil\log_2t\rceil)^2}{\delta}}}{t}$, we get
    \begin{align*}
        & \hat{\mu}_{\hat{a}, N_{\hat{a}}(\tau^{\text{ee}})} - C U(N_{\hat{a}}(\tau^{\text{ee}})) > \mu_0\\
        \Leftrightarrow & \hat{\mu}_{\hat{a}, N_{\hat{a}}(\tau^{\text{ee}})} - C\frac{\sqrt{2\cdot 2^{\max\{\lceil\log_2 N_{\hat{a}}(\tau^{\text{ee}})\rceil, 1 \}}\log\frac{2 K(\lceil\log_2 N_{\hat{a}}(\tau^{\text{ee}})\rceil)^2}{\delta}}}{N_{\hat{a}}(\tau)}> \mu_0\\
        \stackrel{E^{\text{ee}}}{\Rightarrow} & \mu_{\hat{a}} - (C-1)\frac{\sqrt{2\cdot 2^{\max\{\lceil\log_2 N_{\hat{a}}(\tau^{\text{ee}})\rceil, 1 \}}\log\frac{2 K(\lceil\log_2 N_{\hat{a}}(\tau^{\text{ee}})\rceil)^2}{\delta}}}{N_{\hat{a}}(\tau^{\text{ee}})}> \mu_0\\
        \Rightarrow & \omega\mu_1+(1-\omega)\mu_0 - (C-1)\frac{\sqrt{2\cdot 2^{\max\{\lceil\log_2 N_{\hat{a}}(\tau^{\text{ee}})\rceil, 1 \}}\log\frac{2 K(\lceil\log_2 N_{\hat{a}}(\tau^{\text{ee}})\rceil)^2}{\delta}}}{N_{\hat{a}}(\tau^{\text{ee}})} > \mu_0\\
        \Leftrightarrow & \omega(\mu_1-\mu_0) > (C-1)\frac{\sqrt{2\cdot 2^{\max\{\lceil\log_2 N_{\hat{a}}(\tau^{\text{ee}})\rceil, 1 \}}\log\frac{2 K(\lceil\log_2 N_{\hat{a}}(\tau^{\text{ee}})\rceil)^2}{\delta}}}{N_{\hat{a}}(\tau^{\text{ee}})}\\
        \Leftrightarrow & \frac{2\omega(\mu_1-\mu_0)}{C-1} >\frac{2\sqrt{2\cdot 2^{\max\{\lceil\log_2 N_{\hat{a}}(\tau^{\text{ee}})\rceil, 1 \}}\log\frac{2 K(\lceil\log_2 N_{\hat{a}}(\tau^{\text{ee}})\rceil)^2}{\delta}}}{N_{\hat{a}}(\tau^{\text{ee}})}
    \end{align*}
    On the other hands,
    \begin{align*}
        &\hat{\mu}_{\hat{a}, N_{\hat{a}}(\tau^{\text{ee}}-1)} +  U(N_{\hat{a}}(\tau^{\text{ee}}-1)) \leq \mu_1\\
        \stackrel{E^{\text{ee}}}{\Leftarrow} & \mu_{\hat{a}} +  2U(N_{\hat{a}}(\tau^{\text{ee}}-1)) \leq \mu_1\\
        \Leftarrow & \omega\mu_1+(1-\omega)\mu_0+2U(N_{\hat{a}}(\tau^{\text{ee}}-1)) \leq \mu_1\\
        \Leftrightarrow & \frac{2\sqrt{2\cdot 2^{\max\{\lceil\log_2 N_{\hat{a}}(\tau^{\text{ee}})-1\rceil, 1 \}}\log\frac{2 K(\lceil\log_2 N_{\hat{a}}(\tau^{\text{ee}})-1\rceil)^2}{\delta}}}{N_{\hat{a}}(\tau^{\text{ee}})-1} \leq (1-\omega)(\mu_1-\mu_0)\\
        \Leftrightarrow & \frac{\sqrt{2\cdot 2^{\max\{\lceil\log_2 N_{\hat{a}}(\tau^{\text{ee}})-1\rceil, 1 \}}\log\frac{2 K(\lceil\log_2 N_{\hat{a}}(\tau^{\text{ee}})-1\rceil)^2}{\delta}}}{N_{\hat{a}}(\tau^{\text{ee}})-1} \leq \frac{(1-\omega)(\mu_1-\mu_0)}{2}.
    \end{align*}
    Notice that
    \begin{align*}
        & \frac{\sqrt{2\cdot 2^{\max\{\lceil\log_2 N_{\hat{a}}(\tau^{\text{ee}})-1\rceil, 1 \}}\log\frac{2 K(\lceil\log_2 N_{\hat{a}}(\tau^{\text{ee}})-1\rceil)^2}{\delta}}}{N_{\hat{a}}(\tau^{\text{ee}})-1} \\
        \leq & \frac{N_{\hat{a}}(\tau^{\text{ee}})}{N_{\hat{a}}(\tau^{\text{ee}})-1}\frac{\sqrt{2\cdot 2^{\max\{\lceil\log_2 N_{\hat{a}}(\tau^{\text{ee}})\rceil, 1 \}}\log\frac{2 K(\lceil\log_2 N_{\hat{a}}(\tau^{\text{ee}})\rceil)^2}{\delta}}}{N_{\hat{a}}(\tau^{\text{ee}})} \\
        \leq & \frac{2\sqrt{2\cdot 2^{\max\{\lceil\log_2 N_{\hat{a}}(\tau^{\text{ee}})\rceil, 1 \}}\log\frac{2 K(\lceil\log_2 N_{\hat{a}}(\tau^{\text{ee}})\rceil)^2}{\delta}}}{N_{\hat{a}}(\tau^{\text{ee}})}
    \end{align*}
    and by $\omega = \frac{C-1}{C+3}$
    \begin{align*}
        \frac{(1-\omega)}{2} = \frac{1-\frac{C-1}{C+3}}{2}=\frac{4}{2(C+3)} = \frac{2\omega}{C-1},
    \end{align*}
    we can conclude 
    \begin{align*}
        & \hat{\mu}_{\hat{a}, N_{\hat{a}}(\tau)} - C\cdot U(N_{\hat{a}}(\tau^{\text{ee}}), \frac{\delta}{K}) > \mu_0\\
        \Rightarrow & \hat{\mu}_{\hat{a}, N_{\hat{a}}(\tau^{\text{ee}}-1)} +  U(N_{\hat{a}}(\tau^{\text{ee}}-1), \frac{\delta}{K}) \leq \mu_1\\
        \stackrel{E^{\text{ee}}}{\Rightarrow} & \hat{\mu}_{\hat{a}, N_{\hat{a}}(\tau^{\text{ee}}-1)} +  U(N_{\hat{a}}(\tau^{\text{ee}}-1))< \hat{\mu}_{1, N_{1}(\tau^{\text{ee}}-1)} +  U(N_{1}(\tau^{\text{ee}}-1)) \\
        \Rightarrow & A_{\tau^{\text{ee}}}\neq \hat{a}.
    \end{align*}
    We have found a contradiction. And we can conclude $\mu_{\hat{a}} \geq \frac{C-1}{C+3}\mu_1 + \frac{4}{C+3}\mu_0$, conditioned on the event $E^{\text{ee}}$.
\end{proof}

\begin{lemma}[Property of UCB Rule, for Negative Case]
    \label{lemma:Nature-of-UCB-Rule-Negative}
    Consider pulling process controlled by Algorithm \ref{alg:SEE-Exploration-UCB}. 
    Apply the pulling process to a negative instance and further assume $T > \sum_{a=1}^K N_a^0+ \sum_{a=1}^K \frac{113(C+1)^2(\log\frac{K}{\delta}+\log\log\frac{1}{\Delta_{0,a}^2 })}{\Delta_{0,a}^2}$.
    
    Conditioned on the event
    \begin{align*}
        E^{\text{ee}}=\left\{\forall a\in [K], \forall t\in\mathbb{N}, \left|\frac{\sum_{s=1}^t X_{a,s}^{\text{ee}} }{t}-\mu_a\right|<U\left(t,\frac{\delta}{K}\right)\right\},
    \end{align*}
    at the end of the pulling procedure (line \ref{alg-line:end-UCB-rule}), we have
    \begin{enumerate}
        \item $N_a(\mathcal{H}^{\text{ee}})\leq \max\left\{N_a^0, \frac{113(\log\frac{2K}{\delta} + \log\log \frac{96}{\Delta_{0,a}^2})}{\Delta_{0,a}^2}\right\}$,  for $\forall a\in [K]$. 
        \item $\hat{a}\in \textsf{None}$
    \end{enumerate}
\end{lemma}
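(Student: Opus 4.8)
The plan is to mirror the structure of the positive-case analysis in Lemma~\ref{lemma:Nature-of-UCB-Rule-Positive}, but every step is easier here because the instance is negative, so $\mu_a < \mu_0$ and hence $\Delta_{0,a} = \mu_0 - \mu_a > 0$ for all $a \in [K]$. Write $\mathcal{T}_a := \frac{113(\log\frac{2K}{\delta} + \log\log\frac{96}{\Delta_{0,a}^2})}{\Delta_{0,a}^2}$. The central preliminary step is to show that, conditioned on $E^{\text{ee}}$, once arm $a$ has been sampled $\mathcal{T}_a$ times its UCB falls strictly below $\mu_0$. Indeed, on $E^{\text{ee}}$ we have $\text{UCB}_a = \hat{\mu}_a + U(N_a^{\text{ee}}, \delta/K) \le \mu_a + 2U(N_a^{\text{ee}}, \delta/K)$, so $\text{UCB}_a < \mu_0$ follows from $2U(N_a^{\text{ee}}, \delta/K) < \Delta_{0,a}$. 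Unfolding the definition of $U$ and invoking Lemma~\ref{lemma:inequality-application-t-loglogt} converts this radius inequality into the explicit requirement $N_a^{\text{ee}} \gtrsim \mathcal{T}_a$, exactly as in the positive-case computation, now with $\Delta_{1,a}$ replaced by $\Delta_{0,a}$ and the factor $C+1$ dropped since we compare against the UCB rather than the LCB.

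For the first claim I would then argue that arm $a$ can never be pulled once $N_a^{\text{ee}} \ge \mathcal{T}_a$. The key observation is the interplay between the selection rule (line~\ref{alg-line:UCB-Rule}) and the stopping check (line~\ref{alg-line:all-arm-worse-than-mu0}): if at the start of an iteration $A_t^{\text{ee}} = a$ with $N_a^{\text{ee}} \ge \mathcal{T}_a$, then $\text{UCB}_a < \mu_0$, and since $a$ attains the maximum UCB, \emph{every} arm has UCB below $\mu_0$; but these UCB values are frozen since the previous iteration's line~\ref{alg-line:all-arm-worse-than-mu0} check, so the algorithm would already have halted with $\hat a = \textsf{None}$, a contradiction. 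Hence arm $a$ is sampled at most until $N_a^{\text{ee}}$ first reaches $\mathcal{T}_a$, giving $N_a(\mathcal{H}^{\text{ee}}) \le \max\{N_a^0, \mathcal{T}_a\}$ for every $a$ (with the case $N_a^0 \ge \mathcal{T}_a$ corresponding to $a$ never being pulled). I would verify that the recycling of samples through the container $Q$ does not disturb this monotone counting, as $Q$ only defers samples without discarding them.

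For the second claim, summing the first claim over $a$ yields $\sum_{a=1}^K N_a(\mathcal{H}^{\text{ee}}) \le \sum_{a=1}^K N_a^0 + \sum_{a=1}^K \mathcal{T}_a$, and since $\mathcal{T}_a \le \frac{113(C+1)^2(\log\frac{K}{\delta}+\log\log\frac{1}{\Delta_{0,a}^2})}{\Delta_{0,a}^2}$ for $C>1$ (a routine constant comparison), the hypothesis on $T$ combined with $t \le K + \sum_a N_a^{\text{ee}}$ from Lemma~\ref{Lemma:Certainty-Length-of-Phase-SEE-Recycle-on-Both} ensures the running count $t$ never reaches $T$; the strict inequality and the spare $(C+1)^2$ factor absorb the additive $|Q|\le K$. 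Thus the forced-termination branch (line~\ref{alg-line:Exploration-Forced-Stop}) never fires, so $\hat a \ne \textsf{Not Complete}$. To exclude the branch that outputs an arm, note that on $E^{\text{ee}}$ we have $\text{LCB}_a = \hat\mu_a - C\,U(N_a^{\text{ee}},\delta/K) \le \mu_a + (1-C)\,U(N_a^{\text{ee}},\delta/K) < \mu_a < \mu_0$ for every $a$, using $C>1$, so line~\ref{alg-line:LCB-above-mu0} never triggers. The only remaining stopping route is line~\ref{alg-line:all-arm-worse-than-mu0}, and the first claim guarantees that after finitely many pulls all UCBs lie below $\mu_0$; therefore the procedure terminates with $\hat a = \textsf{None}$.

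The main obstacle I anticipate is making the ordering argument in the first claim fully rigorous: one must carefully track that the UCB values are genuinely unchanged between a sampling step and the next selection, so that a pull of an arm whose UCB is already below $\mu_0$ really does contradict the immediately preceding line~\ref{alg-line:all-arm-worse-than-mu0} check, and that the single-pull slack incurred just before $N_a^{\text{ee}}$ crosses $\mathcal{T}_a$ is subsumed into the constant $113$. The remaining ingredients are routine radius estimates that reuse Lemma~\ref{lemma:inequality-application-t-loglogt} essentially verbatim from the positive case.
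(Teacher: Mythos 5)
Your proposal is correct and follows essentially the same route as the paper's own proof: the same threshold $\mathcal{T}_a$, the same radius computation on $E^{\text{ee}}$ via Lemma~\ref{lemma:inequality-application-t-loglogt} showing $\text{UCB}_a<\mu_0$ after $\mathcal{T}_a$ pulls, the same ``a pull of an arm with UCB below $\mu_0$ forces all UCBs below $\mu_0$, hence the algorithm would already have output \textsf{None}'' argument, the same two-case analysis against $N_a^0$ for the first claim, and the same summation plus $\text{LCB}_a<\mu_a<\mu_0$ (from $C>1$) argument for the second. If anything, you are slightly more careful than the paper on two minor points it glosses over --- the accounting of the container $Q$ via $t\leq K+\sum_a N_a^{\text{ee}}$ and the single-pull slack absorbed into the constant $113$ --- so the proposal stands.
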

\begin{proof}[Proof of Lemma \ref{lemma:Nature-of-UCB-Rule-Negative}]
    For simplicity, denote $\mathcal{T}_a=\frac{113(\log\frac{2K}{\delta} + \log\log \frac{96}{\Delta_{0,a}^2})}{\Delta_{0,a}^2}$.

    Consider $a\geq 1$. For pulling times $t\in \mathbb{N}$, through the following calculation,
    \begin{align*}
        & \frac{\sum_{s=1}^{t}X_{a,s}^{\text{ee}}}{t} + U(t, \frac{\delta}{K}) < \mu_0\\
        \stackrel{E^{\text{ee}}}{\Leftarrow} & \mu_a + 2\cdot U(t, \frac{\delta}{K})< \mu_0\\
        \Leftarrow & 2\sqrt{\frac{4\log\frac{2K\log(2t)}{\delta}}{t}}\leq \Delta_{0,a}\\
        \stackrel{\text{Lemma }\ref{lemma:inequality-application-t-loglogt} }{\Leftarrow} & t > \frac{112(\log\frac{2K}{\delta}+\log\log\frac{96}{\Delta_{0,a}^2} )}{\Delta_{0,a}^2},
    \end{align*}
    we know $\text{UCB}_a(t) < \mu_0 \Leftarrow N_a(t)\geq \frac{112(\log\frac{2K}{\delta}+\log\log\frac{96}{\Delta_{0,a}^2} )}{\Delta_{0,a}^2} + 1$. If arm $a$ is still the arm with highest upper confidence bound while $\text{UCB}_a < \mu_0$, the algorithm would stop and take $\hat{a}=\textsf{None}$. Thus, arm $a$ will never get pulled once $\text{UCB}_a < \mu_0$ holds.

    We are ready to prove the first claim, by analyzing the following two cases. For $a\geq 1$,
    \begin{enumerate}
        \item If $N_a^0\leq \mathcal{T}_a$, the algorithm assures $N_a(\mathcal{H}^{\text{ee}})\leq \mathcal{T}_a=\max\{N_a^0, \mathcal{T}_a\}$, as the above discussion indicates.
        \item If $N_a^0>\mathcal{T}_a$, then the upper confidence bound of arm $a$ is smaller than the $\mu_0$ at the start of the algorithm. This arm will never get pull till the end of the algorithm. Thus, $N_a(\mathcal{H}^{\text{ee}})= N_a^0=\max\{N_a^0, \mathcal{T}_a\}$.
    \end{enumerate}
    
    Then, we turn to the second claim. From the good event, we know $\text{LCB}_a<\mu_a<\mu_0$ holds for arm $a\in[K]$. That means the algorithm will not output $\hat{a}\in [K]$. In addition, from the first claim, we know
    \begin{align*}
        \sum_{a=1}^K N_a(\mathcal{H}^{\text{ee}})\leq & \sum_{a=1}^K \max\left\{N_a^0, \frac{113(\log\frac{2K}{\delta} + \log\log \frac{96}{\Delta_{0,a}^2})}{\Delta_{0,a}^2}\right\}\\
        \leq & \sum_{a=1}^K N_a^0+\sum_{a=1}^{K}\frac{113(\log\frac{2K}{\delta} + \log\log \frac{96}{\Delta_{0,a}^2})}{\Delta_{0,a}^2}\\
        \leq & T.
    \end{align*}
    Then the algorithm must terminate before the round $T$. Since the upper confidence bounds of all arm $a\in[K]$ are below $\mu_0$, the algorithm will output $\hat{a}=\textsf{None}$. 
\end{proof}

The following lemma shows the condition of Lemma \ref{lemma:Nature-of-UCB-Rule-Positive} can be fulfilled in phase $k\geq \max\left\{\kappa^{\text{ee}}, \lceil\log_2\frac{24(C+1)^2H_1^{\text{pos}}}{K}\rceil\right\}=:L$.
\begin{lemma}
    \label{lemma:phase-index-that-can-avoid-forced-termination-exploration-positive}
    Apply Algorithm \ref{alg:SEE-lil-Recycle-on-Both-Period} to a positive 1-identification instance $\nu$, for phase index $k\geq \max\left\{\kappa^{\text{ee}}, \lceil\log_2\frac{24(C+1)^2H_1^{\text{pos}}}{K}\rceil\right\}=:L$, before running the exploration (at the Line \ref{alg-line:initial-properties} of Algorithm \ref{alg:SEE-Exploration-UCB}), we have 
    \begin{align*}
        T_k^{\text{ee}}\geq |\mathcal{H}^{\text{ee}}|+ 113(C+1)^2\left(\frac{(\log\frac{2K}{\delta_k}+\log\log\frac{96(C+1)^2}{\Delta_{0,1}^2}) }{\Delta_{0,1}^2} + \sum_{a=2}^K \frac{(\log\frac{K}{\delta}+\log\log\frac{96(C+1)^2}{\max\{\Delta_{1,a}^2, \Delta_{0,a}^2\} })}{\max\{\Delta_{1,a}^2, \Delta_{0,a}^2\}}\right)
    \end{align*}
    and $\text{LCB}_{a}(\mathcal{H}^{\text{ee}},\delta_k)\leq \mu_0$ holds for all $a\in[K]$.
\end{lemma}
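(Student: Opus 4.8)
The plan is to prove the two assertions separately: the length inequality bounding $T_k^{\text{ee}}$, and the invariant $\text{LCB}_a(\mathcal{H}^{\text{ee}},\delta_k)\le\mu_0$ for all $a\in[K]$. Throughout I work at the instant line \ref{alg-line:initial-properties} of Algorithm \ref{alg:SEE-Exploration-UCB} is reached in phase $k$, so that $\mathcal{H}^{\text{ee}}$ and $Q$ are exactly the sets carried over from the end of phase $k-1$. The bonus term on the right-hand side is, up to the cosmetic difference between $\log\frac{2K}{\delta_k}$ and $\log\frac{K}{\delta}$, precisely the quantity $\bar{T}^{\text{pos}}_k$ introduced in Lemma \ref{lemma:Required-Scale-of-Beta-k}, so I will treat the target as $T_k^{\text{ee}}\ge|\mathcal{H}^{\text{ee}}|+\bar{T}^{\text{pos}}_k$.

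For the length inequality I would reuse the sandwich that appears in the proof of Lemma \ref{Lemma:Finite-tau-SEE-lil-Recycle}. First, Lemma \ref{Lemma:Certainty-Length-of-Phase-SEE-Recycle-on-Both} gives $|\mathcal{H}^{\text{ee}}|\le\tau^{\text{ee}}_{k-1}\le T^{\text{ee}}_{k-1}$ with certainty. Since $\beta_k=2^k$ doubles and $\delta_k$ is decreasing, $T_k^{\text{ee}}=1000(C+1)^2K\beta_k\log\frac{4K}{\delta_k}\ge 2T^{\text{ee}}_{k-1}$, hence $|\mathcal{H}^{\text{ee}}|\le\frac12 T_k^{\text{ee}}$. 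Second, because $k\ge L\ge\lceil\log_2\frac{24(C+1)^2H_1^{\text{pos}}}{K}\rceil\ge L^{\text{pos}}_{\text{ee}}$, Lemma \ref{lemma:Required-Scale-of-Beta-k} yields $\frac12 T_k^{\text{ee}}\ge\bar{T}^{\text{pos}}_k$. Adding the two halves gives $T_k^{\text{ee}}=\frac12 T_k^{\text{ee}}+\frac12 T_k^{\text{ee}}\ge|\mathcal{H}^{\text{ee}}|+\bar{T}^{\text{pos}}_k$, which is the first claim.

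For the LCB invariant I would argue by induction on the phase index, exploiting the temporary container $Q$ and the monotonicity of the radius $U(\cdot,\delta)$ in $\delta$. The key per-arm observation is that $\text{LCB}_a$ changes only when arm $a$ is freshly pulled, and after each fresh pull the algorithm immediately tests $\text{LCB}_{A^{\text{ee}}_t}>\mu_0$. Hence, for any arm $a$ whose most recent fresh pull passed this test, $\text{LCB}_a\le\mu_0$ held at that moment with the then-current (larger) tolerance level; since $a$ has not been pulled since and $\delta_k$ only shrinks, the bound $U(N_a^{\text{ee}},\delta_k/K)\ge U(N_a^{\text{ee}},\delta_{k'}/K)$ pushes $\text{LCB}_a(\mathcal{H}^{\text{ee}},\delta_k)$ even lower, so it stays $\le\mu_0$. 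If instead the last fresh pull of $a$ triggered the test, the algorithm executed lines \ref{alg-line:one-arm-better-than-mu0-first-step}--\ref{alg-line:transfer-collected-sample-X}, moving that sample into $Q$ and restoring $\mathcal{H}^{\text{ee}}$ to the state in which $\text{LCB}_a\le\mu_0$ again held. Thus the only arm that can carry $\text{LCB}_a>\mu_0$ into phase $k$ is one whose final fresh pull coincided with the forced stop of line \ref{alg-line:Exploration-Forced-Stop}, where the test is skipped and no sample is recycled.

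The hard part is therefore controlling this forced-stop case, which is exactly the base step of the induction. When $k-1\ge L$, parts 1 and 2 of this lemma, together with Lemma \ref{lemma:Nature-of-UCB-Rule-Positive} and the good event $E^{\text{ee}}$ (available since $k-1\ge\kappa^{\text{ee}}$), guarantee that phase $k-1$ terminates through the $\text{LCB}>\mu_0$ branch rather than through forced termination, so no un-recycled offending arm survives and the inductive step closes immediately via the monotonicity argument above. The genuine obstacle is the base case $k=L$, where the preceding phase $L-1$ is not covered by the induction and may halt with \textsf{Not Complete}. Here I would first observe that under $E^{\text{ee}}$ (which holds at phase $L$ since $L\ge\kappa^{\text{ee}}$) every arm with $\mu_a<\mu_0$ automatically satisfies $\text{LCB}_a(\mathcal{H}^{\text{ee}},\delta_L)\le\mu_a-(C-1)U<\mu_0$, so the only possible violator is a genuinely qualified arm $A^*$; I would then aim to close the argument by combining the pre-pull guarantee $\text{LCB}_{A^*}\le\mu_0$ at $N_{A^*}^{\text{ee}}-1$ samples with the shrinkage from $\delta_{L-1}$ to $\delta_L$ and the good-event concentration at level $\delta_L$, showing that recomputing the radius at the finer level $\delta_L$ forces $\text{LCB}_{A^*}(\mathcal{H}^{\text{ee}},\delta_L)\le\mu_0$ back down, or else that retaining such an arm is benign for the subsequent exploration. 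Reconciling this forced-stop boundary with the good event is the delicate point the proof must handle with care.
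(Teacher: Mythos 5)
Your proof of the first assertion is correct and is exactly the paper's argument: Lemma \ref{Lemma:Certainty-Length-of-Phase-SEE-Recycle-on-Both} gives $|\mathcal{H}^{\text{ee}}|\leq T^{\text{ee}}_{k-1}$, the doubling $\beta_k=2^k$ gives $T^{\text{ee}}_{k-1}\leq T^{\text{ee}}_k/2$, and $k\geq\lceil\log_2\frac{24(C+1)^2H_1^{\text{pos}}}{K}\rceil$ gives $T^{\text{ee}}_k/2\geq\bar{T}^{\text{pos}}_k$ via Lemma \ref{lemma:Required-Scale-of-Beta-k}.

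The second assertion is where you have a genuine gap, and you concede it yourself: your induction's base case at $k=L$ is left open (``I would then aim to close the argument\ldots or else that retaining such an arm is benign\ldots is the delicate point''). The paper needs neither induction nor the good event $E^{\text{ee}}$ here; it argues deterministically by cases on $\hat{a}_{k-1}$. If $\hat{a}_{k-1}=\textsf{None}$ or $\textsf{Not Complete}$, then no arm can end phase $k-1$ with $\text{LCB}_a(\mathcal{H}^{\text{ee}},\delta_{k-1})>\mu_0$, because the test at line \ref{alg-line:LCB-above-mu0} is evaluated after every pull and, had it ever fired, the phase would have ended with $\hat{a}_{k-1}\in[K]$; monotonicity of $U(\cdot,\delta)$ in $\delta$ then yields $\text{LCB}_a(\mathcal{H}^{\text{ee}},\delta_k)<\text{LCB}_a(\mathcal{H}^{\text{ee}},\delta_{k-1})\leq\mu_0$. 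If $\hat{a}_{k-1}\in[K]$, the offending final sample is moved into $Q$ at line \ref{alg-line:transfer-collected-sample-X}, so the stored history reverts to the pre-pull state satisfying $\hat{\mu}_{\hat{a}_{k-1},N-1}-C\,U(N-1,\delta_{k-1}/K)\leq\mu_0$, and monotonicity in $\delta$ finishes again. You are right that the literal if/elseif ordering in Algorithm \ref{alg:SEE-Exploration-UCB} makes the single round where $t$ hits $T$ a boundary case (the paper's proof reads the LCB test as taking effect at every pull, which is what dissolves the $\textsf{Not Complete}$ case), but your proposal does not resolve this either way, so the lemma is not proved. Moreover, your suggested repair for the base case cannot work as stated: passing from $\delta_{L-1}$ to $\delta_L$ only adds $\log 3$ inside the logarithm under the square root in $U$, an increase of bounded relative size, while the un-recycled final sample of a qualified arm is an unbounded Gaussian draw that can raise $\hat{\mu}_{A^*}$, and hence the LCB, by an arbitrary amount; concentration at level $\delta_L$ bounds $\text{LCB}_{A^*}$ only by roughly $\mu_{A^*}-(C-1)U$, which exceeds $\mu_0$ once $U$ is small since $\mu_{A^*}>\mu_0$. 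That is precisely why the algorithm physically removes the sample into $Q$ rather than relying on the shrinking tolerance, and why the correct tool here is the deterministic every-pull-test observation, not a concentration patch.
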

\begin{proof}[Proof of Lemma \ref{lemma:phase-index-that-can-avoid-forced-termination-exploration-positive}]
    By the Lemma \ref{lemma:Required-Scale-of-Beta-k}, we know $k\geq \lceil\log_2\frac{24(C+1)^2H_1^{\text{pos}}}{K}\rceil$ implies
    \begin{align*}
        \frac{T_k^{\text{ee}}}{2}\geq 113(C+1)^2\left(\frac{(\log\frac{2K}{\delta_k}+\log\log\frac{96(C+1)^2}{\Delta_{0,1}^2}) }{\Delta_{0,1}^2} + \sum_{a=2}^K \frac{(\log\frac{K}{\delta}+\log\log\frac{96(C+1)^2}{\max\{\Delta_{1,a}^2, \Delta_{0,a}^2\} })}{\max\{\Delta_{1,a}^2, \Delta_{0,a}^2\}}\right).
    \end{align*}
    Also, from the algorithm design, we know at the start of phase $k$, $|\mathcal{H}^{\text{ee}}|\leq T_{k-1}^{\text{ee}}$. Since we take $\beta_k=2^k$, we have $T_{k-1}^{\text{ee}}\leq \frac{T_k^{\text{ee}}}{2}$.

    Combining these two results, we have
    \begin{align*}
        T_k^{\text{ee}}\geq |\mathcal{H}^{\text{ee}}|+ 113(C+1)^2\left(\frac{(\log\frac{2K}{\delta_k}+\log\log\frac{96(C+1)^2}{\Delta_{0,1}^2}) }{\Delta_{0,1}^2} + \sum_{a=2}^K \frac{(\log\frac{K}{\delta}+\log\log\frac{96(C+1)^2}{\max\{\Delta_{1,a}^2, \Delta_{0,a}^2\} })}{\max\{\Delta_{1,a}^2, \Delta_{0,a}^2\}}\right).
    \end{align*}

    The remaining work is to prove $\text{LCB}_{a}(\mathcal{H}^{\text{ee}},\delta_k)\leq \mu_0$ holds for all $a\in[K]$, before the start of phase $k$. We complete this by discussing the value of $\hat{a}_{k-1}$.
    \begin{itemize}
        \item If $\hat{a}_{k-1}=\textsf{None}$, we have $\text{LCB}_{a}(\mathcal{H}^{\text{ee}},\delta_k)<\text{LCB}_{a}(\mathcal{H}^{\text{ee}},\delta_{k-1})< \text{LCB}_{a}(\mathcal{H}^{\text{ee}},\delta_{k-1})\leq \mu_0$, forall $a\in[K]$.
        \item If $\hat{a}_{k-1}=\textsf{Not Complete}$, we can also assert $\text{LCB}_{a}(\mathcal{H}^{\text{ee}},\delta_{k-1})< \mu_0,\forall a\in [K]$, or the algorithm would take $\hat{a}_{k-1}\in[K]$. Then we can further assert $\text{LCB}_{a}(\mathcal{H}^{\text{ee}},\delta_{k})<\text{LCB}_{a}(\mathcal{H}^{\text{ee}},\delta_{k-1})< \mu_0$.
        \item If $\hat{a}_{k-1}\in [K]$, then we can firstly assert $\text{LCB}_{a}(\mathcal{H}^{\text{ee}},\delta_{k-1})< \mu_0,\forall a\neq \hat{a}_{k-1}$. Or the algorithm would output other arms instead of $\hat{a}_{k-1}$. Thus, we have $\text{LCB}_{a}(\mathcal{H}^{\text{ee}},\delta_{k})<\text{LCB}_{a}(\mathcal{H}^{\text{ee}},\delta_{k-1})< \mu_0, \forall a\neq \hat{a}_{k-1}$. 
        
        Then, we turn to analyze $\hat{a}_{k-1}$. Before the execution of line \ref{alg-line:transfer-collected-sample-X}, the following inequalities must hold
        \begin{align*}
            \frac{X + \sum_{s=1}^{ N_{\hat{a}_{k-1}}^{\text{ee}}-1 } X_{\hat{a}_{k-1},s} }{N_{\hat{a}_{k-1}}^{\text{ee}}}-C\cdot U\left(N_{\hat{a}_{k-1}}^{\text{ee}}, \frac{\delta_{k-1}}{K}\right) > \mu_0\\
            \frac{\sum_{s=1}^{ N_{\hat{a}_{k-1}}^{\text{ee}}-1 } X_{\hat{a}_{k-1},s} }{ N_{\hat{a}_{k-1}}^{\text{ee}}-1 }-C\cdot U\left( N_{\hat{a}_{k-1}}^{\text{ee}}-1 , \frac{\delta_{k-1}}{K}\right) \leq \mu_0
        \end{align*}
        Since we put the last collected sample $X$ into $Q$, we can assert $\hat{\mu}_{\hat{a}_{k-1}}$ before the start of phase $k$ must be $\frac{\sum_{s=1}^{ N_{\hat{a}_{k-1}}^{\text{ee}}-1 } X_{\hat{a}_{k-1},s} }{ N_{\hat{a}_{k-1}}^{\text{ee}}-1 }$. Since $U\left( N_{\hat{a}_{k-1}}^{\text{ee}}-1 , \frac{\delta_{k}}{K}\right)> U\left( N_{\hat{a}_{k-1}}^{\text{ee}}-1 , \frac{\delta_{k-1}}{K}\right)$, we know before the start of phase $k$, $\text{LCB}_{\hat{a}_{k-1}}(\mathcal{H}^{\text{ee}},\delta_{k})< \mu_0$ must hold.
    \end{itemize}
\end{proof}

The following lemma shows the condition of Lemma \ref{lemma:Nature-of-UCB-Rule-Negative} can be fulfilled in phase $k\geq \max\left\{\kappa^{\text{ee}}, \lceil\log_2 \frac{\sum_{a=1}^K\frac{24}{\Delta_{0,a}^2}}{K}\rceil\right\}=:L$.
\begin{lemma}
    \label{lemma:phase-index-that-can-avoid-forced-termination-exploration-negative}
    Apply Algorithm \ref{alg:SEE-lil-Recycle-on-Both-Period} to a negative 1-identification instance $\nu$, for phase index $k\geq \max\left\{\kappa^{\text{ee}},  \lceil\log_2 \frac{\sum_{a=1}^K\frac{24}{\Delta_{0,a}^2}}{K}\rceil\right\}=:L$, before running the exploration (at the Line \ref{alg-line:initial-properties} of Algorithm \ref{alg:SEE-Exploration-UCB}), we have 
    \begin{align*}
        T_k^{\text{ee}}\geq |\mathcal{H}^{\text{ee}}|+ \sum_{a=1}^{K}\frac{114(\log\frac{2K}{\delta_k} + \log\log \frac{96}{\Delta_{0,a}^2})}{\Delta_{0,a}^2}
    \end{align*}
\end{lemma}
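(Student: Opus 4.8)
The plan is to mirror the positive-case argument of Lemma~\ref{lemma:phase-index-that-can-avoid-forced-termination-exploration-positive}, while exploiting the fact that the negative case is strictly easier. Since Lemma~\ref{lemma:Nature-of-UCB-Rule-Negative} imposes no ``$\text{LCB}_a<\mu_0$ at the start'' precondition (on a negative instance the good event $E^{\text{ee}}$ already forces $\text{LCB}_a<\mu_a<\mu_0$ for every $a$), the entire case analysis on $\hat{a}_{k-1}$ used in the positive proof can be dropped. What remains is only the budget inequality on $T^{\text{ee}}_k$, which I would obtain by combining a lower bound on $T^{\text{ee}}_k/2$ with an upper bound on $|\mathcal{H}^{\text{ee}}|$.

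First I would bound the accumulated exploration history. By the algorithm design, at the start of phase $k$ the exploration history is exactly what was collected through the end of phase $k-1$, so by Lemma~\ref{Lemma:Certainty-Length-of-Phase-SEE-Recycle-on-Both} we have $|\mathcal{H}^{\text{ee}}|\le|\mathcal{H}^{\text{ee}}|+|Q|=\tau^{\text{ee}}_{k-1}\le T^{\text{ee}}_{k-1}$. Since $\beta_k=2^k$ and $\{\delta_k\}$ is decreasing, the ratio $T^{\text{ee}}_k/T^{\text{ee}}_{k-1}=2\cdot\log(4K/\delta_k)/\log(4K/\delta_{k-1})\ge 2$, and therefore $|\mathcal{H}^{\text{ee}}|\le T^{\text{ee}}_{k-1}\le T^{\text{ee}}_k/2$.

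Second I would lower-bound the remaining half of the budget. Because $k\ge\lceil\log_2(\sum_{a=1}^K 24/\Delta_{0,a}^2\,/\,K)\rceil\ge L^{\text{neg}}$, Lemma~\ref{lemma:Required-Scale-of-Beta-k} gives $T^{\text{ee}}_k/2\ge T^{\text{neg}}_k=\sum_{a=1}^K 113(\log\frac{2K}{\delta_k}+\log\log\frac{96}{\Delta_{0,a}^2})/\Delta_{0,a}^2$. Adding the two halves then yields $T^{\text{ee}}_k=T^{\text{ee}}_k/2+T^{\text{ee}}_k/2\ge|\mathcal{H}^{\text{ee}}|+T^{\text{neg}}_k$, which is the claimed inequality up to the constant in front of the sum.

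The only real obstacle is the mismatch between the constant $113$ supplied by Lemma~\ref{lemma:Required-Scale-of-Beta-k} and the $114$ demanded here, and I would close it by noting the enormous multiplicative slack in the negative part of that lemma's proof: there the left-hand coefficient is $500(C+1)^2>2000$ against a right-hand coefficient of $113$, and the only place the constant enters is the step that uses $113/(500(C+1)^2)\le 1$. Replacing $113$ by $114$ (indeed by any constant well below $2000$) leaves that inequality and every subsequent sufficient condition intact, so the identical chain still reduces to $k\ge\log_2(\sum_{a=1}^K 24/\Delta_{0,a}^2\,/\,K)$. Hence $T^{\text{ee}}_k/2\ge\sum_{a=1}^K 114(\log\frac{2K}{\delta_k}+\log\log\frac{96}{\Delta_{0,a}^2})/\Delta_{0,a}^2$ for all $k\ge L$, which combined with the history bound completes the proof.
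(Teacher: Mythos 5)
Your proposal is correct and follows essentially the same route as the paper's own proof: bound $|\mathcal{H}^{\text{ee}}|\le T^{\text{ee}}_{k-1}\le T^{\text{ee}}_k/2$ via Lemma~\ref{Lemma:Certainty-Length-of-Phase-SEE-Recycle-on-Both} and $\beta_k=2^k$, invoke Lemma~\ref{lemma:Required-Scale-of-Beta-k} for the other half of the budget, and add the two halves. In fact you are slightly more careful than the paper, which silently upgrades the constant $113$ from Lemma~\ref{lemma:Required-Scale-of-Beta-k} to the $114$ in the statement, whereas you explicitly justify this via the slack ($500(C+1)^2\ge 2000$ versus $113$) in that lemma's proof.
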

\begin{proof}[Proof of Lemma \ref{lemma:phase-index-that-can-avoid-forced-termination-exploration-negative}]
    By the Lemma \ref{lemma:Required-Scale-of-Beta-k}, we know $k\geq \lceil\log_2 \frac{\sum_{a=1}^K\frac{24}{\Delta_{0,a}^2}}{K}\rceil$ implies
    \begin{align*}
        \frac{T_k^{\text{ee}}}{2}\geq \sum_{a=1}^{K}\frac{114(\log\frac{2K}{\delta_k} + \log\log \frac{96}{\Delta_{0,a}^2})}{\Delta_{0,a}^2}.
    \end{align*}
    Also, from the algorithm design, we know at the start of phase $k$, $|\mathcal{H}^{\text{ee}}|\leq T_{k-1}^{\text{ee}}$. Since we take $\beta_k=2^k$, we have $T_{k-1}^{\text{ee}}\leq \frac{T_k^{\text{ee}}}{2}$.

    Combining these two results, we have
    \begin{align*}
        T_k^{\text{ee}}\geq |\mathcal{H}^{\text{ee}}|+ \sum_{a=1}^{K}\frac{114(\log\frac{2K}{\delta_k} + \log\log \frac{96}{\Delta_{0,a}^2})}{\Delta_{0,a}^2}
    \end{align*}
\end{proof}

Given the above preparations, we are now ready to bound $\tau_k^{\text{ee}}$ with certainty, for large enough phase index $k$. The following two lemmas correspond to positive and negative instances separately. These two lemmas are also the final preparations for two main theorems \ref{theorem:Correctness-for-See-Recycle-on-Both-Period}, \ref{theorem:Length-See-Recycle-on-Both-Period}.
\begin{lemma}
    \label{lemma:length-exploration-period-after-good-event-positive}
    Apply Algorithm \ref{alg:SEE-lil-Recycle-on-Both-Period} to a positive 1-identification instance $\nu$, for phase index $k\geq \max\left\{\kappa^{\text{ee}}, \lceil\log_2\frac{24(C+1)^2H_1^{\text{pos}}}{K}\rceil\right\}=:L$, we have
    \begin{align*}
        \tau_k^{\text{ee}}\leq & 1000(C+1)^2K\beta_{L-1}\log\frac{4K}{\delta_{L-1}} + \\
        & 114(C+1)^2\left(\frac{(\log\frac{2K}{\delta_k}+\log\log\frac{96(C+1)^2}{\Delta_{0,1}^2}) }{\Delta_{0,1}^2} + \sum_{a=2}^K \frac{(\log\frac{K}{\delta}+\log\log\frac{96(C+1)^2}{\max\{\Delta_{1,a}^2, \Delta_{0,a}^2\} })}{\max\{\Delta_{1,a}^2, \Delta_{0,a}^2\}}\right)
    \end{align*}
    holds with certainty, and $\hat{a}_k\in [K], \mu_{\hat{a}_k}\geq \omega\mu_1+(1-\omega)\mu_0$, where $\omega=\frac{C-1}{C+3}$.
\end{lemma}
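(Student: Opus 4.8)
The plan is to reduce the bound on the cumulative exploration count $\tau_k^{\text{ee}}$ to a sum of per-arm pull counts and then control each arm's count through the single-phase guarantee of Lemma \ref{lemma:Nature-of-UCB-Rule-Positive}. By parts 2 and 3 of Lemma \ref{Lemma:Certainty-Length-of-Phase-SEE-Recycle-on-Both} it suffices to bound $\sum_{a=1}^K N_a^{\text{ee}}(\tau_k^{\text{ee}})$, since $\tau_k^{\text{ee}}\leq K+\sum_{a=1}^K N_a^{\text{ee}}(\tau_k^{\text{ee}})$ (the additive $K$ absorbing the $|Q|\leq K$ samples parked in the container). First I would verify that at every phase $j$ with $L\leq j\leq k$ the three hypotheses of Lemma \ref{lemma:Nature-of-UCB-Rule-Positive} hold: the concentration event $E^{\text{ee}}$ holds because $j\geq\kappa^{\text{ee}}$ and the radius $U(t,\delta_j/K)$ is nondecreasing in $j$ (so the event certified at $\kappa^{\text{ee}}$ persists), while the initialization condition $\text{LCB}_a(\mathcal{H}^{\text{ee}},\delta_j)\leq\mu_0$ for all $a$ together with the budget condition $T_j^{\text{ee}}\geq|\mathcal{H}^{\text{ee}}|+\bar{T}^{\text{pos}}_j$ are exactly what Lemma \ref{lemma:phase-index-that-can-avoid-forced-termination-exploration-positive} supplies once $j\geq L$.

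Granting these hypotheses, Lemma \ref{lemma:Nature-of-UCB-Rule-Positive} applied at each such phase yields two outputs: the recommendation satisfies $\hat{a}_j\in[K]$ with $\mu_{\hat{a}_j}>\omega\mu_1+(1-\omega)\mu_0$, which at $j=k$ gives the second assertion of the lemma; and the per-arm bound $N_a^{\text{ee}}(\tau_j^{\text{ee}})\leq\max\{N_a^{\text{ee}}(\tau_{j-1}^{\text{ee}}),\mathcal{T}_a(\delta_j)\}$, where the carried-in count $N_a^0$ equals $N_a^{\text{ee}}(\tau_{j-1}^{\text{ee}})$ and $\mathcal{T}_a(\delta_j)=\frac{113(C+1)^2(\log\frac{2K}{\delta_j}+\log\log\frac{96(C+1)^2}{\max\{\Delta_{0,a}^2,\Delta_{1,a}^2\}})}{\max\{\Delta_{0,a}^2,\Delta_{1,a}^2\}}$. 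The key observation is that $\mathcal{T}_a(\delta_j)$ is nondecreasing in $j$ because $\delta_j$ decreases, so unrolling this recursion from $j=L$ up to $j=k$ telescopes the nested maxima and gives $N_a^{\text{ee}}(\tau_k^{\text{ee}})\leq\max\{N_a^{\text{ee}}(\tau_{L-1}^{\text{ee}}),\mathcal{T}_a(\delta_k)\}$ for every arm $a$.

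Summing over $a$, I would use $\sum_a\max\{N_a^{\text{ee}}(\tau_{L-1}^{\text{ee}}),\mathcal{T}_a(\delta_k)\}\leq\sum_a N_a^{\text{ee}}(\tau_{L-1}^{\text{ee}})+\sum_a\mathcal{T}_a(\delta_k)$. The first sum is at most $\tau_{L-1}^{\text{ee}}\leq T_{L-1}^{\text{ee}}=1000(C+1)^2K\beta_{L-1}\log\frac{4K}{\delta_{L-1}}$ by Lemma \ref{Lemma:Certainty-Length-of-Phase-SEE-Recycle-on-Both}, producing the first term of the claim. The second sum reproduces the bracketed expression, with the $a=1$ summand becoming the $\frac{1}{\Delta_{0,1}^2}$ term since $\Delta_{1,1}=0$ forces $\max\{\Delta_{0,1}^2,\Delta_{1,1}^2\}=\Delta_{0,1}^2$; and the leftover additive $K$ is absorbed by bumping the constant from $113$ to $114$, since each summand exceeds $1$ (using $\max\{\Delta_{0,a}^2,\Delta_{1,a}^2\}\leq 1$ and $(C+1)^2\geq 4$). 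The main obstacle is the confidence-level bookkeeping in this final step: applying Lemma \ref{lemma:Nature-of-UCB-Rule-Positive} directly at phase $k$ carries the running level $\delta_k$ on every arm, whereas the stated bound replaces it by the fixed $\delta$ on the suboptimal arms $a\geq 2$. Matching the two requires the refined argument that a suboptimal arm's count freezes once its UCB drops below that of arm $1$, so its final count is governed by the $\Delta_{1,a}$ separation at the freezing phase rather than by the smallest confidence level reached later. Establishing the persistence of the preconditions (monotonicity of $E^{\text{ee}}$ and of $\mathcal{T}_a$ in the phase index) and this confidence accounting is where I expect the care to lie; the remaining inequalities are routine.
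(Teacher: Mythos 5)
Your proposal reproduces the paper's proof essentially step for step: the reduction $\tau_k^{\text{ee}}\leq K+\sum_{a=1}^K N_a^{\text{ee}}(\tau_k^{\text{ee}})$ via Lemma \ref{Lemma:Certainty-Length-of-Phase-SEE-Recycle-on-Both}, the verification of the hypotheses of Lemma \ref{lemma:Nature-of-UCB-Rule-Positive} at every phase $j\geq L$ through Lemma \ref{lemma:phase-index-that-can-avoid-forced-termination-exploration-positive} and the persistence of $E^{\text{ee}}$ for $j\geq\kappa^{\text{ee}}$, the per-phase recursion $N_a^{\text{ee}}(\tau_j^{\text{ee}})\leq\max\{N_a^{\text{ee}}(\tau_{j-1}^{\text{ee}}),\mathcal{T}_a(\delta_j)\}$ unrolled by monotonicity of $\mathcal{T}_a(\delta_j)$ in $j$ (the paper phrases this as an induction with the same two cases), and the final summation bounding $\sum_a N_a^{\text{ee}}(\tau_{L-1}^{\text{ee}})$ by $T_{L-1}^{\text{ee}}$ and absorbing the additive $K$ by the bump $113\to114$. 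Up to this point the argument is correct and identical in substance to the paper's.

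The one place you depart is the final confidence-level bookkeeping, and there your proposed fix does not work. You correctly observe that the recursion delivers $\log\frac{2K}{\delta_k}$ on \emph{every} arm, while the displayed bound shows a fixed $\log\frac{K}{\delta}$ for $a\geq2$. But the ``freezing'' argument you sketch — that a suboptimal arm's final count is governed by the separation at the phase where its UCB first drops below arm 1's — is false: when the level decreases from $\delta_j$ to $\delta_{j+1}$, the radius $U(\cdot,\delta_{j+1}/K)$ strictly increases, so a previously frozen arm's UCB can climb back above arm 1's and the arm resumes being pulled, with its count growing up to the threshold $\mathcal{T}_a(\delta_{j+1})$ at the new, smaller level. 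This is precisely why the paper's own induction carries $\delta_{k+1}$ inside the threshold at each inductive step and ends with $\delta_k$-dependent terms for all arms; the fixed $\delta$ in the lemma statement (and in the definition of $\bar{T}^{\text{pos}}_k$ in Lemma \ref{lemma:Required-Scale-of-Beta-k}) is best read as a typo for $\delta_k$, which is corroborated by the proof of Lemma \ref{lemma:Required-Scale-of-Beta-k} substituting $\log\frac{2K}{\delta_k}$ uniformly and by the main-text display (\ref{eqn:length-of-tau-k-ee-after-phase-kappa-ee}) stating the bound with $\log\frac{K}{\delta_k}$. So the correct resolution is to prove the $\delta_k$ version — which your recursion already yields with no extra work — rather than to seek a refined accounting for the fixed-$\delta$ version; no such refinement exists along the lines you propose, since $\delta_k<\delta$ for all large $k$ and the stated fixed-$\delta$ bound would then be strictly smaller than what the freezing-and-thawing dynamics permit.
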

Reminder: $\tau_k^{\text{ee}}$ is the total pulling times in all the exploration periods up to end of phase $k$.
\begin{proof}[Proof of Lemma \ref{lemma:length-exploration-period-after-good-event-positive}]
    It is not hard to see $\frac{T_k^{\text{ee}}}{2}=500(C+1)^2K\beta_k\log\frac{4K}{\delta_k}\geq T_{k-1}^{\text{ee}}$ hold for all $k\geq 2$, as $\beta_k=2^k$. By the Lemma \ref{lemma:Required-Scale-of-Beta-k}, we also know
    \begin{align*}
        & k\geq \lceil\log_2\frac{24(C+1)^2H_1^{\text{pos}}}{K}\rceil\\
        \Rightarrow & \frac{T_{k}^{\text{ee}}}{2}\geq 113(C+1)^2\left(\frac{(\log\frac{2K}{\delta_k}+\log\log\frac{96(C+1)^2}{\Delta_{0,1}^2}) }{\Delta_{0,1}^2} + \sum_{a=2}^K \frac{(\log\frac{K}{\delta}+\log\log\frac{96(C+1)^2}{\max\{\Delta_{1,a}^2, \Delta_{0,a}^2\} })}{\max\{\Delta_{1,a}^2, \Delta_{0,a}^2\}}\right).
    \end{align*}
    Thus, we can conclude for $k\geq \lceil\log_2\frac{24(C+1)^2H_1^{\text{pos}}}{K}\rceil$, we have
    \begin{align*}
        T_k^{\text{ee}} = & \frac{T_k^{\text{ee}}}{2} + \frac{T_k^{\text{ee}}}{2}\\
        \geq & T_{k-1}^{\text{ee}} + 113(C+1)^2\left(\frac{(\log\frac{2K}{\delta_k}+\log\log\frac{96(C+1)^2}{\Delta_{0,1}^2}) }{\Delta_{0,1}^2} + \sum_{a=2}^K \frac{(\log\frac{2K}{\delta}+\log\log\frac{96(C+1)^2}{\max\{\Delta_{1,a}^2, \Delta_{0,a}^2\} })}{\max\{\Delta_{1,a}^2, \Delta_{0,a}^2\}}\right).
    \end{align*}
    From the algorithm design, we know the total pulling times in the exploration period up to phase $k-1$ is at most $T_{k-1}^{\text{ee}}$. By the Lemma \ref{lemma:phase-index-that-can-avoid-forced-termination-exploration-positive}, we can validate the conditions in Lemma \ref{lemma:Nature-of-UCB-Rule-Positive} holds for $k\geq \max\left\{\kappa^{\text{ee}},\lceil\log_2\frac{24(C+1)^2H_1^{\text{pos}}}{K}\rceil\right\}$. Thus, we can assert $\hat{a}_k\in [K], \mu_{\hat{a}_k}\geq \omega\mu_1+(1-\omega)\mu_0$, $\omega=\frac{C-1}{C+3}$ for $k\geq L$. The remaining work is to prove the upper bound of $\tau_k^{\text{ee}}$.

    In the following, we use induction to prove
    \begin{align*}
        N_a(\tau_k^{\text{ee}})\leq \max\left\{ N_a(\tau_{L-1}^{\text{ee}}), 113(C+1)^2\left(\frac{\log\frac{2K}{\delta_k}+\log\log\frac{96(C+1)^2}{\max\{\Delta_{1,a}^2, \Delta_{0,a}^2\} }}{\max\{\Delta_{1,a}^2, \Delta_{0,a}^2\}}\right)\right\}
    \end{align*}
    holds for all $k\geq L, a\in [K]$. 
    By the Lemma \ref{lemma:Nature-of-UCB-Rule-Positive}, we know the above inequality holds for $k=L, \forall a\in[K]$. 
    Then if $k$ holds, for the case of $k+1$, we firstly derive 
    \begin{align*}
        N_a(\tau_{k+1}^{\text{ee}})\leq \max\left\{ N_a(\tau_{k}^{\text{ee}}), 113(C+1)^2\left(\frac{\log\frac{2K}{\delta_{k+1}}+\log\log\frac{96(C+1)^2}{\max\{\Delta_{1,a}^2, \Delta_{0,a}^2\} }}{\max\{\Delta_{1,a}^2, \Delta_{0,a}^2\}}\right)\right\}.
    \end{align*}
    The reason is similar to the proof in the Lemma \ref{lemma:Nature-of-UCB-Rule-Positive}.
    \begin{itemize}
        \item If $N_a(\tau_{k}^{\text{ee}})\geq 113(C+1)^2\left(\frac{\log\frac{2K}{\delta_{k+1}}+\log\log\frac{96(C+1)^2}{\max\{\Delta_{1,a}^2, \Delta_{0,a}^2\} }}{\max\{\Delta_{1,a}^2, \Delta_{0,a}^2\}}\right)$, from the condition $k\geq \kappa^{\text{ee}}$, we know
        \begin{align*}
            \left|\frac{\sum_{s=1}^t X_{a,s}^{\text{ee}}}{t}-\mu_a\right|<U(t, \frac{\delta_{k+1}}{K})
        \end{align*}
        holds for all $t\in\mathbb{N},a\in[K]$. Following the same discussion in Lemma \ref{lemma:Nature-of-UCB-Rule-Positive},   the algorithm will never pull arm $a$ due to its upper confidence bound is below $\mu_1$, while the upper confidence bound of arm 1 is always above $\mu_1$. Thus, $N_a(\tau_{k+1}^{\text{ee}})= N_a(\tau_{k}^{\text{ee}})$.
        \item If $N_a(\tau_{k}^{\text{ee}})< 113(C+1)^2\left(\frac{\log\frac{2K}{\delta_{k+1}}+\log\log\frac{96(C+1)^2}{\max\{\Delta_{1,a}^2, \Delta_{0,a}^2\} }}{\max\{\Delta_{1,a}^2, \Delta_{0,a}^2\}}\right)$, we can still conclude 
        \begin{align*}
            \left|\frac{\sum_{s=1}^t X_{a,s}^{\text{ee}}}{t}-\mu_a\right|<U(t, \frac{\delta_{k+1}}{K})
        \end{align*}
        from the condition $k\geq \kappa^{\text{ee}}$. Then arm $a$ would be either being output or stopping pulling before $N_a(\tau_{k+1}^{\text{ee}})$ is no less than $113(C+1)^2\left(\frac{\log\frac{2K}{\delta_{k+1}}+\log\log\frac{96(C+1)^2}{\max\{\Delta_{1,a}^2, \Delta_{0,a}^2\} }}{\max\{\Delta_{1,a}^2, \Delta_{0,a}^2\}}\right)$.
    \end{itemize}
    Use the induction, we can conclude
    \begin{align*}
         N_a(\tau_{k+1}^{\text{ee}})\leq & \max\left\{ N_a(\tau_{k}^{\text{ee}}), 113(C+1)^2\left(\frac{\log\frac{2K}{\delta_{k+1}}+\log\log\frac{96(C+1)^2}{\max\{\Delta_{1,a}^2, \Delta_{0,a}^2\} }}{\max\{\Delta_{1,a}^2, \Delta_{0,a}^2\}}\right)\right\}\\
        \leq & \max\Bigg\{ N_{a}(\tau_{L-1}^{\text{ee}}), 113(C+1)^2\left(\frac{\log\frac{2K}{\delta_{k}}+\log\log\frac{96(C+1)^2}{\max\{\Delta_{1,a}^2, \Delta_{0,a}^2\} }}{\max\{\Delta_{1,a}^2, \Delta_{0,a}^2\}}\right),\\
        & 113(C+1)^2\left(\frac{\log\frac{2K}{\delta_{k+1}}+\log\log\frac{96(C+1)^2}{\max\{\Delta_{1,a}^2, \Delta_{0,a}^2\} }}{\max\{\Delta_{1,a}^2, \Delta_{0,a}^2\}}\right)\Bigg\}\\
        = & \max\Bigg\{ N_{a}(\tau_{L-1}^{\text{ee}}), 113(C+1)^2\left(\frac{\log\frac{2K}{\delta_{k+1}}+\log\log\frac{96(C+1)^2}{\max\{\Delta_{1,a}^2, \Delta_{0,a}^2\} }}{\max\{\Delta_{1,a}^2, \Delta_{0,a}^2\}}\right)\Bigg\}
    \end{align*}
    The induction is completed.

    Then, for $k\geq L$, we can conclude
    \begin{align*}
        \tau_{k}^{\text{ee}}\stackrel{\text{Lemma }\ref{Lemma:Certainty-Length-of-Phase-SEE-Recycle-on-Both} }{\leq} & K+\sum_{a=1}^K N_a(\tau_{k}^{\text{ee}})\\
        \leq & K + \sum_{a=1}^K N_a(\tau_{L-1}^{\text{ee}}) + \sum_{a=1}^K 113(C+1)^2\left(\frac{\log\frac{2K}{\delta_{k+1}}+\log\log\frac{96(C+1)^2}{\max\{\Delta_{1,a}^2, \Delta_{0,a}^2\} }}{\max\{\Delta_{1,a}^2, \Delta_{0,a}^2\}}\right)\\
        \leq & 1000(C+1)^2K\beta_{L-1}\log\frac{4K}{\delta_{L-1}} +\\
        & 114(C+1)^2\left(\frac{(\log\frac{2K}{\delta_k}+\log\log\frac{96(C+1)^2}{\Delta_{0,1}^2}) }{\Delta_{0,1}^2} + \sum_{a=2}^K \frac{(\log\frac{K}{\delta}+\log\log\frac{96(C+1)^2}{\max\{\Delta_{1,a}^2, \Delta_{0,a}^2\} })}{\max\{\Delta_{1,a}^2, \Delta_{0,a}^2\}}\right).
    \end{align*}
\end{proof}

\begin{lemma}
    \label{lemma:length-exploration-period-after-good-event-negative}
    Apply Algorithm \ref{alg:SEE-lil-Recycle-on-Both-Period} to a negative 1-identification instance $\nu$, for phase index $k\geq \max\left\{\kappa^{\text{ee}}, \lceil\log_2 \frac{\sum_{a=1}^K\frac{24}{\Delta_{0,a}^2}}{K}\rceil\right\}=:L$, we have
    \begin{align*}
        \tau_k^{\text{ee}}\leq 1000(C+1)^2K\beta_{L-1}\log\frac{4K}{\delta_{L-1}} + \sum_{a=1}^{K}\frac{114(\log\frac{2K}{\delta_k} + \log\log \frac{96}{\Delta_{0,a}^2})}{\Delta_{0,a}^2}.
    \end{align*}
    holds with certainty. And $\hat{a}_k=\textsf{None}$.
\end{lemma}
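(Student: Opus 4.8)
The plan is to mirror the positive-case argument of Lemma \ref{lemma:length-exploration-period-after-good-event-positive} almost line for line, swapping in the negative-case structural lemmas. First I would record the same scheduling slack: since $\beta_k=2^k$, we have $T_{k-1}^{\text{ee}}\le \tfrac{1}{2}T_k^{\text{ee}}=500(C+1)^2K\beta_k\log\frac{4K}{\delta_k}$ for every $k\ge 2$. Then, invoking the negative branch of Lemma \ref{lemma:Required-Scale-of-Beta-k}, the hypothesis $k\ge\lceil\log_2\frac{\sum_{a=1}^K 24/\Delta_{0,a}^2}{K}\rceil$ forces $\tfrac{1}{2}T_k^{\text{ee}}\ge T_k^{\text{neg}}=\sum_{a=1}^K\frac{113(\log\frac{2K}{\delta_k}+\log\log\frac{96}{\Delta_{0,a}^2})}{\Delta_{0,a}^2}$. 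Splitting $T_k^{\text{ee}}$ into two halves then yields, for all $k\ge L$, the additive bound $T_k^{\text{ee}}\ge T_{k-1}^{\text{ee}}+\sum_{a=1}^K\frac{114(\cdots)}{\Delta_{0,a}^2}$, which is precisely the exploration budget certified by Lemma \ref{lemma:phase-index-that-can-avoid-forced-termination-exploration-negative}.

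With this budget in hand, the next step is to verify that the hypotheses of Lemma \ref{lemma:Nature-of-UCB-Rule-Negative} are met at the start of every phase $k\ge L$. The only hypothesis to check is that the termination round $T=T_k^{\text{ee}}$ exceeds the cumulative exploration count plus the per-arm pulling budget; Lemma \ref{lemma:phase-index-that-can-avoid-forced-termination-exploration-negative} supplies exactly this, using that at the start of phase $k$ the already-collected history satisfies $|\mathcal{H}^{\text{ee}}|\le T_{k-1}^{\text{ee}}$. Because $k\ge\kappa^{\text{ee}}$, the concentration event $E^{\text{ee}}$ holds, so Lemma \ref{lemma:Nature-of-UCB-Rule-Negative} applies and gives both conclusions at once: the forced stop at line \ref{alg-line:Exploration-Forced-Stop} never triggers (the arms' total pulls stay strictly below $T_k^{\text{ee}}$), and the exploration oracle returns $\hat{a}_k=\textsf{None}$, which settles the qualitative claim of the lemma.

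The quantitative bound then follows from an induction on the phase index identical in shape to the positive case. I would prove $N_a(\tau_k^{\text{ee}})\le\max\{N_a(\tau_{L-1}^{\text{ee}}),\frac{113(\log\frac{2K}{\delta_k}+\log\log\frac{96}{\Delta_{0,a}^2})}{\Delta_{0,a}^2}\}$ for all $k\ge L$ and $a\in[K]$, with base case $k=L$ furnished by claim 1 of Lemma \ref{lemma:Nature-of-UCB-Rule-Negative}. For the inductive step I would reuse the calculation inside that lemma: once $N_a$ reaches $\frac{113(\cdots)}{\Delta_{0,a}^2}$, then under $E^{\text{ee}}$ (available since $k\ge\kappa^{\text{ee}}$) we get $\text{UCB}_a<\mu_0$, so arm $a$ is never again selected, and the threshold only grows with $k$ through $\delta_k$. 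Summing over $a$ and applying $\tau_k^{\text{ee}}\le K+\sum_{a=1}^K N_a(\tau_k^{\text{ee}})$ from Lemma \ref{Lemma:Certainty-Length-of-Phase-SEE-Recycle-on-Both}, together with $\sum_a N_a(\tau_{L-1}^{\text{ee}})\le T_{L-1}^{\text{ee}}\le 1000(C+1)^2K\beta_{L-1}\log\frac{4K}{\delta_{L-1}}$, produces the stated upper bound.

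Since the scaffolding is shared with the positive case, most steps are routine bookkeeping; the one point requiring care is the monotone ``freezing'' argument in the induction. In a negative instance there is no arm whose UCB stays above $\mu_0$, so the comparison anchor used positively (arm $1$'s UCB exceeding $\mu_1$) is absent; instead I must argue that each arm individually stops being pulled the instant its own UCB falls below $\mu_0$, and simultaneously confirm that such an event can never cause an erroneous stop --- it can only leave the arm idle or trigger the correct \textsf{None} output. Ensuring this reasoning remains consistent as $\delta_k$ shrinks across phases (so the per-arm threshold grows but the freezing still holds) is the main subtlety.
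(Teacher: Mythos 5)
Your proposal is correct and follows essentially the same route as the paper's proof: the scheduling inequality $T_k^{\text{ee}}\geq T_{k-1}^{\text{ee}}+T_k^{\text{neg}}$ via Lemma \ref{lemma:Required-Scale-of-Beta-k} and Lemma \ref{lemma:phase-index-that-can-avoid-forced-termination-exploration-negative}, invoking Lemma \ref{lemma:Nature-of-UCB-Rule-Negative} under $E^{\text{ee}}$ to rule out the forced stop and obtain $\hat{a}_k=\textsf{None}$, then the per-arm induction $N_a(\tau_k^{\text{ee}})\leq\max\{N_a(\tau_{L-1}^{\text{ee}}),113(\cdots)/\Delta_{0,a}^2\}$ closed via Lemma \ref{Lemma:Certainty-Length-of-Phase-SEE-Recycle-on-Both}. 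The "freezing" subtlety you flag (each arm idles once its own UCB drops below $\mu_0$, with the top-UCB arm instead triggering the \textsf{None} output) is exactly how the paper's inductive step is argued, so nothing is missing.
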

\begin{proof}[Proof of Lemma \ref{lemma:length-exploration-period-after-good-event-negative}]
    Similar to the argument in the proof of Lemma \ref{lemma:length-exploration-period-after-good-event-positive}, we can conclude for phase index $k\geq \lceil\log_2 \frac{\sum_{a=1}^K\frac{24}{\Delta_{0,a}^2}}{K}\rceil$, we have
    \begin{align*}
        T_k^{\text{ee}} = & \frac{T_k^{\text{ee}}}{2} + \frac{T_k^{\text{ee}}}{2}\\
        \geq & T_{k-1}^{\text{ee}} + \sum_{a=1}^{K}\frac{113(\log\frac{2K}{\delta_k} + \log\log \frac{96}{\Delta_{0,a}^2})}{\Delta_{0,a}^2}.
    \end{align*}
    From the algorithm design, we know the total pulling times in the exploration period up to phase $k-1$ is at most $T_{k-1}^{\text{ee}}$. By the Lemma \ref{lemma:Nature-of-UCB-Rule-Negative}, we can validate the conditions of Lemma \ref{lemma:Nature-of-UCB-Rule-Negative} hold for $k\geq \left\{\kappa^{\text{ee}},  \lceil\log_2 \frac{\sum_{a=1}^K\frac{24}{\Delta_{0,a}^2}}{K}\rceil\right\}$. Thus, we have proved $\hat{a}_k=\textsf{None}$ for $k\geq \left\{\kappa^{\text{ee}},  \lceil\log_2 \frac{\sum_{a=1}^K\frac{24}{\Delta_{0,a}^2}}{K}\rceil\right\}$. The remaining work is to prove the upper bound of $N_a(\tau_k^{\text{ee}})$.

    In the following, we use induction to prove
    \begin{align*}
        N_a(\tau_k^{\text{ee}})\leq \max\left\{ N_a(\tau_{L-1}^{\text{ee}}), \frac{113(\log\frac{2K}{\delta_k} + \log\log \frac{96}{\Delta_{0,a}^2})}{\Delta_{0,a}^2}\right\}
    \end{align*}
    holds for all $k\geq L, a\in [K]$. 
    By the Lemma \ref{lemma:Nature-of-UCB-Rule-Negative}, we know the above inequality holds for $k=L, \forall a\in[K]$. Then if $k$ holds, for the case of $k+1$, we can further derive 
    \begin{align*}
        N_a(\tau_{k+1}^{\text{ee}})\leq \max\left\{ N_a(\tau_{k}^{\text{ee}}), \frac{113(\log\frac{2K}{\delta_{k+1}} + \log\log \frac{96}{\Delta_{0,a}^2})}{\Delta_{0,a}^2}\right\}.
    \end{align*}
    The reason is similar to the proof in the Lemma \ref{lemma:Nature-of-UCB-Rule-Negative}.
    \begin{itemize}
        \item If $N_a(\tau_{k}^{\text{ee}})\geq \frac{113(\log\frac{2K}{\delta_{k+1}} + \log\log \frac{96}{\Delta_{0,a}^2})}{\Delta_{0,a}^2}$, from the condition $k\geq \kappa^{\text{ee}}$, we know
        \begin{align*}
            \left|\frac{\sum_{s=1}^t X_{a,s}^{\text{ee}}}{t}-\mu_a\right|<U(t, \frac{\delta_{k+1}}{K})
        \end{align*}
        holds for all $t\in\mathbb{N},a\in[K]$. Following the same discussion in Lemma \ref{lemma:Nature-of-UCB-Rule-Negative},   the algorithm will never pull arm $a$ due to its upper confidence bound is below $\mu_0$. If its upper confidence bound is the highest, the algorithm would output $\textsf{None}$. Thus, $N_a(\tau_{k+1}^{\text{ee}})= N_a(\tau_{k}^{\text{ee}})$.
        
        \item If $N_a(\tau_{k}^{\text{ee}})< \frac{113(\log\frac{2K}{\delta_{k+1}} + \log\log \frac{96}{\Delta_{0,a}^2})}{\Delta_{0,a}^2}$, we can still conclude 
        \begin{align*}
            \left|\frac{\sum_{s=1}^t X_{a,s}^{\text{ee}}}{t}-\mu_a\right|<U(t, \frac{\delta_{k+1}}{K})
        \end{align*}
        from the condition $k\geq \kappa^{\text{ee}}$. Then arm $a$ will never get pulled before $N_a(\tau_{k+1}^{\text{ee}})$ is no less than $\frac{113(\log\frac{2K}{\delta_k} + \log\log \frac{96}{\Delta_{0,a}^2})}{\Delta_{0,a}^2}$, as its upper confidence bound is already smaller than $\mu_0$ before that happens.
    \end{itemize}

    Use the induction, we can conclude
    \begin{align*}
         N_a(\tau_{k+1}^{\text{ee}})\leq & \max\left\{ N_a(\tau_{k}^{\text{ee}}), 1\frac{113(\log\frac{2K}{\delta_{k+1}} + \log\log \frac{96}{\Delta_{0,a}^2})}{\Delta_{0,a}^2}\right\}\\
        \leq & \max\Bigg\{ N_{a}(\tau_{L-1}^{\text{ee}}), \frac{113(\log\frac{2K}{\delta_{k}} + \log\log \frac{96}{\Delta_{0,a}^2})}{\Delta_{0,a}^2}, \frac{113(\log\frac{2K}{\delta_{k+1}} + \log\log \frac{96}{\Delta_{0,a}^2})}{\Delta_{0,a}^2}\Bigg\}\\
        = & \max\Bigg\{ N_{a}(\tau_{L-1}^{\text{ee}}), \frac{113(\log\frac{2K}{\delta_{k+1}} + \log\log \frac{96}{\Delta_{0,a}^2})}{\Delta_{0,a}^2}\Bigg\}
    \end{align*}
    The induction is completed.

    Then, for $k\geq L$, we can derive
    \begin{align*}
        \tau_{k}^{\text{ee}}\stackrel{\text{Lemma }\ref{Lemma:Certainty-Length-of-Phase-SEE-Recycle-on-Both} }{\leq} & K+\sum_{a=1}^K N_a(\tau_{k}^{\text{ee}})\\
        \leq & K + \sum_{a=1}^K N_a(\tau_{L-1}^{\text{ee}}) + \sum_{a=1}^K \frac{113(\log\frac{2K}{\delta_{k+1}} + \log\log \frac{96}{\Delta_{0,a}^2})}{\Delta_{0,a}^2}\\
        \leq & 1000(C+1)^2K\beta_{L-1}\log\frac{4K}{\delta_{L-1}} + \frac{114(\log\frac{2K}{\delta_{k+1}} + \log\log \frac{96}{\Delta_{0,a}^2})}{\Delta_{0,a}^2}
    \end{align*}
\end{proof}

\section{Lower Bound}
\subsection{Negative Case}
\label{sec:lower-bound-negative-case}
\begin{proof}[Proof of Theorem \ref{theorem:lower-bound-negative-instance-log-1/delta}]
    Following the section 2.1 in \cite{garivier2016optimal}, define
    \begin{align*}
        \text{Alt}(\nu)=\{\nu':i^*(\nu')\neq\textsf{None}\},
    \end{align*}
    and the kl-divergence between two Gaussian distribution $d(N(\mu_a,1), N(\lambda_a,1))=\frac{(\mu_a-\lambda_a)^2}{2}$, the kl-diverence between two bernoulli distribution $kl(\delta, 1-\delta)=\delta\log\frac{\delta}{1-\delta}+(1-\delta)\log\frac{1-\delta}{\delta}$.
    
    Following the same step in the proof of Theorem 1, from \cite{garivier2016optimal}, we can conclude 
    \begin{align*}
        kl(\delta, 1-\delta)\leq\mathbb{E}_{\nu}\tau \sup_{w\in \Sigma_K}\inf_{\lambda\in\text{Alt}(\nu)}\sum_{a=1}^K w_a\frac{(\mu_a-\lambda_a)^2}{2}.
    \end{align*}
    By the example 1 in the \cite{degenne2019pure}, we can derive
    \begin{align*}
        \frac{1}{\sup_{w\in \Sigma_K}\inf_{\lambda\in\text{Alt}(\nu)}\sum_{a=1}^K w_a\frac{(\mu_a-\lambda_a)^2}{2}} = \sum_{a=1}^K\frac{2}{\Delta_{0,a}^2},
    \end{align*}
    which means $\mathbb{E}_{\nu}\tau\geq kl(\delta,1-\delta)\sum_{a=1}^K\frac{2}{\Delta_{0,a}^2}=\Omega(H_1^{\text{neg}} \log\frac{1}{\delta})$.
\end{proof}

\subsection{Positive Case}
\label{sec:lower-bound-positive-case}
In this section, we slightly adapt the conclusion in\cite{katz2020true} and \cite{kaufmann2016complexity} to prove theorem \ref{theorem:lower-bound-positive-instance-non-asymptotic_delta-dependent} and \ref{theorem:lower-bound-positive-instance-non-asymptotic_delta-independent}.

\begin{proof}[Proof of Theorem \ref{theorem:lower-bound-positive-instance-non-asymptotic_delta-dependent}]
    For this instance $\nu$, take $\{a_i\}_{i=1}^K$ as its permutation of the mean reward vector, such that the mean reward of the $i^{th}$ arm is $\mu_{a_i}$. Then, we consider an alternative instance $\nu'$, whose mean reward of $i^{th}$ arm is $\mu_{i}'=\begin{cases}\mu_{a_i} & \mu_{a_i}\leq \mu_0\\ \mu_0-\Delta & \mu_{a_i}>\mu_0 \end{cases}$ for some $\Delta>0$. The answer set $i^*(\nu')=\{\textsf{None}\}$. Apply the Lemma 1 in \cite{kaufmann2016complexity}, we get
    \begin{align*}
        \sum_{i:\mu_i>\mu_0} \frac{(\mu_i-\mu_0+\Delta)^2}{2}\mathbb{E}_{\nu}N_{a_i}(\tau)\geq kl(\delta, 1-\delta),
    \end{align*}
    where $kl(\delta, 1-\delta)=\delta\log\frac{\delta}{1-\delta}+(1-\delta)\log\frac{1-\delta}{\delta}$.

    By the assumption, we have $\mu_1 \geq \mu_2\geq\cdots\geq \mu_K$, which implies $\frac{(\mu_i-\mu_0+\Delta)^2}{(\mu_1-\mu_0+\Delta)^2} \leq 1$ holds for all $i$ such that $\mu_i\geq \mu_0$. Thus, we can conclude
    \begin{align*}
        \mathbb{E}_{\nu} \tau \geq & \sum_{i:\mu_i>\mu_0} \mathbb{E}_{\nu}N_{a_i}(\tau)\\
        \geq & \sum_{i:\mu_i>\mu_0} \frac{(\mu_i-\mu_0+\Delta)^2}{(\mu_1-\mu_0+\Delta)^2}\mathbb{E}_{\nu}N_{a_i}(\tau)\\
        \geq & \frac{2kl(\delta, 1-\delta)}{(\mu_1-\mu_0+\Delta)^2}.
    \end{align*}
    Since $kl(\delta, 1-\delta)=\Omega(\log\frac{1}{\delta})$ and $\mathbb{E}_{\nu} \tau \geq \frac{2kl(\delta, 1-\delta)}{(\mu_1-\mu_0+\Delta)^2}$ holds for all $\Delta>0$, we can conclude $\mathbb{E}_{\nu,\text{alg}}\tau\geq \Omega(\frac{\log\frac{1}{\delta}}{\Delta_{0,1}^2})$.
\end{proof}

\begin{proof}[Proof of Theorem \ref{theorem:lower-bound-positive-instance-non-asymptotic_delta-independent}]
    The algorithm might take $\mu_1 > \mu_0 > \mu_2\geq\cdots\geq \mu_K$ as prior knowledge and solve the problem by identifying 1 arm among 1 good arm. We can directly apply the Theorem 1 in \cite{katz2020true} to derive a lower bound, which asserts we can find a positive instance $\nu$ whose mean reward vector is a permutation of vector $\{\mu_a\}_{a=1}^{K}$ and the threshold is $\mu_0$, such that 
    \begin{align}
        \label{eqn:katz-theorem-1}
        \mathbb{E}_{\nu}\tau \geq \frac{1}{64}\left(-\frac{1}{\Delta_{1,2}^2} + \sum_{a=3}^{K}\frac{1}{\Delta_{1,a}^2}\right).
    \end{align}

    By the Theorem \ref{theorem:lower-bound-positive-instance-non-asymptotic_delta-dependent}, we can also conclude $\mathbb{E}_{\nu}\tau\geq \Omega(\frac{\log\frac{1}{\delta}}{\Delta_{0,1}^2})\geq \Omega(\frac{1}{\Delta_{1,2}^2})$ by the assumption $\mu_1 > \mu_0 > \mu_2$ and $\delta<\frac{1}{16}$. Combining the result with (\ref{eqn:katz-theorem-1}), we get
    \begin{align*}
        \mathbb{E}_{\nu}\tau \geq\Omega\left(\sum_{a=2}^{K}\frac{1}{\Delta_{1,a}^2}\right)=\Omega\left(H_1\right).
    \end{align*}
\end{proof}

\subsection{Lower Bound for a Suboptimal Arm}
\label{sec:Lower-Bound-for-a-Suboptimal-Arm}
Before proving Theorem \ref{theorem:lower-bound-suboptimal-arm}, we need to firstly introduce a lemma, talking about a "high probability lower bound" of the optimal arm, if it is also the uniqe qualified arm.
\begin{lemma}[High Probability Lower Bound]
    \label{lemma:reproof-high-prob-instance-dependent-lower-bound-for-BAI-delta_dependent}
    Denote $\nu$ as a 1-dentification Gaussian problem instance with fixed variance 1. Denote $K$, $\{\mu_a\}_{a=0}^K$ as the number of alternative arms and mean reward of all the arms in $\nu$. If $\mu_1 > \mu_0 > \mu_2 >\cdots > \mu_K$, then for any $\delta\in (0, 1)$, and any $\delta$-PAC algorithm, we have
    \begin{align*}
        \Pr_{\nu}(N_1(\tau) \geq \frac{\log\frac{1}{\delta}}{C(\mu_1-\mu_0)^2}) \geq 1-\delta -\delta^{1-\frac{1}{4}-\frac{1}{2C}}- \delta^{\frac{C}{32}},
    \end{align*}
    where $C$ can be any values greater than 1.
\end{lemma}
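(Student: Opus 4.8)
We want a high-probability lower bound on the number of pulls of the optimal (and unique qualified) arm. The instance has $\mu_1 > \mu_0 > \mu_2 > \cdots > \mu_K$, all unit-variance Gaussians. We need to show that for a $\delta$-PAC algorithm, with probability at least $1 - \delta - \delta^{1 - 1/4 - 1/(2C)} - \delta^{C/32}$, we have $N_1(\tau) \geq \frac{\log(1/\delta)}{C(\mu_1-\mu_0)^2}$.

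**The approach: change of measure.** This is a classic change-of-measure argument. The idea is to construct an alternative instance $\nu'$ that differs from $\nu$ only in arm $1$, where we lower $\mu_1$ to some value $\mu_1' \leq \mu_0$ (say $\mu_1' = \mu_0 - \text{something}$, or simply push arm 1 below threshold). Under $\nu'$, arm 1 is no longer qualified, and since arms $2,\dots,K$ are already below $\mu_0$, the instance $\nu'$ becomes negative, so the correct answer is $\textsf{None}$. A $\delta$-PAC algorithm must output $\textsf{None}$ on $\nu'$ with probability $\geq 1-\delta$, whereas on $\nu$ it must output a qualified arm (necessarily arm 1) with probability $\geq 1-\delta$.

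**Key steps in order.** First I would fix the alternative $\nu'$, changing only arm 1's mean so that the per-sample KL cost is $d(\mu_1,\mu_1') = (\mu_1-\mu_1')^2/2$; I would pick $\mu_1'$ so that this equals roughly $(\mu_1-\mu_0)^2/2$ times a small factor. Then I would invoke the data-processing / transportation inequality (the Garivier–Kaufmann style lemma, as already used in the negative-case proof): for any event $E \in \mathcal{F}_\tau$,
\begin{align*}
\mathbb{E}_{\nu}[N_1(\tau)]\, d(\mu_1,\mu_1') \geq kl\big(\Pr_{\nu}(E), \Pr_{\nu'}(E)\big).
\end{align*}
But since we want a high-probability (not expectation) bound, I would instead use the more refined pointwise version of the change of measure, tracking the likelihood ratio $\frac{d\Pr_{\nu'}}{d\Pr_{\nu}}$ on the path, which on the event $\{N_1(\tau) = n\}$ is governed by a sum of $n$ log-likelihood-ratio increments. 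The strategy is: on the event that $N_1(\tau)$ is \emph{small} (below the target threshold), I bound the likelihood ratio from below using a concentration bound on the empirical mean of arm 1's samples, then transfer the $\delta$-PAC correctness guarantee from $\nu'$ to $\nu$ via this ratio, producing a contradiction except on a low-probability set.

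**The main obstacle.** The hard part is the high-probability (rather than in-expectation) conversion. A clean route is: define the bad event $B = \{N_1(\tau) < \frac{\log(1/\delta)}{C(\mu_1-\mu_0)^2}\}$ and the ``wrong answer'' event $W$ (algorithm fails to output arm 1). On $\nu$, $\Pr_\nu(W) \leq \delta$. I want to bound $\Pr_\nu(B \setminus W)$. Writing the likelihood ratio $L = \exp\!\big(\sum_{s=1}^{N_1(\tau)} [\text{LLR increments}]\big)$, on $B$ the number of increments is controlled, and I can use a maximal/uniform deviation inequality for the empirical sum of arm-1 rewards — this is exactly where the $\delta^{C/32}$ and $\delta^{1-1/4-1/(2C)}$ terms will emerge, as the probabilities that the empirical mean deviates too far combined with the correctness slack. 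Specifically, on the intersection of $B$, correctness-under-$\nu$, and a good concentration event for arm 1, the change of measure forces $\Pr_{\nu'}(\text{output arm }1)$ to be nonnegligible, contradicting $\delta$-PAC-ness of the algorithm on the negative instance $\nu'$. The delicate bookkeeping is choosing $\mu_1'$ and the concentration radius so that the three error exponents combine exactly as stated; I would expect the $-\delta$ term to come from correctness on $\nu$, one power-of-$\delta$ term from the likelihood-ratio concentration, and the other from correctness transferred to $\nu'$. The precise matching of the constants $1/4$, $1/(2C)$, $C/32$ is the calculation I would handle last, after the structural argument is in place.
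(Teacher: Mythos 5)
Your proposal follows essentially the same route as the paper's proof: a change of measure to an alternative instance $\nu'$ where only arm 1 is lowered below $\mu_0$ (making $\nu'$ negative), then a pointwise likelihood-ratio transfer restricted to the event $\{N_1(\tau) < \log(1/\delta)/(C(\mu_1-\mu_0)^2)\}$, with a maximal (submartingale) concentration inequality controlling the sum of log-likelihood-ratio increments of arm 1, and with the three error terms attributed exactly as in the paper ($\delta$ from correctness under $\nu$, $\delta^{C/32}$ from the LLR concentration with $I=\tfrac14\log\tfrac1\delta$, and $\delta^{1-1/4-1/(2C)}$ from transferring the $\delta$-PAC guarantee on $\nu'$). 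The only cosmetic difference is that the paper takes an arbitrary $\tilde{\mu}_1<\mu_0$ and lets $\tilde{\mu}_1\to\mu_0$ at the end rather than fixing the alternative mean up front, which your sketch would absorb in the final bookkeeping.
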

\begin{proof}[Proof of Lemma \ref{lemma:reproof-high-prob-instance-dependent-lower-bound-for-BAI-delta_dependent}]
    Define instance $\nu'$ with mean rewards $\tilde{\mu}_1, \mu_2,\cdots,\mu_K$ where $\tilde{\mu}_1 < \mu_0$. In other words, the only difference between $\nu$ and $\nu'$ is the mean reward of arm 1, while others are all the same. Since the algorithm is $\delta$-PAC, we know
    \begin{align*}
        \Pr_{\nu}(\text{output arm 1}) > & 1-\delta\\
        \Pr_{\nu'}(\text{output none}) > & 1-\delta.
    \end{align*}
    Apply the transportation equality, we get
    \begin{align*}
        \delta > \Pr_{\nu'}(\text{output arm 1}) = \mathbb{E}_{\nu}\mathds{1}(\text{output arm 1})\exp\left(-\sum_{s=1}^{N_1(\tau)}Z_{1,s}\right)
    \end{align*}
    where $Z_{1,s}$ is the realized KL divergence, $Z_{1,s}=\log\frac{\exp(-\frac{(X_{1,s}-\mu_1)^2}{2})}{\exp(-\frac{(X_{1,s}-\tilde{\mu}_1)^2}{2})}=\frac{(\mu_1-\tilde{\mu}_1)(2X_{1,s}-\mu_1-\tilde{\mu}_1)}{2}$, $X_{1,s}\sim N(\mu_1, 1)$. Or we can directly assume $Z_{1,s}\stackrel{i.i.d}{\sim} N(\frac{(\mu_1-\tilde{\mu}_1)^2}{2}, (\mu_1-\tilde{\mu}_1)^2)$. Before moving on, we firstly prove a concentration result for the sum $\sum_{s=1}^{t}Z_{1,s}$. For any fixed number $N$, $I$, define $\xi_{N, I}=\left\{\max_{1\leq t\leq N}\sum_{s=1}^t \left(Z_{1,s}-\frac{(\mu_1-\tilde{\mu}_1)^2}{2}\right) < I\right\}$. We can assert $\Pr_{Z_{1,s}\stackrel{i.i.d}{\sim} N(\frac{(\mu_1-\tilde{\mu}_1)^2}{2}, (\mu_1-\tilde{\mu}_1)^2)}(\xi_{N, I}) \geq 1-\exp\left(-\frac{I^2}{2N(\mu_1-\tilde{\mu}_1)^2}\right)$, by the following application of maximal inequality (Theorem 3.10 in \cite{lattimore2020bandit}).
    \begin{align*}
        & \Pr_{Z_{1,s}\stackrel{i.i.d}{\sim} N(\frac{(\mu_1-\tilde{\mu}_1)^2}{2}, (\mu_1-\tilde{\mu}_1)^2)}\left(\max_{1\leq t\leq N}\sum_{s=1}^t \left(Z_{1,s}-\frac{(\mu_1-\tilde{\mu}_1)^2}{2}\right) > I\right)\\
        = & \Pr_{\tilde{Z}_{1,s}\stackrel{i.i.d}{\sim} N(0, (\mu_1-\tilde{\mu}_1)^2)}\left(\max_{1\leq t\leq N}\sum_{s=1}^t \tilde{Z}_{1,s} > I\right)\\
        \stackrel{\lambda = \frac{I}{N(\mu_1-\tilde{\mu}_1)^2}}{=} & \Pr_{\tilde{Z}_{1,s}\stackrel{i.i.d}{\sim} N(0, (\mu_1-\tilde{\mu}_1)^2)}\left(\max_{1\leq t\leq N}\prod_{s=1}^t \exp\left(\lambda\tilde{Z}_{1,s}\right) > \exp(\lambda I)\right)\\
        \stackrel{\text{Maximal Inequality}}{\leq} & \frac{\mathbb{E}_{\tilde{Z}_{1,s}\stackrel{i.i.d}{\sim} N(0, (\mu_1-\tilde{\mu}_1)^2)}\prod_{s=1}^N \exp\left(\lambda\tilde{Z}_{1,s}\right) }{\exp(\lambda I)}\\
        = & \frac{\exp\left(\frac{N\lambda^2 (\mu_1-\tilde{\mu}_1)^2}{2}\right) }{\exp(\lambda I)} = \exp\left(-\frac{I^2}{2N(\mu_1-\tilde{\mu}_1)^2}\right).
    \end{align*}
    To apply the Maximal Inequality, we need to validate $\{\prod_{s=1}^t \exp\left(\lambda\tilde{Z}_{1,s}\right)\}_{t=1}^{+\infty}$ is a submartingale, which is not hard. By the Jensen Inequality, we have 
    \begin{align*}
        & \mathbb{E}\left[ \prod_{s=1}^{t+1} \exp\left(\lambda\tilde{Z}_{1,s}\right)  \mid \prod_{s=1}^t \exp\left(\lambda\tilde{Z}_{1,s}\right)\right]\\
        = & \mathbb{E}\left[  \exp\left(\lambda\tilde{Z}_{1,t+1}\right)  \right] \prod_{s=1}^t \exp\left(\lambda\tilde{Z}_{1,s}\right)\\
        \geq &   \exp\left(\mathbb{E}\left[\lambda\tilde{Z}_{1,t+1}\right]\right)   \prod_{s=1}^t \exp\left(\lambda\tilde{Z}_{1,s}\right)\\
        = & \prod_{s=1}^t \exp\left(\lambda\tilde{Z}_{1,s}\right).
    \end{align*}
    
    Adding the concentration term into the transportation equality, we get
    \begin{align*}
        & \delta > \Pr_{\nu'}(\text{output arm 1}) \\
        = & \mathbb{E}_{\nu}\mathds{1}(\text{output arm 1})\exp\left(-\sum_{s=1}^{N_1(\tau)}Z_{1,s}\right)\\
        \geq & \mathbb{E}_{\nu}\mathds{1}(\text{output arm 1})\mathds{1}(N_1(\tau) < \frac{\log\frac{1}{\delta}}{C(\mu_1-\tilde{\mu}_1)^2})\mathds{1}(\xi_{\frac{\log\frac{1}{\delta}}{C(\mu_1-\tilde{\mu}_1)^2}, I})\\
        &\exp\left(\sum_{s=1}^{N_1(\tau)}\left(\frac{(\mu_1-\tilde{\mu}_1)^2}{2}-Z_{1,s}\right)\right)\exp\left(-N_1(\tau)\frac{(\mu_1-\tilde{\mu}_1)^2}{2}\right)\\
        \geq & \exp\left(-I\right)\exp\left(-\frac{\log\frac{1}{\delta}}{2C}\right)\mathbb{E}_{\nu}\mathds{1}(\text{output arm 1})\mathds{1}(N_1(\tau) < \frac{\log\frac{1}{\delta}}{C(\mu_1-\tilde{\mu}_1)^2})\mathds{1}(\xi_{\frac{\log\frac{1}{\delta}}{C(\mu_1-\tilde{\mu}_1)^2}, I}).
    \end{align*}
    The last inequality is equivalent to
    \begin{align*}
        & \exp(I)\exp\left(\frac{\log\frac{1}{\delta}}{2C}\right)\delta \geq \mathbb{E}_{\nu}\mathds{1}(\text{output arm 1})\mathds{1}(N_1(\tau) < \frac{\log\frac{1}{\delta}}{C(\mu_1-\tilde{\mu}_1)^2})\mathds{1}(\xi_{\frac{\log\frac{1}{\delta}}{C(\mu_1-\tilde{\mu}_1)^2}, I})\\
        \stackrel{\Pr(A\cap B)\geq\Pr(A)-\Pr(\neg B)}{\Rightarrow} & \exp(I)\exp\left(\frac{\log\frac{1}{\delta}}{2C}\right)\delta \geq \left(\Pr_{\nu}(N_1(\tau) < \frac{\log\frac{1}{\delta}}{C(\mu_1-\tilde{\mu}_1)^2})-\Pr_{\nu}(\neg\text{output arm 1})-\Pr_{\nu_1}(\neg\xi_{\frac{\log\frac{1}{\delta}}{C(\mu_1-\tilde{\mu}_1)^2}, I})\right)\\
        \Leftrightarrow & \Pr_{\nu}(\neg\text{output arm 1}) + \exp(I)\exp\left(\frac{\log\frac{1}{\delta}}{2C}\right)\delta + \Pr_{\nu_1}(\neg\xi_{\frac{\log\frac{1}{\delta}}{C(\mu_1-\tilde{\mu}_1)^2}, I}) \geq \Pr_{\nu}(N_1(\tau) < \frac{\log\frac{1}{\delta}}{C(\mu_1-\tilde{\mu}_1)^2})\\
        \Rightarrow & \delta + \exp(I)\exp\left(\frac{\log\frac{1}{\delta}}{2C}\right)\delta + \exp\left(-\frac{CI^2}{2\log\frac{1}{\delta}}\right) \geq \Pr_{\nu}(N_1(\tau) < \frac{\log\frac{1}{\delta}}{C(\mu_1-\tilde{\mu}_1)^2})\\
        \Leftrightarrow & \Pr_{\nu}(N_1(\tau) \geq \frac{\log\frac{1}{\delta}}{C(\mu_1-\tilde{\mu}_1)^2}) \geq 1-\delta -\exp(I)\exp\left(\frac{\log\frac{1}{\delta}}{2C}\right)\delta - \exp\left(-\frac{CI^2}{2\log\frac{1}{\delta}}\right).
    \end{align*}
    Take $I=\frac{1}{4}\log\frac{1}{\delta}$, we get
    \begin{align*}
        \Pr_{\nu}(N_1(\tau) \geq \frac{\log\frac{1}{\delta}}{C(\mu_1-\tilde{\mu}_1)^2}) \geq 1-\delta -\delta^{1-\frac{1}{4}-\frac{1}{2C}}- \delta^{\frac{C}{32}}.
    \end{align*}
    As the above inequality holds for any $\tilde{\mu}_1<\mu_0$, we can take $\tilde{\mu}_1\rightarrow \mu_0$. Thus, $\Pr_{\nu}(N_1(\tau) \geq \frac{\log\frac{1}{\delta}}{C(\mu_1-\mu_0)^2}) \geq 1-\delta -\delta^{1-\frac{1}{4}-\frac{1}{2C}}- \delta^{\frac{C}{32}}$. We complete the proof.
\end{proof}
Then, we are ready to prove Theorem \ref{theorem:lower-bound-suboptimal-arm}.
\begin{proof}[Proof of Theorem \ref{theorem:lower-bound-suboptimal-arm}]
    Take $M_2 = 512*3=1536$, $M_1= 8M_2$.
    
    Let $\bar{\Delta}_{0,1}$ to be small enough, such that
    \begin{align*}
        \frac{\exp\left(-4\sqrt{3}-2\right)}{2\bar{\Delta}_{0,1}} > \sup_{\nu'\in \mathcal{S}^{\text{pos}}_{\Delta_{0,a}}\cup \mathcal{S}^{\text{neg}}_{\Delta_{0,a}}}\mathbb{E}_{\nu'}\tau,\forall a\geq2.
    \end{align*}
    In the following, we take $\mu_1$ as any value in $(\mu_0,\mu_0+\bar{\Delta}_{0,1}]$, and take $\Delta_{0,1}=\mu_1-\mu_0$. We are going to prove, for all $a\geq 2$, we have $\mathbb{E}_{\nu}N_a(\tau)\geq \frac{\log\frac{1}{\Delta_{1,0}^2}}{M_1\Delta_{1,a}^2}$. 
    
    Prove by contradiction. If there exists an arm $a\geq 2$, such that $\mathbb{E}_{\nu}N_a(\tau)< \frac{\log\frac{1}{\Delta_{0,1}^2}}{M_1\Delta_{0,a}^2}$, define instance $\nu_a'$ by taking the mean reward of $i$th arm as $\begin{cases}
        2\mu_0-\mu_1 & i=1\\
        2\mu_0-\mu_a & i=a\\
        \mu_i & i\neq 1,a
    \end{cases}$ for all arm $a\in[K], a\geq 2$. In other words, we "flip" the mean reward of arm $1$ and $a$, while keeping others the same as the instance $\nu$. From this definition, we know $\nu_a'\in \mathcal{S}^{\text{pos}}_{\Delta_{0,a}}$, $i^*(\nu_a')=\{a\}$.
    
    By the Markov Inequality, we know 
    \begin{align*}
        \Pr_{\nu}\left(N_a(\tau)<\frac{\log\frac{1}{\Delta_{0,1}^2}}{M_2 \Delta_{0,a}^2}\right)\geq 1-\frac{M_2}{M_1}.
    \end{align*}
    
    Define $\tilde{\tau}=\min\{\tau, \min\{t: N_1(t)\geq \frac{1}{\Delta_{0,1}^2}\}\}$. Then we get $N_1(\tau) \geq \frac{1}{\Delta_{0,1}^2}\Rightarrow N_1(\tilde{\tau}) = \frac{1}{\Delta_{0,1}^2}$. According to the lemma \ref{lemma:reproof-high-prob-instance-dependent-lower-bound-for-BAI-delta_dependent}, we have
    \begin{align*}
        \Pr_{\nu}\left(N_1(\tau) \geq \frac{\log\frac{1}{\delta}}{8\Delta_{0,1}^2}\right) > 1-3\sqrt{\delta}.
    \end{align*}
    Through simple calculation, we can derive
    \begin{align*}
        & \Pr_{\nu}\left(N_a(\tilde{\tau}) \leq \frac{\log\frac{1}{\Delta_{0,1}^2}}{M_2 \Delta_{0,a}^2} \text{ and } N_1(\tilde{\tau})= \frac{1}{\Delta_{0,1}^2}\right)\\
        = & \Pr_{\nu}\left(N_a(\tilde{\tau}) \leq \frac{\log\frac{1}{\Delta_{0,1}^2}}{M_2 \Delta_{0,a}^2}, N_1(\tau) \geq \frac{1}{\Delta_{0,1}^2} \right)\\
        \stackrel{\delta < \min\{\frac{1}{e^8}, \frac{1}{24^2}\}}{\geq} & \Pr_{\nu}\left( N_a(\tau)\leq \frac{\log\frac{1}{\Delta_{0,1}^2}}{M_2 \Delta_{0,a}^2}, N_1(\tau) \geq\frac{\log\frac{1}{\delta}}{8\Delta_{0,1}^2}\right)\\
        \geq & \Pr_{\nu}\left( N_a(\tau)\leq \frac{\log\frac{1}{\Delta_{0,1}^2}}{M_2 \Delta_{0,a}^2}\right)-\Pr_{\nu}\left( N_1(\tau) <\frac{\log\frac{1}{\delta}}{8\Delta_{0,1}^2} \right)\\
        \geq & 1-\frac{M_2}{M_1}-3\sqrt{\delta}. 
    \end{align*}
    Denote $\mathcal{G}=\left\{N_a(\tilde{\tau}) \leq \frac{\log\frac{1}{\Delta_{0,1}^2}}{M_2 \Delta_{0,a}^2} \text{ and } N_1(\tilde{\tau})= \frac{1}{\Delta_{0,1}^2}\right\}$ as the event, we can apply the transportation equality in the Lemma 18 of \cite{kaufmann2016complexity}, we have
    \begin{align*}
        \Pr_{\nu_a'}(\mathcal{G})=\mathbb{E}_{\nu}\mathds{1}(\mathcal{G})\exp\left(-\sum_{s=1}^{N_1(\tilde{\tau})}Z_{1,s}-\sum_{s=1}^{N_a(\tilde{\tau})}Z_{a,s}\right),
    \end{align*}
    where 
    \begin{align*}
        Z_{1,s}=\log\frac{\exp(-\frac{(X_{1,s}-\mu_1)^2}{2})}{  \exp(-\frac{(X_{1,s}+\mu_1-2\mu_0)^2}{2})}=\frac{(X_{1,s}+\mu_1-2\mu_0)^2-(X_{1,s}-\mu_1)^2}{2}=\frac{(2X_{1,s}-2\mu_0)(2\mu_1-2\mu_0)}{2},\\
        Z_{a,s}=\log\frac{\exp(-\frac{(X_{a,s}-\mu_a)^2}{2})}{  \exp(-\frac{(X_{a,s}+\mu_a-2\mu_0)^2}{2})}=\frac{(X_{a,s}+\mu_a-2\mu_0)^2-(X_{a,s}-\mu_a)^2}{2}=\frac{(2X_{a,s}-2\mu_0)(2\mu_a-2\mu_0)}{2},
    \end{align*}
    and $X_{1,s}\sim N(\mu_1,1), X_{a,s}\sim N(\mu_a,1)$. In other words, we can directly assume $Z_{1,s}\sim N(2\Delta_{0,1}^2,4\Delta_{0,1}^2)$ and $Z_{a,s}\sim N(2\Delta_{0,a}^2,4\Delta_{0,a}^2)$.

    Let $I_1,I_a$ be two positive integers to be determined. Denote the concentration event of realized KL-divergence as $\xi_1 = \left\{\max_{1\leq t\leq \frac{1}{\Delta_{0,1}^2}} \sum_{s=1}^t \left(Z_{1,s}-2\Delta_{0,1}^2\right)\leq I_1\right\}$, $\xi_a =  \left\{\max_{1\leq t\leq \frac{\log\frac{1}{\Delta_{0,1}^2}}{M_2 \Delta_{0,a}^2}} \sum_{s=1}^t \left(Z_{a,s}-2\Delta_{0,a}^2\right)\leq I_a\right\}$. Similar to the proof of Lemma \ref{lemma:reproof-high-prob-instance-dependent-lower-bound-for-BAI-delta_dependent}, we can derive a probability bound for both events $\xi_1,\xi_a$. Notice that
    \begin{align*}
        & \Pr_{Z_s\sim N(\mu,\sigma^2)}\left(\max_{1\leq t\leq T}\sum_{s=1}^t (Z_s-\mu)> I\right)\\
        \stackrel{\lambda = \frac{I}{T\sigma^2}}{\leq} & \frac{\mathbb{E}_{Z_s\sim N(\mu,\sigma^2)}\prod_{s=1}^T\exp(\lambda(Z_s-\mu))}{\exp(\lambda I)}\\
        = & \frac{\exp(\frac{T\lambda^2\sigma^2}{2})}{\exp(\lambda I)}\\
        \stackrel{}{=} & \exp(-\frac{I^2}{2T\sigma^2}),
    \end{align*}
    holds for all positive $T$ and $I$. The first inequality is guaranteed by the maximal inequality of submartingale (Theorem 3.10 in \cite{lattimore2020bandit}). By the above inequality, we have 
    \begin{align*}
        \Pr_{\nu}(\xi_1)\geq 1-\exp\left(-\frac{I_1^2}{2\cdot \frac{1}{\Delta_{0,1}^2}\cdot 4\Delta_{0,1}^2}\right)=1-\exp\left(-\frac{I_1^2}{8}\right)\\
        \Pr_{\nu}(\xi_a)\geq 1-\exp\left(-\frac{I_a^2}{2\cdot \frac{\log\frac{1}{\Delta_{0,1}^2}}{M_2 \Delta_{0,a}^2} \cdot 4\Delta_{0,a}^2}\right)=1-\exp\left(-\frac{M_2I_a^2}{8\log\frac{1}{\Delta_{0,1}^2}}\right)
    \end{align*}
    Plug in these inequalities to the transportation equality, we get
    \begin{align*}
        & \Pr_{\nu_a'}(\mathcal{G})\\
        \geq & \mathbb{E}_{\nu}\mathds{1}(\mathcal{G})\mathds{1}(\xi_1)\mathds{1}(\xi_a)\exp\left(-N_1(\tilde{\tau})\cdot 2\Delta_{0,1}^2-N_a(\tilde{\tau})\cdot 2\Delta_{0,a}^2\right)\\
        & \exp\left(-\sum_{s=1}^{N_1(\tilde{\tau})}\left(Z_{1,s}-2\Delta_{0,1}^2\right)-\sum_{s=1}^{N_a(\tilde{\tau})}\left(Z_{2,s}-2\Delta_{0,a}^2\right)\right)\\
        \geq & \mathbb{E}_{\nu}\mathds{1}(\mathcal{E})\mathds{1}(\xi_1)\mathds{1}(\xi_a)\exp\left(-N_1(\tilde{\tau})2\Delta_{0,1}^2-N_a(\tilde{\tau})2\Delta_{0,a}^2\right)\exp\left(-I_1-I_a\right)\\
        \geq & \mathbb{E}_{\nu}\mathds{1}(\mathcal{E})\mathds{1}(\xi_1)\mathds{1}(\xi_a)\\
        & \exp\left(-\frac{\log\frac{1}{\Delta_{0,1}^2}}{M_2 \Delta_{0,a}^2}\cdot 2\Delta_{0,a}^2-\frac{1}{\Delta_{0,1}^2}\cdot 2\Delta_{0,1}^2\right)\exp\left(-I_1-I_2\right)\\
        \stackrel{}{\geq} & \mathbb{E}_{\nu}\mathds{1}(\mathcal{E})\mathds{1}(\xi_1)\mathds{1}(\xi_a)\exp\left(-I_1-I_a\right)\exp\left(-\frac{2\log\frac{1}{\Delta_{0,1}^2}}{M_2}-2\right)\\
        \geq & \exp\left(-I_1-I_a\right)\exp\left(-\frac{2\log\frac{1}{\Delta_{0,1}^2}}{M_2}-2\right)\left(\Pr_{\nu}(\mathcal{E}) - \Pr_{\nu}(\neg\xi_1) - \Pr_{\nu}(\neg\xi_a)\right)\\
        \geq & \exp\left(-I_1-I_a\right)\exp\left(-\frac{2\log\frac{1}{\Delta_{0,1}^2}}{M_2}-2\right)\\
        & \left(1-\frac{M_2}{M_1}-3\sqrt{\delta} - \exp\left(-\frac{M_2I_a^2}{8\log\frac{1}{\Delta_{0,1}^2}}\right)- \exp(-\frac{I_1^2}{8})\right).
    \end{align*}
    Take $I_a^2=\frac{1}{16}\log\frac{1}{\Delta_{0,1}^2}$, $I_1=4\sqrt{3}$, $3\sqrt{\delta} < \frac{1}{8}$, $M_2 = 512*3=1536$, $M_1= 8M_2$ we get
    \begin{align*}
        & \Pr_{\nu_a'}(\mathcal{E})\\
        \geq & \exp\left(-\frac{1}{4}\sqrt{\log\frac{1}{\Delta_{0,1}^2}}-4\sqrt{3}\right)\exp\left(-\frac{2\log\frac{1}{\Delta_{0,1}^2}}{M_2}-2\right)\\
        & \left(1-\frac{1}{8}-\frac{1}{8} - \exp(-\frac{M_2}{512})- \exp(-3)\right)\\
        \geq & \exp\left(-4\sqrt{3}\right)\exp\left(-(\frac{2}{M_2}+\frac{1}{4})\log\frac{1}{\Delta_{0,1}^2}-2\right)\left(1-\frac{1}{8}-\frac{1}{8} - \exp(-\frac{M_2}{512})- \exp(-3)\right)\\
        \geq & \frac{1}{2}\exp\left(-4\sqrt{3}\right)\exp\left(-\frac{1}{2}\log\frac{1}{\Delta_{0,1}^2}-2\right),
    \end{align*}
    further
    \begin{align*}
        & \Pr_{\nu_a'}\left(N_1(\tau) \geq \frac{1}{\Delta_{0,1}^2}\right)\\
        \geq & \Pr_{\nu}\left(N_a(\tilde{\tau}) \leq \frac{\log\frac{1}{\Delta_{0,1}^2}}{M_2 \Delta_{0,a}^2} \text{ and } N_1(\tilde{\tau})= \frac{1}{\Delta_{0,1}^2}\right)\\
        \geq & \frac{1}{2}\exp\left(-4\sqrt{3}\right)\exp\left(-\frac{1}{2}\log\frac{1}{\Delta_{0,1}^2}-2\right)\\
        = & \frac{1}{2}\exp\left(-4\sqrt{3}-2\right)\exp\left(-\log\frac{1}{\Delta_{0,1}^2}+\frac{1}{2}\log\frac{1}{\Delta_{0,1}^2}\right)\\
        = & \frac{1}{2}\exp\left(-4\sqrt{3}-2\right)\exp\left(\frac{1}{2}\log\frac{1}{\Delta_{0,1}^2}\right)\Delta_{0,1}^2.
    \end{align*}
    By the Markov Inequality, we have
    \begin{align*}
        \mathbb{E}_{\nu_a'}N_1(\tau)
        \geq \frac{1}{2}\exp\left(-4\sqrt{3}-2\right)\exp\left(\frac{1}{2}\log\frac{1}{\Delta_{0,1}^2}\right)=\frac{\exp\left(-4\sqrt{3}-2\right)}{2\Delta_{0,1}}.
    \end{align*}
    From the construction of $\mu_1$, we know $\mathbb{E}_{\nu_a'}\tau\geq \frac{\exp\left(-4\sqrt{3}-2\right)}{2\Delta_{0,1}} > \sup_{\nu'\in \mathcal{S}^{\text{pos}}_{\Delta_{0,a}}\cup \mathcal{S}^{\text{neg}}_{\Delta_{0,a}}}\mathbb{E}_{\nu'}\tau$, contradicting to the fact that $\nu_a'\in \mathcal{S}^{\text{pos}}_{\Delta_{0,a}}$. We complete the proof.
\end{proof}

\section{Technical Inequality}
\subsection{Inequality about $x$ and $\log\log x$}
This section includes some mathematics inequalities for simplifying calculation.
\begin{lemma}
    \label{lemma:inequality-t-loglogt}
    For any $b\geq a \geq 0$, 
    \begin{itemize}
        \item If $b\geq e^2$, we have $x \geq b+2a\log\log b\Rightarrow x\geq a\log\log(x)+b$.
        \item If $b,a\geq e$, we have $e\leq x\leq b+a\log\log b\Rightarrow x < a\log\log(x)+b$.
    \end{itemize}
\end{lemma}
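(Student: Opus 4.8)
The plan is to handle both implications through the single auxiliary function $\psi(x) = a\log\log x + b - x$ (the first bullet concerns $\phi := -\psi = x - a\log\log x - b$), exploiting a shape property that trivializes the reduction from endpoint checks to the whole interval. The crucial observation is that on $[e,\infty)$ the derivative $\psi'(x) = \frac{a}{x\log x} - 1$ is built from $\frac{a}{x\log x}$, and since $x\log x$ is positive and strictly increasing there (its derivative is $\log x + 1 \geq 2$), the quantity $\frac{a}{x\log x}$ is strictly decreasing. Hence $\psi'$ is decreasing, i.e. $\psi$ is concave on $[e,\infty)$, while $\phi$ is convex there.

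For the second bullet I would use concavity directly. A concave function on a closed interval dominates the chord joining its endpoints, so over $[e,\,u]$ with $u := b + a\log\log b$ its values are controlled by the two endpoint values. I would compute $\psi(e) = b - e \geq 0$ (using $b \geq e$) and $\psi(u) = a\bigl(\log\log u - \log\log b\bigr) > 0$, the latter because $u > b$ whenever $b > e$. Concavity upgrades these endpoint bounds to the strict interior inequality $\psi(x) > 0$ on $(e, u]$: writing $x = \lambda e + (1-\lambda) u$ with $\lambda \in (0,1)$ gives $\psi(x) \geq \lambda\psi(e) + (1-\lambda)\psi(u) > 0$. This is exactly $x < a\log\log x + b$; the degenerate case $b = e$ collapses the interval to the point $x = e$ and needs no argument.

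For the first bullet I would use convexity of $\phi$ together with a monotonicity shortcut. Writing $x_0 := b + 2a\log\log b$, I first check $\phi'(x_0) = 1 - \frac{a}{x_0\log x_0} > 0$: since $x_0 \geq b \geq e^2$ gives $\log x_0 \geq 2$ and $x_0 \geq a$, we get $x_0 \log x_0 \geq 2a > a$. Because $\phi$ is convex, $\phi'$ stays nonnegative on $[x_0,\infty)$, so $\phi$ is nondecreasing there and it suffices to verify $\phi(x_0) \geq 0$. This reduces to $a\bigl(2\log\log b - \log\log x_0\bigr) \geq 0$, i.e. $\log x_0 \leq (\log b)^2$, i.e. $x_0 \leq b^{\log b}$. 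Using $a \leq b$ I would run the chain $x_0 \leq b(1 + 2\log\log b) \leq b\cdot b = b^2 \leq b^{\log b}$, where the middle step uses the elementary bound $1 + 2\log\log b \leq b$ for $b \geq e^2$ and the last step uses $\log b \geq 2$.

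The main obstacle is precisely this endpoint inequality $x_0 \leq b^{\log b}$ in the first part: unlike the clean sign checks of the second part, it couples $a$ and $b$ and relies on both hypotheses $a \leq b$ and $b \geq e^2$ at once. I expect the cleanest route is the displayed chain above, after verifying the scalar estimate $1 + 2\log\log b \leq b$ (whose left-hand side minus $b$ is decreasing, so it is enough to evaluate at $b = e^2$) and $\log b \geq 2$; everything else is routine monotonicity bookkeeping that I would not expand in full.
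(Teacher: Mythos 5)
Your proposal is correct and follows essentially the same route as the paper's proof: both study $x - a\log\log x - b$ through its derivative $1 - \frac{a}{x\log x}$, establish monotonicity beyond $x_0 = b + 2a\log\log b$ and verify the endpoint value there (your chain $x_0 \leq b(1+2\log\log b) \leq b^2 \leq b^{\log b}$ is an equivalent repackaging of the paper's inequality $(\log b)^2 \geq \log b + \log(1+2\log\log b)$), and settle the second bullet by endpoint checks combined with the shape of the function (your concavity-above-the-chord step versus the paper's at-most-one-critical-point step). One shared caveat: at the degenerate point $b=e$, $x=e$ the strict inequality of the second bullet in fact fails (there $a\log\log x + b - x = 0$), an edge case the paper's own proof also glosses over (it asserts $e - b < 0$, which needs $b > e$), so your brief dismissal of that case is no weaker than the original.
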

The second inequality also implies $x \geq a\log\log(x)+b\Rightarrow x\geq b+a\log\log b$.
\begin{proof}
    We prove the first claim. Easy to see $\frac{d(x-a\log\log x - b)}{dx} = 1-\frac{a}{x\log x} = \frac{x\log x-a}{x\log x}$. Take $x_0 = b+2a\log\log b$, then $x_0\log x_0 > x_0\log (e^2+2a\log\log e^2) > x_0\log e^2=2x_0 > a$. Thus $x- a\log\log(x)-b$ increases in the interval $(x_0, +\infty)$. On the other hand, easy to check
    \begin{align*}
        &x_0- a\log\log x_0-b\\
        =&b+2a\log\log b - a\log\log(b+2a\log\log b) - b\\
        =&2a\log\log b - a\log\log(b+2a\log\log b)\\
        =&a\left(2\log\log b - \log\log(b+2a\log\log b)\right)\\
        =&a\left(\log(\log b)^2 - \log\log(b+2a\log\log b)\right)\\
        =&a\log \frac{(\log b)^2}{\log(b+2a\log\log b)}\\
        \geq &a\log \frac{(\log b)^2}{\log(b+2b\log\log b)}\\
        = &a\log \frac{(\log b)^2}{\log b + \log(1+2\log\log b)},
    \end{align*}
    Notice that
    \begin{align*}
        &(\log b)^2 - \log b - \log(1+2\log\log b)\\
        =&\log b(\log b-1)-\log(1+2\log\log b)\\
        \stackrel{b\geq e^2}{\geq} & \log b-\log(1+2\log\log b)\\
        \geq & \log b - 2\log \log b\\
        \stackrel{x>2\log x,\forall x>0}{>} & 0,
    \end{align*}
    we can conclude $x_0- a\log\log x_0-b > 0$ holds for all $x\geq b+2a\log\log b$.

    Then we turn to prove the second claim. As $\frac{d(x-a\log\log x - b)}{dx} = 1-\frac{a}{x\log x} = \frac{x\log x-a}{x\log x}$, we know there is at most 1 zero point of $\frac{x\log x-a}{x\log x}$ in the interval $(e, b+a\log\log b)$. Thus,
    \begin{align*}
         \max_{e\leq x \leq b+a\log\log b}x-a\log\log x - b = \max\{x-a\log\log x - b|_{x=e}, x-a\log\log x - b|_{x=b+a\log\log b}\}.
    \end{align*}
    Easy to see
    \begin{align*}
         e-a\log\log e - b = e-b<0,
    \end{align*}
    and
    \begin{align*}
         &b+a\log\log b - a\log\log(b+a\log\log b) - b\\
         =&a\log\log b - a\log\log(b+a\log\log b)\\
         <&a\log\log b - a\log\log(b)\\
         = & 0.
    \end{align*}
    That means $\max\limits_{e\leq x \leq b+a\log\log b}x-a\log\log x - b < 0$. The second conclusion is done.
\end{proof}

\begin{lemma}
    \label{lemma:inequality-application-t-loglogt}
    For any $\Delta \in (0, 1], K\geq 2, \delta\in (0, \frac{1}{2}], C\geq 1$, we can conclude
    \begin{align*}
        & t > \frac{28C^2\log\frac{2K}{\delta}}{\Delta^2} + \frac{16 C^2\log\left(\log\left(\frac{24C^2}{\Delta^2}\right) \right)}{\Delta^2}\\
        \Rightarrow & 2t > \frac{8C^2\log\frac{2K}{\delta}+8C^2\log \log_2 e+16C^2\log\log 2t}{\Delta^2}\\
        \Leftrightarrow & C\sqrt{\frac{4\log\frac{2K (\log_2 2t)^2}{\delta}}{t}} < \Delta
    \end{align*}
\end{lemma}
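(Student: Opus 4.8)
The plan is to read the displayed chain from the bottom up: the genuinely useful conclusion is the third line, $C\sqrt{4\log(2K(\log_2 2t)^2/\delta)/t} < \Delta$, which is exactly the bound $C\cdot U(t,\delta/K) < \Delta$ once one applies the crude estimates $2^{\lceil\log_2 t\rceil^+}\leq 2t$ and $\lceil\log_2 t\rceil^+\leq \log_2 2t$ used to pass from \eqref{eqn:definition-U_t_delta} to this form. So first I would reduce this third line to the middle one. Since both sides are positive, I can square the third line and multiply through by $t$, giving $4C^2\log\frac{2K(\log_2 2t)^2}{\delta} < \Delta^2 t$. Expanding, $\log\frac{2K(\log_2 2t)^2}{\delta} = \log\frac{2K}{\delta} + 2\log(\log_2 2t)$, and using $\log_2 2t = (\log_2 e)\log 2t$ to split $\log(\log_2 2t) = \log\log 2t + \log\log_2 e$. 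This converts the squared inequality into the ``polynomial plus iterated log'' form appearing on the second line, so the equivalence of the second and third lines is pure algebra (up to harmless constant bookkeeping on the $\log\log_2 e$ term).

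The core of the argument is the first implication: the lower bound on $t$ forces the second line. I would substitute $x = 2t$ and read the second line as $x > a\log\log x + b$ with $a = 16C^2/\Delta^2$ and $b = \frac{8C^2}{\Delta^2}(\log\frac{2K}{\delta} + \log\log_2 e)$ collecting the $x$-free terms. This is precisely the hypothesis format of Lemma \ref{lemma:inequality-t-loglogt}: if $b\geq e^2$ and $x \geq b + 2a\log\log b$, then $x \geq a\log\log x + b$. The condition $b\geq e^2$ is immediate from the parameter ranges, since $\Delta\leq 1$, $C\geq 1$, $K\geq 2$, $\delta \leq 1/2$ give $\log\frac{2K}{\delta}\geq \log 8 > 2$, whence $b \geq 8\log 8 > e^2$.

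It then remains to verify $x = 2t \geq b + 2a\log\log b$, and this is where the work concentrates. From the hypothesis, $2t$ exceeds $\frac{56C^2}{\Delta^2}\log\frac{2K}{\delta} + \frac{32C^2}{\Delta^2}\log\log\frac{24C^2}{\Delta^2}$, whereas $b + 2a\log\log b = \frac{8C^2}{\Delta^2}(\log\frac{2K}{\delta}+\log\log_2 e) + \frac{32C^2}{\Delta^2}\log\log b$. After dividing by $C^2/\Delta^2$ it suffices to show $48\log\frac{2K}{\delta} + 32\log\log\frac{24C^2}{\Delta^2} \geq 8\log\log_2 e + 32\log\log b$. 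The \emph{hard part} is controlling $\log\log b$: since $\log\log_2 e < \log\frac{2K}{\delta}$, one has $b \leq \frac{16C^2}{\Delta^2}\log\frac{2K}{\delta}$, so $\log\log b$ is dominated by the sum of a $\log\log\frac{24C^2}{\Delta^2}$-type term and a triple-iterated log of $\frac{2K}{\delta}$. The surplus coefficient $48$ on $\log\frac{2K}{\delta}$ (versus the $8$ strictly needed) together with the matching $32\log\log\frac{24C^2}{\Delta^2}$ term are exactly the slack built into the constants $28$, $16$, and $24$, so that this nested-logarithm comparison goes through. This monotone bookkeeping with iterated logs is the only delicate step; everything else is substitution plus the already-established Lemma \ref{lemma:inequality-t-loglogt}.
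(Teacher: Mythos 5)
Your proposal is correct and follows essentially the same route as the paper's proof: both reduce the first implication to Lemma \ref{lemma:inequality-t-loglogt} (first bullet) with $x=2t$ and $a=16C^2/\Delta^2$, and both obtain the second--third line equivalence by squaring and the identity $\log\log_2 2t=\log\log 2t+\log\log_2 e$ (where you rightly note the stated $8C^2\log\log_2 e$ versus the exact $16C^2\log\log_2 e$ is harmless slack). The only difference is bookkeeping: the paper absorbs the slack into $b=\tfrac{24C^2\log\frac{2K}{\delta}}{\Delta^2}$ via the chain $\log x+\log y\geq\log(x+y)$ and $z\geq\log\log z$, while you take $b$ to be the exact $t$-free constant and verify $2t\geq b+2a\log\log b$ directly by the nested-log comparison you sketch, which indeed goes through.
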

\begin{proof}
    By simple calculation, we can derive
    \begin{align*}
        & t > \frac{28C^2\log\frac{2K}{\delta}}{\Delta^2} + \frac{16 C^2\log\left(\log\left(\frac{24C^2}{\Delta^2}\right) \right)}{\Delta^2}\\
        \Leftrightarrow & 2t > \frac{56C^2\log\frac{2K}{\delta}}{\Delta^2} + \frac{32 C^2\log\left(\log\left(\frac{24C^2}{\Delta^2}\right) \right)}{\Delta^2}\\
        \Leftrightarrow & 2t > \frac{24C^2\log\frac{2K}{\delta}}{\Delta^2} + \frac{32 C^2\log\left(\log\left(\frac{24C^2}{\Delta^2}\right) \right)+32C^2\log(\frac{2K}{\delta})}{\Delta^2}\\
        \stackrel{\log (x+y) \leq \log x + \log y, \forall x,y\geq 2}{\Rightarrow} & 2t > \frac{24C^2\log\frac{2K}{\delta}}{\Delta^2} + \frac{32 C^2\log\left(\log\left(\frac{24C^2}{\Delta^2}\right)+\frac{2K}{\delta} \right)}{\Delta^2}\\
        \Rightarrow & 2t > \frac{24C^2\log\frac{2K}{\delta}}{\Delta^2} + \frac{32 C^2\log\left(\log\left(\frac{24C^2}{\Delta^2}\right)+\log\left(\log\frac{2K}{\delta}\right) \right)}{\Delta^2}\\
        \Leftrightarrow & 2t > \frac{24C^2\log\frac{2K}{\delta}}{\Delta^2} + \frac{32 C^2\log\log\left(\frac{24C^2\log\frac{2K}{\delta}}{\Delta^2}\right)}{\Delta^2}\\
        \stackrel{\text{Lemma } \ref{lemma:inequality-t-loglogt}, \text{ as } 24C^2\log\frac{2K}{\delta}> e^2}{\Rightarrow} & 2t > \frac{24C^2\log\frac{2K}{\delta}}{\Delta^2} + \frac{16 C^2\log\log\left(2t\right)}{\Delta^2}\\
        \Rightarrow & 2t > \frac{8C^2\log\frac{2K}{\delta} + 16C^2\log \log_2 e + 16C^2\log\log (2t) }{\Delta^2}\\
        \Leftrightarrow & t > \frac{4C^2\log\frac{2K}{\delta} + 8C^2\log \frac{\log (2t)}{\log 2}}{\Delta^2}\\
        \Leftrightarrow & t > \frac{4C^2\log\frac{2K}{\delta} + 8C^2\log (\log_2 2t)}{\Delta^2}\\
        \Leftrightarrow & C\sqrt{\frac{4\log\frac{2K (\log_2 2t)^2}{\delta}}{t}} < \Delta.
    \end{align*}
\end{proof}

\subsection{Probability Bound of Good Event}
\label{sec:prob-bound-good-event}
\begin{lemma}[Adapt Lemma 3 in \cite{jamieson2014lil}]
    \label{lemma:Adapted-lil-UCB}
    Denote $\{X_i\}_{i=1}^{+\infty}$ as i.i.d $\sigma^2$-subgaussian random variable with true mean reward $\mu=0$. For any $\delta \in (0, 1)$, we have
    \begin{align*}
        \Pr\left(\exists t, |\sum_{s=1}^t X_s| \geq \sqrt{2\sigma^2 2^{\lceil\log_2 t\rceil^+ }\log\frac{2(\log_2 2^{\lceil\log_2 t\rceil^+})^2}{\delta}}\right) < \frac{\pi^2}{6}\delta.
    \end{align*}
    Or equivalently, 
    \begin{align*}
        \Pr\left(\exists t, |\sum_{s=1}^t X_s| \geq \sqrt{2\sigma^2 2^{\lceil\log_2 t\rceil^+}\log\frac{2(\lceil\log_2 t\rceil^+)^2}{\delta}}\right) < \frac{\pi^2}{6}\delta.
    \end{align*}
\end{lemma}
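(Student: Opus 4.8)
The plan is to combine a dyadic ``peeling'' of the time axis with a maximal inequality for sub-Gaussian martingales, so that the per-block failure probabilities form a convergent series summing to $\frac{\pi^2}{6}\delta$. First I would note that the two displayed forms are literally identical, since $\log_2 2^{\lceil\log_2 t\rceil^+} = \lceil\log_2 t\rceil^+$; hence it suffices to work with the second expression. The essential idea is that a naive union bound over all $t$ would diverge, so instead I group the times into geometric blocks on which the confidence radius is constant.

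Concretely, I would partition the positive integers into the blocks $B_j = \{t \in \mathbb{N} : \lceil\log_2 t\rceil^+ = j\}$ for $j \geq 1$. Every $t \in B_j$ satisfies $t \leq 2^j$ and $2^{\lceil\log_2 t\rceil^+} = 2^j$, so the radius is \emph{constant} across the block, equal to $a_j := \sqrt{2\sigma^2 2^j \log\frac{2j^2}{\delta}}$. Writing $M_t = \sum_{s=1}^t X_s$, the event in question decomposes as $\bigcup_{j\geq 1}\{\exists\, t \in B_j : |M_t| \geq a_j\}$, and since $t \leq 2^j$ on $B_j$, each block event is contained in $\{\max_{1\leq t\leq 2^j}|M_t| \geq a_j\}$.

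The key tool is the maximal inequality. For $\lambda > 0$ the process $\exp(\lambda M_t)$ is a nonnegative submartingale (independence and mean-zero increments give $\mathbb{E}[e^{\lambda X_{t+1}}] \geq e^{\lambda\,\mathbb{E}X_{t+1}} = 1$ by Jensen), so Doob's maximal inequality together with the sub-Gaussian bound $\mathbb{E}[e^{\lambda M_n}] \leq e^{n\lambda^2\sigma^2/2}$ and optimization over $\lambda$ yields $\Pr(\max_{1\leq t\leq n} M_t \geq a) \leq \exp(-a^2/(2n\sigma^2))$; applying the same argument to $-X_s$ and union-bounding produces the two-sided bound with an extra factor $2$. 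Plugging $n = 2^j$ and $a = a_j$ makes the exponent collapse exactly: $2\exp(-a_j^2/(2\sigma^2 2^j)) = 2\exp(-\log\frac{2j^2}{\delta}) = \frac{\delta}{j^2}$, where the factor $2$ placed inside the logarithm is precisely what cancels the $2$ coming from the two-sided maximal inequality. A final union bound over all blocks gives $\sum_{j\geq 1}\frac{\delta}{j^2} = \frac{\pi^2}{6}\delta$, which is the claim.

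The hard part is not conceptual but bookkeeping: the whole argument hinges on choosing the block-wise constant radius $a_j$ and lining up the numerical constants so that the Chernoff exponent reduces to exactly $\log(2j^2/\delta)$, leaving a summable $\frac{\delta}{j^2}$ tail. The one step demanding genuine care is the two-sided maximal inequality — verifying that $\exp(\lambda M_t)$ is a submartingale for independent mean-zero sub-Gaussian increments and that the factor $2$ is handled correctly against the $2$ inside the logarithm. Everything else is a routine peeling computation, and the bound is tight enough (no wasted constants) that the resulting $\frac{\pi^2}{6}\delta$ is clean.
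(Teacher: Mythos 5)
Your proposal is correct and follows essentially the same route as the paper's proof: the same dyadic blocks $u_j = 2^j$ with block-constant radius, the same submartingale construction $\exp(\lambda S_t)$ justified via Jensen, Doob's maximal inequality with the optimized $\lambda = a_j/(2^j\sigma^2)$ yielding exactly $\delta/j^2$ per block (the paper bounds each one-sided event by $\delta/(2j^2)$, which is the same arithmetic as your factor-$2$ cancellation), and the final union over $j$ giving $\frac{\pi^2}{6}\delta$. No gaps; your observation that the two displayed forms coincide because $\log_2 2^{\lceil\log_2 t\rceil^+} = \lceil\log_2 t\rceil^+$ matches the paper's treatment as well.
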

Lemma \ref{lemma:Adapted-lil-UCB} is fundamentally the same as the lemma 3 in \cite{jamieson2014lil}. The only different part is the constant outside the square root. But for simplicity and completeness, we rewrite part of proof and leave it here.
\begin{proof}[Proof of Lemma \ref{lemma:Adapted-lil-UCB}]
    Define $u_k=2^k$, $k\geq 1$. Define $x=\sqrt{2\sigma^2 u_k\log\frac{2(\log_2 u_k)^2}{\delta}}$, $S_t=\sum_{i=1}^t X_i$ and the event 
    \begin{align*}
        E_k = \left\{\max_{1\leq t\leq u_k} S_t > \sqrt{2\sigma^2 u_k\log\frac{2(\log_2 u_k)^2}{\delta}}\right\} \cup \left\{\min_{1\leq t\leq u_k} S_t < -\sqrt{2\sigma^2 u_k\log\frac{2(\log_2 u_k)^2}{\delta}}\right\},
    \end{align*}
    
    For $\lambda> 0 $,  notice that
    \begin{align*}
        & \mathbb{E}\left[\exp(\lambda(\sum_{s=1}^{t}X_s))|X_1,\cdots,X_{t-1}\right]\\
        =& \exp(\lambda(\sum_{s=1}^{t-1}X_s))\mathbb{E}\exp(\lambda X_t)\\
        \geq & \exp(\lambda(\sum_{s=1}^{t-1}X_s))\exp(\mathbb{E}\lambda X_t)\\
        = & \exp(\lambda(\sum_{s=1}^{t-1}X_s)).
    \end{align*}
    Take $\lambda=\frac{x}{u_k\sigma^2}$, we can conclude $\{\exp(\lambda S_t)\}$ is a submartingale. Then,
    \begin{align*}
        & \Pr\left(\max_{1\leq t\leq u_k} S_t \geq x\right)\\
        = & \Pr\left(\max_{1\leq t\leq u_k} \exp(\lambda S_t) > \exp\left(\lambda x\right)\right)\\
        \stackrel{*}{\leq}& \frac{\mathbb{E}\exp(\lambda S_{u_k}) }{\exp\left(\lambda x\right)}\\
        \leq & \frac{\exp(\frac{u_k\lambda^2\sigma^2}{2}) }{\exp\left(\lambda x\right)}\\
        \stackrel{\lambda=\frac{x}{u_k\sigma^2}}{=} & \exp(-\frac{x^2}{2u_k \sigma^2}).
    \end{align*}
    Step * is by the maximal inequality for the submartingale. Take $x=\sqrt{2\sigma^2 u_k\log\frac{2(\log_2 u_k)^2}{\delta}}$, we have
    \begin{align*}
        \Pr\left(\max_{1\leq t\leq u_k} S_t \geq \sqrt{2\sigma^2 u_k\log\frac{2(\log_2 u_k)^2}{\delta}}\right)\leq \exp\left(-\log\frac{2(\log_2 u_k)^2}{\delta}\right)=\frac{\delta}{2(\log_2 u_k)^2} = \frac{\delta}{2k^2}
    \end{align*}
    For the part of $\Pr\left(\min_{1\leq t\leq u_k} S_t < -x\right)$, the proof is similar. We can conclude $\Pr(E_k) \leq \frac{\delta}{k^2}$ and further $\Pr(\cup_{k=1}^{+\infty }E_k) \leq \frac{\pi^2 \delta}{6}$.

    Thus, 
    \begin{align*}
        & \Pr\left(\exists t, |\sum_{s=1}^t X_s| \geq \sqrt{2\sigma^2 \max\{2^{\lceil\log_2t\rceil},2\}\log\frac{2(\log_2 \max\{2^{\lceil\log_2t\rceil},2\})^2}{\delta}}\right)\\
        \leq & \Pr\left(\exists k, \max_{1\leq t'\leq u_{k}}|\sum_{s=1}^{t'} X_s| \geq \sqrt{2\sigma^2 u_{k}\log\frac{2(\log_2 u_{k})^2}{\delta}}\right)\\
        \leq & \Pr(\cup_{k=1}^{+\infty }E_k) \leq \frac{\pi^2 \delta}{6}.
    \end{align*}
\end{proof}
Some Comments are as follows.
\begin{itemize}
    \item We can similarly prove $\Pr\left(\exists t, |\sum_{s=1}^t X_s| \geq \sqrt{2\sigma^2 2^{\lceil\log_2 t\rceil^+}\log\frac{2\pi^2(\log_2 2^{\lceil\log_2t\rceil^+})^2}{6\delta}}\right) < \delta$ holds for all $\delta\in(0, 1)$.
    \item Since $\lceil\log_2 t\rceil^+\leq 1 + \log_2 t$, we have
    \begin{align*}
        \frac{\pi^2 \delta}{6}\geq & \Pr\left(\exists t, |\sum_{s=1}^t X_s| \geq \sqrt{2\sigma^2 2^{\max\{\lceil\log_2t\rceil, 1 \}}\log\frac{2(\log_2 2^{\lceil\log_2t\rceil})^2}{\delta}}\right)\\
        \geq & \Pr\left(\exists t, |\sum_{s=1}^t X_s| \geq \sqrt{4\sigma^2 t\log\frac{2(\log_2 2t)^2}{\delta}}\right)
    \end{align*}
\end{itemize}

\section{Numerical Experiment}

\subsection{Settings of Numerical Experiments}
\label{sec:Numeric-Settings}
The parameter setting of SEE is $\delta_k=\frac{1}{3^{k}},\beta_k=2^{k} / 4,\alpha_k=5^k$, $C=1.01$. 
In all numerical experiments, all the algorithms in section \ref{sec:Numeric-Experiments} achieve 100\% accuracy in identifying a correct answer of either a qualified arm in a positive instance or outputting $\textsf{None}$ in a negative instance.

We took arm number $K=10,20,30,40,50,100, 150, 200$ and the tolerance level $\delta=0.001,0.0001$. Fix $\mu_0=0.5, \Delta=0.15$, we set up 6 instances, by considering different number of arms whose mean rewards are above $\mu_0$. For an arm number $K$, we define (1) AllWorse, whose mean reward vector is $\mu_1=\mu_2=\cdots=\mu_K=0.25$; (2) Unique Qualified, whose mean reward vector is $\mu_1=\mu_0+\Delta, \mu_2=\mu_3=\cdots=\mu_K=\mu_0$; (3) One Quarter Qualified, whose mean reward vector is $\mu_1=\mu_2=\cdots=\mu_{\lfloor K/4 \rfloor}=\mu_0+\Delta, \mu_{\lfloor K/4 \rfloor+1}=\mu_{\lfloor K/4 \rfloor+2}=\cdots=\mu_K=\mu_0$; (4) Half Good, whose mean reward vector is $\mu_1=\mu_2=\cdots=\mu_{\lfloor K/2 \rfloor}=\mu_0+\Delta, \mu_{\lfloor K/2 \rfloor+1}=\mu_{\lfloor K/2 \rfloor+2}=\cdots=\mu_K=\mu_0$;  (5) All Good, whose mean reward vector is $\mu_1=\mu_2=\cdots=\mu_K=\mu_0+\Delta$ (6) Linear, whose mean reward vector is $\mu_i=\mu_0-\Delta+\frac{2(i-1)\Delta}{K-1}, 1\leq i\leq K$. For instance ``AllWorse'', the set of correct answer  is $\{\textsf{None}\}$, while for the other instances which are positive, the answer set $i^*(\nu)$ contains at least one arm. In each experiment setting, we run 1000 independent trials and compute the empirical average of the stopping times. 

To avoid an infinite loop in a trial, we set a forced stopping threshold $10^8$ in each instance. All the algorithms, including SEE, HDoC, lilHDoC, LUCB\_G, adapted-TaS, adapted-MS, stop in all trials before the total pulling times reach $10^8$.

HDoC and LUCB\_G's performance on ``All Worse'' instances are very similar, leading to two nearly overlapping curves. The radius of  each error bar is 3 times the standard error of the empirical stopping times across 1000 repeated trials. The error bars do not appear to be visible, as they are much smaller than the empirical stopping times.

When implementing the algorithm lilHDoC, its originally proposed length of warm-up stage is larger than the total pulling times of some of the benchmark algorithms. To allow comparisons, in our numerical experiment, we only uniformly pull all the arms 200 times for the algorithm lilHDoC.



\subsection{Supplement Figure}
\label{sec:supplement-figure}

Figure \ref{fig:numeric-result-proportion} compares the trend of empirical stopping times, when the proportion of qualified arms increases in positive instances. For algorithms adapted-TaS and adapted-MS, the empirical stopping times increase as the proportion increases, while for our proposed SEE, the trend is inverse. 
\begin{figure}
    \centering
    \begin{subfigure}
      \centering
      \includegraphics[width=0.8\textwidth]{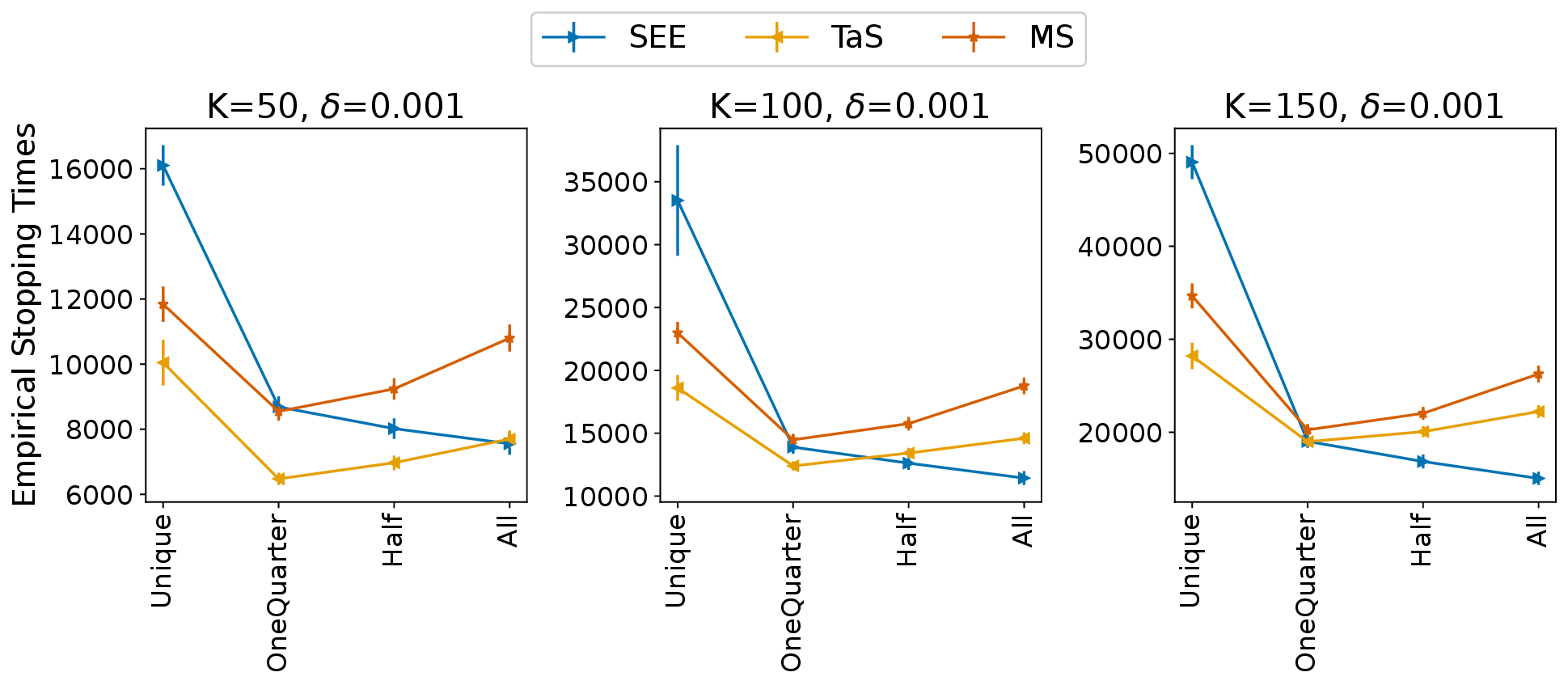}
    \end{subfigure}%
    \begin{subfigure}
      \centering
      \includegraphics[width=0.8\textwidth]{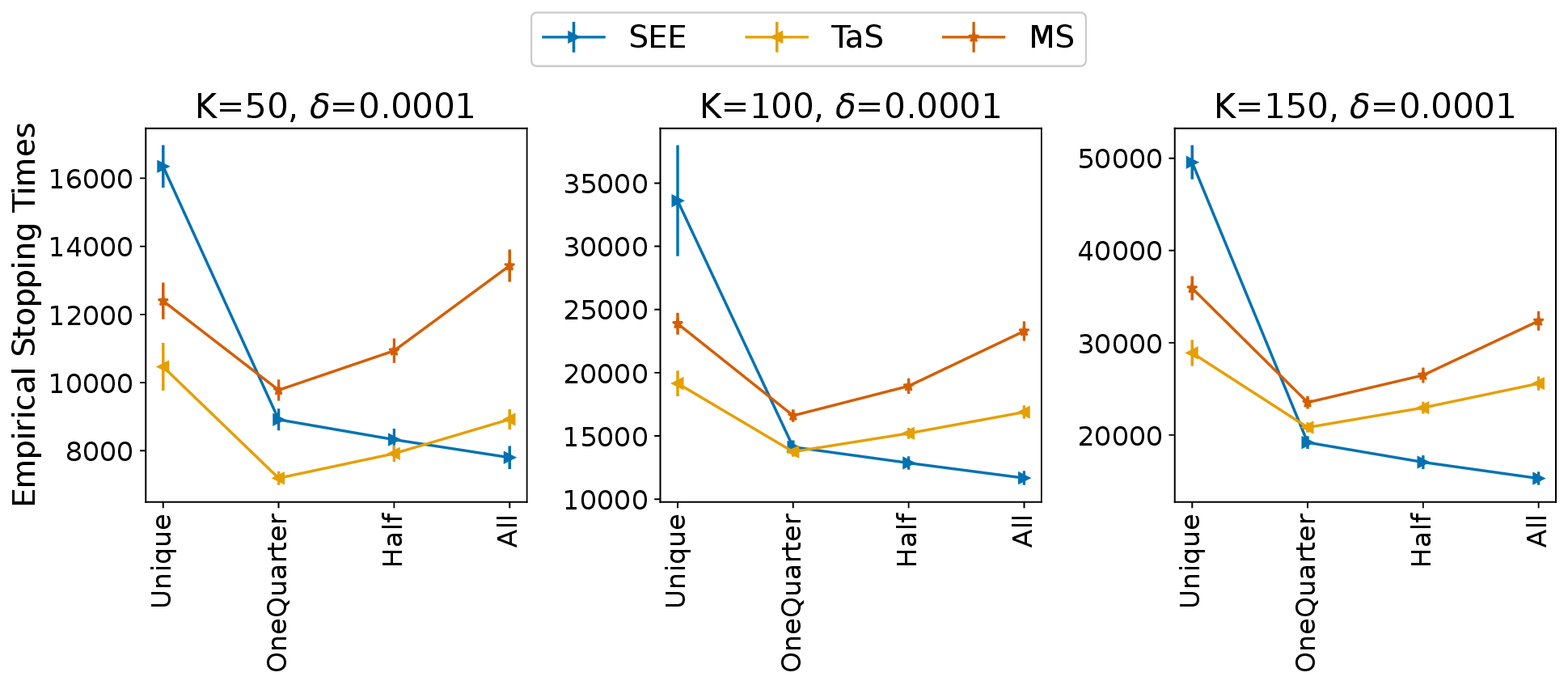}
    \end{subfigure}
    \caption{Numerical Experiments on SEE and Benchmarks}
    \label{fig:numeric-result-proportion}
\end{figure}

\subsection{Correctness of APGAI algorithm}
\label{sec:Correctness-of-APGAI-algorithm}


\begin{table*}[h]
\caption{Number of Failure, APGAI, $\delta=0.001$}
\label{table:Count-Failure-APGAI-delta-0p001}
\vskip 0.15in
\begin{center}
\begin{small}
\begin{tabular}{cccccc}
\toprule
Instance Type $\setminus$ K & $10$ & $20$ & $30$ & $40$ & $50$\\
\midrule
AllWorse & 0 & 0 & 0 & 0 & 0\\
Linear & 0 & 0 & 4 & 2 & 1 \\
Unique & 373 & 449 & 462 & 520 & 549 \\
OneQuarter & 152 & 32 & 22 & 7 & 6 \\
\bottomrule
\end{tabular}
\end{small}
\end{center}
\vskip -0.1in
\end{table*}

\begin{table*}[h]
\caption{Emprical Stopping times $\pm$ 3*Std, APGAI, $\delta=0.001$}
\label{table:Conf-interval-APGAI-delta-0p001}
\vskip 0.15in
\begin{center}
\begin{small}
\begin{tabular}{cccccc}
\toprule
Instance Type $\setminus$ K & $10$ & $20$ & $30$ & $40$ & $50$\\
\midrule
AllWorse &  $ 5099 \pm  54$ &  $ 10703 \pm  78$ &  $ 16524 \pm  96$ &  $ 22450 \pm  114$ &  $ 28501 \pm  129$ \\
Linear &  $ 11800 \pm  2757$ &  $ 12814 \pm  5496$ &  $ 18145 \pm  11685$ &  $ 11979 \pm  8601$ &  $ 11819 \pm  9723$ \\
Unique &  $ 201716 \pm  22404$ &  $ 485772 \pm  45444$ &  $ 749871 \pm  68136$ &  $ 1096819 \pm  91215$ &  $ 1454700 \pm  112848$ \\
OneQuarter &  $ 90246 \pm  17187$ &  $ 46501 \pm  17532$ &  $ 46890 \pm  22569$ &  $ 25511 \pm  17547$ &  $ 23000 \pm  19224$ \\
\bottomrule
\end{tabular}
\end{small}
\end{center}
\vskip -0.1in
\end{table*}



In compared to other benchmarks, APGAI's numerical performance is significantly different, in the sense that the stopping time it is either very small or very large, which makes its curve of stopping times either much below or much higher than all the others. And sometimes it might get stuck in a non-stopping loop.

We tune some of the numeric experiment setting to see the non-stopping phenomenon more clearly. For APGAI, we set the forced stopping threshold as $50000*K$, where $K$ is the arm number. Once the total pulling times is no less than the forced stopping threshold, we mark this experiment as a failure, and take the forced stopping threshold as the total pulling times of APGAI.
Table \ref{table:Count-Failure-APGAI-delta-0p001} records the number of failure for experiments $K=10, 20, 30, 40, 50$ and instance type ``All Worse'', ``Unique Qualified'', ``One Quarter'', ``Linear''. The tolerance level is $\delta=0.001$, with repeating times 1000. Except the negative instance ``All Worse'', failure frequently occurs in some of positive instances. In the ``Unique Qualified'' group, at least 35\% of experiments end up with ``failure''. While in other groups, failure also occurs. In group ``One Quarter”, the number of failures decreases as arm number $K$ increases, suggesting APGAI's good performance relies more on the number of qualified arms, instead of the fraction of qualified arms.

The huge number of failure experiment also affects the estimation of the $\mathbb{E}_{\nu,\text{APGAI}}\tau$. Table \ref{table:Conf-interval-APGAI-delta-0p001} records the confidence interval for estimating $\mathbb{E}_{\nu,\text{APGAI}}\tau$, ignoring the decimal. In group ``All Worse'', there aren't any failure, and the empirical mean outperforms all the other benchmarks. In group ``Unique Qualified'', failure frequently occurs. Both cases result in a relatively small standard error in the numeric experiment. While for the group ``Linear'' and ``One Quarter'', the less frequently occurrence of failure greatly increase the standard error of the empirical mean stopping times. In some cases, 3*Std is nearly the same as the empirical mean value. In addition, because of the existence of the forced stopping threshold, the empirical mean we recorded is only a lower bound estimation for the true $\mathbb{E}_{\nu,\text{APGAI}}\tau$. Due to large standard error, we cannot conclude whether the current repeating times is sufficient to approximate $\mathbb{E}_{\nu,\text{APGAI}}\tau$.

Given the time-consuming simulation for APGAI and its significantly different pattern, we do not apply APGAI to instances with larger arm number. And we believe it is unsuitable to compare APGAI with other benchmarks, given the difficulties of estimating $\mathbb{E}_{\nu,\text{APGAI}}\tau$.

\end{document}